\let\Ginclude@graphics\@org@Ginclude@graphics
\title[Information Directed Sampling for Linear Partial Monitoring]{Information Directed Sampling for Linear Partial Monitoring}
\let\todo\undefined  
\newcommand{\todoj}[2][]{\todo[size=\scriptsize,color=green!20!white,#1]{Johannes: #2}}
\newcommand{\todojg}[2][]{\todo[size=\scriptsize,color=gray!20!white,#1]{Note: #2}}
\newtcbox{\entoure}[1][red]{on line,
	arc=3pt,colback=#1!10!white,colframe=#1!50!black,
	before upper={\rule[-3pt]{0pt}{10pt}},boxrule=1pt,
	boxsep=0pt,left=2pt,right=2pt,top=1pt,bottom=-1pt}
\newcommand{\ip}[1]{\langle #1 \rangle}
\newcommand{\diam}{\operatorname{diam}}
\newcommand{\ucb}{\text{ucb}}
\newtheorem{assumption}{Assumption}
\renewcommand{\a}{a}  
\begin{document}

\maketitle

\begin{abstract}%
Partial monitoring is a rich framework for sequential decision making under uncertainty that generalizes many well known bandit models, including linear, combinatorial  and dueling bandits. We introduce {\em information directed sampling} (IDS) for stochastic partial monitoring with a linear reward and observation structure. IDS achieves adaptive worst-case regret rates that depend on precise observability conditions of the game. Moreover, we prove lower bounds that classify the minimax regret of all finite games into four possible regimes. IDS achieves the optimal rate in all cases  up to logarithmic factors, without tuning any hyper-parameters. We further extend our results to the contextual and the kernelized setting, which significantly increases the range of possible applications.
\end{abstract} \todojg[color=red!50!white]{Disable comments for submission!}

\begin{keywords}%
  Information Directed Sampling, Linear Partial Monitoring, Bandits
\end{keywords}

\section{Introduction}
\emph{Partial monitoring} is an expressive framework for sequential decision making in which the learner does not directly observe the reward \citep{Rus99}. Instead, the learner obtains observations from pre-specified observation distributions that are associated to the actions and may or may not provide direct information about the reward.
In this work, we consider a stochastic version of the problem with a linear reward and observation model, which is sometimes referred to as
combinatorial partial monitoring \citep{Lin2014combinatorial,Chaudhuri2016phased}. 
Among other settings as described in Section~\ref{sec:applications}, the linear partial monitoring model strictly generalizes linear bandits \citep{abe1999associative,auer2002confidencebounds}, combinatorial bandits \citep{cesa2012combinatorial} both with bandit and semi-bandit feedback, and some variants of dueling bandits \citep{yue2009interactively}.

\paragraph{Linear Partial Monitoring} Let $\xX \subset \RR^d$ be a compact set of actions and $\theta \in \RR^d$ be an unknown parameter. For each action $x\in \xX$, let $A_x \in \RR^{d \times m}$ be a known \emph{linear observation operator}. 
The learner and environment interact over $n$ rounds. In each round $t$, the learner chooses an action $x_t \in \xX$ and receives an $m$-dimensional observation $\a_t = A_{x_t}^\T\theta + \epsilon_t$ where $(\epsilon_t)_{t=1}^n$ is a sequence of independent $\rho$-subgaussian noise vectors such that $\epsilon_t \in \RR^{m}$. The reward for the learner is $\ip{x_t,\theta}$ and is \emph{not observed}.
As usual, the aim is to minimize cumulative regret
\begin{align*}
R_n = \sum_{t=1}^n \ip{x^* - x_t,\theta}\,,
\end{align*}
where $x^* = \argmax_{x \in \xX} \ip{x,\theta}$ is the optimal action, chosen arbitrarily whenever the choice is not unique. A slightly more general formulation of the setup is in Appendix \ref{app:general-setup}, which we will use for some applications. \emph{Bandit games} are a special case where $A_x = x$. We discuss further applications in detail in Section \ref{sec:applications}. 
Readers seeking further motivation and intuition for the setup will benefit from skipping ahead to this section.

A linear partial monitoring game is called {\em finite} if it has finitely many actions.
An action $x \in \xX$ is called {\em Pareto optimal} if 
it is an extreme point of the convex hull of $\xX$.
The set of actions that are optimal for $\theta$ is
\begin{align*}
\pP(\theta) = \{x  \in \xX : \ip{x, \theta} = \max_{y \in \xX} \ip{y, \theta}\}\,,
\end{align*}
which is defined on sets $\cC \subseteq \RR^d$ by $\pP(\cC) = \cup_{\theta \in \cC} \pP(\theta)$. A game is called {\em globally observable} if
\begin{align}
x - y \in \laspan(A_z : z \in \xX) \text{ for all } x, y \in \xX\,. \label{eq:global}
\end{align}
A game is called {\em locally observable} if for every convex set $\cC \subset \RR^d$,
\begin{align}
x - y \in \laspan(A_z : z \in \pP(\cC)) \text{ for all } x, y \in \pP(\cC)\,.  \label{eq:local}
\end{align}
\looseness=-1 Intuitively, in globally observable games, the learner has access to actions from which reward differences between different actions can be estimated. In locally observable games, the reward differences can be estimated
in a local sense that greatly eases learning. Any locally observable game is also globally observable.
Although it is not important for this work, connoisseurs of partial monitoring will be pleased to know these definitions coincide with the usual definitions, as discussed in
\cref{app:class}.
At least for finite games, we will see that global observability leads to a $\tilde O(n^{2/3})$ regret, while local observability leads to $\tilde O(n^{1/2})$ regret. The relation to finite partial monitoring is subtle, however, and the classification results do not imply each other as we explain in Appendix \ref{app:finite-pm}.

\paragraph{Information Directed Sampling} \looseness=-1 We propose a new algorithm  for stochastic linear partial monitoring based on the {\em information directed sampling (IDS)} principle. This strategy uses the observations $\a_t = A_{x_t}^\T \theta + \epsilon_t$ to construct conservative estimates $\Delta_t(x) \geq \ip{x^* - x, \theta}$ of the true gaps and an associated {\em information gain} $I_t(x)$, detailed below. The information gain quantifies the uncertainty reduction in the parameter estimate when the learner chooses $x \in \xX$ and observes $A_x^\T \theta + \epsilon$. 
IDS is the policy that samples action $x_t$ from a distribution $\mu_t$ that minimizes the \emph{information ratio}, 
\begin{align*}
\mu_t  = \argmin_{\mu} \frac{\EE_\mu[\Delta_t(x)]^2}{\EE_\mu[I_t(x)]} \text{ .}
\end{align*}

\paragraph{Our Contributions}
\looseness -1  Our main contribution is a new algorithm for linear stochastic partial monitoring. We show that its regret dependence on the horizon is near-optimal in
all finite-action games without the need to tune any hyper-parameters. Along the way, we prove a classification theorem showing that, up to logarithmic factors, the minimax regret of all finite-action games is either $0$, $\tilde \Theta(n^{1/2})$, $\tilde \Theta(n^{2/3})$ or $\Omega(n)$. 
This result mirrors that for the standard setting \citep{lattimore2019cleaning}, but neither result implies the other. Our upper bounds are general and apply beyond the finite case. For infinite actions, however, the
classification theorem is no longer so straightforward: we show that the minimax regret depends on finer geometric properties of the action set and observation structure, such as curvature. We further consider a novel {\em contextual partial monitoring setting}, where IDS exhibits an elegant planning behavior to exploit the  distribution over contexts. Lastly, our algorithm and analysis are easily {\em kernelized}, which enables utilizing practically important smoothness priors, with applications such as Bayesian optimization with gradient observations.

\paragraph{Related work}
\looseness -1 Finite partial monitoring dates back to \cite{Rus99}. The generality of partial monitoring yields a rich structure of games \citep{bartok2014partial,lattimore2019cleaning} where the minimax regret rate depends on precise observability conditions. The complete classification of finite games is achieved in a line of work by \citet{CBLuSt06,bartok2014partial,ABPS13,lattimore2019cleaning}, with a focus on the stochastic version of the problem in the work by \citet{BSP11,BZS12}. Asymptotics for finite games are known as well \citep{KHN15}. Partial monitoring with prior information was studied by \cite{vanchinathan14efficient}, and  with side-information by \citet{bartok2012partial}. Latter setup is different from our contextual setting. The linear version of the problem that we study here is due to \citet{Lin2014combinatorial,Chaudhuri2016phased}. Both previous approaches rely on forced exploration schemes and achieve a $\tilde \oO(n^{2/3})$ worst-case regret on globally observable games, but not the faster $\tilde \oO(n^{1/2})$ rate on locally observable games. Information directed sampling was proposed by \citet{Russo2014learning} in the Bayesian setting to address short-comings of the UCB algorithm \citep{auer2002confidencebounds} and Thompson sampling \citep{agrawal2013thompson} on examples that capture the spirit of partial monitoring. The frequentist version of the algorithm that we analyze here was proposed by \citet{Kirschner2018} in a bandit setting with heteroscedastic noise, which we strictly generalize. Recently, information theoretic tools were also introduced in the partial monitoring literature, to obtain minimax rates \citep{lattimore2019information} and to define the sampling distribution of an algorithm for finite (adversarial) partial monitoring \citep{lattimore2019exploration}.  Bandits \citep{lattimore2018bandit} are perhaps the most prominent special case of partial monitoring. We discuss more relevant work in the context of specific applications in Section \ref{sec:applications}.

\paragraph{Notation}
We write $\norm{\cdot}$ for the standard euclidean norm.  For positive semi-definite $A$ let $\norm{x}_A^2 = x^\T A x$.
Given $\xX \subset \RR^d$ we let $\conv(\xX)$, $\partial \xX$ and $\diam(\xX) = \sup_{x,y \in \xX} \norm{x - y}$ denote its convex hull, boundary and diameter respectively.
The smallest and largest eigenvalues of a matrix $A$ are denoted by $\lambda_{\min}(A)$ and $\lambda_{\max}(A)$ respectively.
The identity matrix of dimension $d$ is denoted by $\eye_d$. For two square matrices $A,B$, $A \preceq B$ means that $B-A$ is positive semi-definite. For a possibly non-square matrix $C$, $\|C\|_2 = \sqrt{\lambda_{\max}(C^\T C)}$ is the operator norm. Given an index set $\zZ$ and a collection of matrices $(A_z : z \in \zZ)$, all with the same number of rows, we define $\laspan(A_z : z \in \zZ)$ to be the span of the collection of all columns of the matrices $(A_z : z \in \zZ)$.
When $\xX$ is Borel measurable we let $\sP(\xX)$ be the space of probability measures on $\xX$ with respect to the Borel $\sigma$-algebra.
The Dirac probability measure at $x \in \xX$ is denoted by $\delta_x \in \sP(\xX)$.
The optimal action given parameter $\theta$ is $x^*(\theta) = \argmax_{x \in \xX} \ip{x,\theta}$.
Probability measures on subsets of $\RR^d$ are always defined over the Borel $\sigma$-algebra.
Given a probability measure $\pi$ on $\xX$ let $V_\pi = \int_{\xX} A_x A_x^\T d\pi(x) \in \RR^{d \times d}$.
The filtration $\fF_t = \sigma(x_1, \a_1, \dots, x_t, \a_t)$ contains the observed quantities at the end of round $t$. $\tilde \oO$ is the Landau notation with logarithmic factors suppressed.

\paragraph{Assumptions}
Throughout, we make technical boundedness assumptions $\|\theta\|_2 \leq 1$, $\|A_x\|_2 \leq 1$ and $\diam{(\xX)}\leq 1$, which implies $\ip{x-y,\theta} \leq 1$ for all $x,y \in \xX$. 
The noise vector $\epsilon_t$ is conditionally $\rho$\nobreakdash-subgaussian, $\forall \lambda \in \RR,\,\EE[e^{\lambda \epsilon_t}|\fF_{t-1}, x_t] \leq \exp(\rho^2 \lambda^2/2)$ understood coordinate wise. The map $x \mapsto A_x$ is assumed to be continuous.

\section{Information Directed Sampling for Linear Partial Monitoring}
{\em Information directed sampling (IDS)} was introduced by \citet{Russo2014learning} for the Bayesian bandit setting. 
IDS samples actions from a distribution that minimizes the ratio of squared expected regret and mutual information. The \emph{information ratio} 
appears in a sum under the square root in the regret bound and IDS is the policy that (greedily) minimizes this term. \citet{Kirschner2018} introduced a frequentist analog of the algorithm 
that replaces the Bayesian expected suboptimality and information gain with frequentist counterparts and exhibits high-probability regret bounds on linear bandits. In the following we generalize the latter approach, which we simply refer to as IDS. 

Formally, let $\Delta_t:\xX \rightarrow \RR_{\geq 0}$ be a \emph{gap estimate} and $I_t : \xX\rightarrow \RR_{\geq 0}$ an \emph{information gain} that we will define shortly. We linearly extend the functions $\Delta_t$ and $I_t$ to probability distributions over $\xX$ so that for distributions $\mu \in \sP(\xX)$ we have
$\Delta_t(\mu) = \int_\xX \Delta_t(x) d\mu(x)$ and $I_t(\mu) = \int_\xX I_t(x) d\mu(x)$.
Information directed sampling is the strategy that samples the action $x_t$ at step $t$ from a distribution $\mu_t \in \sP(\xX)$ that minimizes the \emph{information ratio} $\Psi_t(\mu)$: 
\begin{align*}
\mu_t = \argmin_{\mu \in \sP(\xX)} \Psi_t(\mu)\,, \quad \text{where} \quad \Psi_t(\mu) = \frac{\Delta_t(\mu)^2}{I_t(\mu)}\,.
\end{align*}
The minimizing distribution is well defined and can 
always be chosen with a support of two actions. Furthermore, $\mu \mapsto \Psi_t(\mu)$ is convex. These results were previously shown for the bandit setting by \citet[Lemma 4,16,17]{Kirschner2018} and continue to hold in the more general setting. We briefly discuss computational concerns in Section \ref{sec:discussion}.

Let $A_t = A_{x_t}$ be the observation operator for the action chosen in round $t$. To estimate the gap $\Delta_t(x)$, IDS uses the regularized least squares estimator, which after $t$ rounds is
\begin{align*}
\hat{\theta}_t = \argmin_{\theta \in \RR^d} \sum_{s=1}^t \snorm{A_s^\T \theta - \a_s}^2 + \snorm{\theta}^2 = V_t^{-1}\sum_{s=1}^t A_s \a_s\,,
\end{align*}
where $V_t = \sum_{s=1}^t A_sA_s^\T + \eye_d$. Define a sequence of confidence sets $(\cC_t)_{t=0}^n$ by
\begin{align}
\cC_t = \left\{\theta' \in \RR^d : \snorm{\hat \theta_t - \theta'}_{V_t} \leq \beta_t^{1/2}\right\}\,, \quad \text{where }
\beta_t^{1/2} = \sqrt{\log \det V_t + 2\log\left(\tfrac{1}{\delta}\right)} + 1\,.
\label{eq:conf}
\end{align}
The concentration bound by \citet[Theorem 2]{Abbasi2011improved} shows that with probability at least $1 - \delta$ it holds that $\theta \in \cC_t$ for all $t$. Our estimate of the suboptimality gap is defined as
\begin{align*}
\Delta_t(x) = \min\Big\{1, \, \max_{y \in \xX} \ip{y - x,\hat \theta_{t-1}} + \beta_{t-1}^{1/2} \norm{x - y}_{V_{t-1}^{-1}}\Big\}\,,
\end{align*}
which is chosen so that with high probability $\ip{x^* - x,\theta} \leq \Delta_t(x)$ for all $x \in \xX$ and all rounds $t$.
Note that $\Delta_t(x) \geq 0$ for all $x \in \xX$.
For the information gain we use
\begin{align}
I_t(x) = \log \det \left(\eye_{m} + A_x^\T V_{t-1}^{-1} A_x\right)\,. \label{eq:info-full}
\end{align}
The definition corresponds to the usual Shannon mutual information when using a Gaussian prior on the parameter and a Gaussian likelihood function. For the bandit setting, it was previously demonstrated by \cite{Kirschner2018} that the choice of $I_t$ can have a large impact on empirical performance. In Appendix \ref{app:other-info} we discuss some alternative choices for both $\Delta_t$ and $I_t$.

\subsection{A General Regret Bound}
The regret of any strategy can be bounded in terms of the cumulative sum of the information ratio $\sum_{t=1}^n \Psi_t(\mu_t)$ 
and the \emph{total information gain}, $\gamma_n = \sum_{t=1}^n I_t(x_t)$. The following result is a generic regret bound that generalizes Theorem 1 of \citet{Kirschner2018}. Note that for deterministic policies the result can be simplified \cite[cf.\ Theorem 2]{Kirschner2018}.

\begin{lemma}[IDS regret bound]\label{lem:ids-regret}
For $\eta \geq 0$, let $G_\eta = \{t \in [n] : \Delta_t(\mu_t) \leq \eta\}$.
There exists a universal constant $C > 1$ such that for any $n \geq 1$ with probability at least $1 - \delta$ the regret of any (possibly) randomized policy $(\mu_t)_{t=1}^n$ is bounded by 
\begin{align*}
R_n \leq C \inf_{\eta \geq 0} \left[n\eta + \sqrt{\sum_{t\in [n]\setminus G_\eta} \Psi_t(\mu_t)\left(\gamma_n + \log\tfrac{1}{\delta}\right)}\right] + 4\log\left(\frac{4n+4}{\delta}\right)\,.
\end{align*}
\end{lemma}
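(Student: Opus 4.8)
The plan is to run the standard information-ratio argument, conditioning on a good event, and to handle the one genuine subtlety, which is the passage from the sampled actions $x_t$ to the sampling distributions $\mu_t$. Throughout I would split the allowed failure probability $\delta$ into three equal parts, one for each of the probabilistic statements below.

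I would first condition on the event $E$ that the confidence sets are valid, i.e.\ $\theta \in \cC_t$ for all $0 \le t \le n$, which by the bound of \citet{Abbasi2011improved}, invoked with confidence parameter $\delta/3$, holds with probability at least $1-\delta/3$. On $E$, a Cauchy--Schwarz step in the $\norm{\cdot}_{V_{t-1}}$-norm together with the definition of $\Delta_t$ gives the \emph{pointwise} inequality $\ip{x^* - x_t,\theta} \le \Delta_t(x_t)$ in every round $t$: choosing $y = x^*$ in the maximum defining $\Delta_t$, the inflation term $\beta_{t-1}^{1/2}\norm{x_t - x^*}_{V_{t-1}^{-1}}$ dominates the estimation error $|\ip{x^* - x_t,\hat\theta_{t-1} - \theta}|$, and the truncation at $1$ is harmless since $\ip{x^* - x_t,\theta} \le \diam(\xX)\norm{\theta}_2 \le 1$. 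Hence $R_n \le \sum_{t=1}^n \Delta_t(x_t)$ on $E$.

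The second ingredient relates $\Delta_t(x_t)$ and $I_t(x_t)$ to their conditional means $\Delta_t(\mu_t)$ and $I_t(\mu_t)$, both of which are $\fF_{t-1}$-measurable. Because $\Delta_t$ takes values in $[0,1]$, the conditional variance of $\Delta_t(x_t)$ is at most $\EE_{\mu_t}[\Delta_t(x)^2] \le \EE_{\mu_t}[\Delta_t(x)] = \Delta_t(\mu_t)$, so a Bernstein-type martingale inequality --- the origin of the additive $4\log\tfrac{4n+4}{\delta}$ term --- yields, with probability at least $1-\delta/3$, that $\sum_{t=1}^n \Delta_t(x_t) \le 2\sum_{t=1}^n \Delta_t(\mu_t) + 4\log\tfrac{4n+4}{\delta}$. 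Applying the same inequality to $I_t(x_t)$ --- whose conditional mean is $I_t(\mu_t)$ and whose range and conditional variance are controlled by $I_t(\mu_t)$ up to a constant, since the eigenvalues of $\eye_m + A_x^\T V_{t-1}^{-1}A_x$ lie in $[1,2]$ as $V_{t-1} \succeq \eye_d$ --- gives, with probability at least $1-\delta/3$, that $\sum_{t=1}^n I_t(\mu_t) \le c\,(\gamma_n + \log\tfrac1\delta)$ for a universal constant $c$.

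On the intersection of these three events (probability at least $1-\delta$) the remainder is deterministic bookkeeping. For any fixed $\eta \ge 0$, the definition of $G_\eta$ gives $\sum_{t=1}^n \Delta_t(\mu_t) \le n\eta + \sum_{t \notin G_\eta} \Delta_t(\mu_t)$; writing $\Delta_t(\mu_t) = \sqrt{\Psi_t(\mu_t)}\,\sqrt{I_t(\mu_t)}$ and applying Cauchy--Schwarz over the rounds $t \notin G_\eta$, then bounding $\sum_{t\notin G_\eta} I_t(\mu_t) \le \sum_{t=1}^n I_t(\mu_t) \le c\,(\gamma_n + \log\tfrac1\delta)$, and finally chaining with the inequalities of the previous paragraph, produces $R_n \le C\big( n\eta + \sqrt{\sum_{t\notin G_\eta}\Psi_t(\mu_t)\,(\gamma_n + \log\tfrac1\delta)}\big) + 4\log\tfrac{4n+4}{\delta}$. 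Since no probabilistic statement involved $\eta$, the infimum over $\eta \ge 0$ can be taken at the very end, which is the claim. The main obstacle I anticipate is exactly the second ingredient: a crude Azuma--Hoeffding bound would replace the additive logarithmic penalty by a $\Theta(\sqrt{n\log(1/\delta)})$ term that dominates the $\sqrt{\cdot}$-term in the locally observable regime and destroys the adaptivity of the bound, so one must use a Bernstein/Freedman argument exploiting that the conditional variances of $\Delta_t(x_t)$ and $I_t(x_t)$ are themselves controlled by their conditional means (both functions being bounded), which collapses the concentration into a multiplicative bound of the shape ``realised $\le$ constant $\cdot$ expected $+\ O(\log\tfrac1\delta)$''; one should also verify that the failure probabilities and the constant $C$ are uniform over $\eta$ and over the arbitrary (possibly randomised) policy, while the Cauchy--Schwarz step and the $G_\eta$ accounting are routine.
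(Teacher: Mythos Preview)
Your proposal is correct and follows essentially the same route as the paper. The paper packages your first two steps (validity of the confidence sets plus the Freedman/Bernstein passage from $\Delta_t(x_t)$ to $\Delta_t(\mu_t)$) into a single citation of \citet[Lemma~13]{Kirschner2018}, obtaining directly $R_n \le \tfrac{5}{4}\sum_t \Delta_t(\mu_t) + 4\log\tfrac{4n+4}{\delta}$, and similarly cites \citet[Lemma~3]{Kirschner2018} for the $I_t(\mu_t)\!\to\!I_t(x_t)$ step; your more granular decomposition and the explicit emphasis on why Freedman (and not Azuma--Hoeffding) is needed are exactly the content of those cited lemmas.
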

For the proof, note that $\ip{x^* - x_t, \theta} \leq \Delta_t(x_t)$ and consider the sum over the expected gap estimates,
\begin{align*}
\sum_{t=1}^n \Delta_t(\mu_t)
&= \sum_{t\in G_\eta} \Delta_t(\mu_t)  +  \sum_{t\in [n]\setminus G_\eta} \Delta_t(\mu_t) 
\leq n\eta + \sqrt{\sum_{t\in [n]\setminus G_\eta} \Psi_t(\mu_t) \sum_{t=1}^n I_t(\mu_t)}\,, 
\end{align*}
where we used the definition of $G_\eta$ and the Cauchy--Schwarz inequality. A variance-dependent martingale bound such as Freedman's inequality shows that the regret $R_n$ concentrates on the sum over (conditional) expected regret up to an additive $\oO(\log n)$ term and the total expected information gain is bounded by $\sum_{t=1}^n I_t(\mu_t) \leq c(\gamma_n + \log \frac{1}{\delta})$. The complete proof is given in Appendix \ref{app:proof-ids-regret}.

The next lemma is a standard result \citep[cf.\ Lemma 10]{Abbasi2011improved} and shows that for fixed dimension, the total information gain depends only logarithmically on the horizon.
\begin{lemma}\label{lem:total-info-gain}
	For the information gain $I_t$ as defined in \eqref{eq:info-full}, $\gamma_n = \log\det(V_n) - \log \det(V_0) \leq d\log\left(1 + \tfrac{nm}{d}\right)$ and $\beta_n = \sqrt{\gamma_n + 2 \log \tfrac{1}{\delta}} + 1$.
\end{lemma}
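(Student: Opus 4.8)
The plan is to turn the sum defining $\gamma_n$ into a telescoping difference of log-determinants and then control that difference by a standard trace/eigenvalue argument, exactly as in the elliptical-potential lemma for linear bandits (Lemma~10 of \citet{Abbasi2011improved}). First I would recall that $V_t = V_{t-1} + A_{x_t}A_{x_t}^\T$ with $V_0 = \eye_d$, and apply Sylvester's determinant identity $\det(\eye_m + A^\T M^{-1}A) = \det(M + AA^\T)/\det(M)$ with $M = V_{t-1}$ and $A = A_{x_t}$. This gives
\begin{align*}
I_t(x_t) = \log\det\!\left(\eye_m + A_{x_t}^\T V_{t-1}^{-1} A_{x_t}\right) = \log\det V_t - \log\det V_{t-1}\,,
\end{align*}
so summing over $t \in [n]$ telescopes to $\gamma_n = \log\det V_n - \log\det V_0$. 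Since $V_0 = \eye_d$ has $\log\det V_0 = 0$, this also yields $\gamma_n = \log\det V_n$.

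Next I would bound $\log\det V_n$ by a trace argument. Each rank-$\le m$ matrix $A_{x_s}A_{x_s}^\T$ satisfies $\operatorname{tr}(A_{x_s}A_{x_s}^\T) = \|A_{x_s}\|_F^2 \le m\|A_{x_s}\|_2^2 \le m$ by the boundedness assumption $\|A_x\|_2 \le 1$, hence $\operatorname{tr}(V_n) = d + \sum_{s=1}^n \operatorname{tr}(A_{x_s}A_{x_s}^\T) \le d + nm$. Writing $\lambda_1, \dots, \lambda_d \ge 0$ for the eigenvalues of the positive definite matrix $V_n$, AM--GM gives $\det V_n = \prod_{i=1}^d \lambda_i \le \big(\tfrac{1}{d}\sum_{i=1}^d \lambda_i\big)^d = \big(\operatorname{tr}(V_n)/d\big)^d \le \big(1 + \tfrac{nm}{d}\big)^d$, so $\gamma_n = \log\det V_n \le d\log\big(1 + \tfrac{nm}{d}\big)$, which is the claimed bound.

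The statement about $\beta_n$ is then immediate: by the definition in \eqref{eq:conf} one has $\beta_n^{1/2} = \sqrt{\log\det V_n + 2\log(1/\delta)} + 1$, and substituting $\log\det V_n = \gamma_n$ from the first step gives the asserted expression. There is essentially no real obstacle here; the only step that is not purely mechanical is invoking Sylvester's determinant identity to recognize the per-round information gain as a telescoping increment of $\log\det V_t$, after which the bound is the routine potential-function estimate.
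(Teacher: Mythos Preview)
Your proposal is correct and follows exactly the standard argument the paper defers to by citing Lemma~10 of \citet{Abbasi2011improved}: the paper does not give its own proof, and your telescoping via Sylvester's identity together with the AM--GM trace bound is precisely that argument, adapted to the matrix-observation case. The only minor remark is that the lemma statement as printed writes $\beta_n$ where the definition in \eqref{eq:conf} has $\beta_n^{1/2}$; you handled this correctly.
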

Importantly, $\gamma_n$ and $\beta_n$ have no dependence on the number of actions. Further, if $\theta \in \hH$ is contained in a reproducing kernel Hilbert space $\hH$ with bounded Hilbert norm $\|\theta\|_{\hH} \leq 1$ (corresponding to a Gaussian process), even the dependence on $d$ can be avoided \citep{Srinivas2009}.

\subsection{Regret Bound for Globally Observable Games}
We first analyze globally observable games (see \cref{eq:global}). The condition implies that $\ip{x - y, \theta}$ can be estimated from data collected by the algorithm using appropriate actions.
The game-dependent constants that appear in the analysis depend on the degree to which the learner can efficiently gain information, which roughly depends on how well the observation operators $A_x$ are 
aligned with a direction $x-y$ in which we try to improve the accuracy of our estimation.
 We define the  \emph{worst-case alignment} constant as
\begin{align*}
\alpha = \max_{v \in \RR^d} \max_{x,y \in \xX}  \min_{z \in \xX} \frac{\ip{x-y,v}^2}{\|A_z^\T v\|^2}\,.
\end{align*}
Note that for games that are globally observable, $\alpha$ is always bounded, independent of the number of actions (Lemma \ref{lem:alignment-bounds}, Appendix \ref{app:alignment-bounds}).

\begin{theorem}\label{thm:global-upper}
	For any game that satisfies the global observability condition \eqref{eq:global}, there exists a universal constant $C > 0$ such that for any $n\geq 1$ with probability at least $1 - \delta$,
	\begin{align*}
	R_n \leq C n^{2/3} \left(\alpha \beta_n (\gamma_n + \log\tfrac{1}{\delta})\right)^{1/3} + 4\log\left(\frac{4n+4}{\delta}\right)\,.
	\end{align*}
\end{theorem}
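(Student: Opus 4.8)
The plan is to feed a per-round bound on the information ratio $\Psi_t(\mu_t)$ into the generic estimate of \cref{lem:ids-regret}, working throughout on the $1-\delta$ event of \cref{eq:conf}. Let $\bar x_t \in \argmax_{x\in\xX}\ip{x,\hat\theta_{t-1}}$ be the empirically optimal action and let $y_t \in \argmax_{y\in\xX}\big(\ip{y - \bar x_t,\hat\theta_{t-1}} + \beta_{t-1}^{1/2}\norm{\bar x_t - y}_{V_{t-1}^{-1}}\big)$ be its optimistic challenger, so that $\ip{y_t - \bar x_t,\hat\theta_{t-1}}\le 0$ by optimality of $\bar x_t$ and hence $\Delta_t(\bar x_t) \le \beta_{t-1}^{1/2}\norm{\bar x_t - y_t}_{V_{t-1}^{-1}}$.

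The first step is to convert the estimation direction $\bar x_t - y_t$ into an informative action. By global observability \eqref{eq:global}, $\bar x_t - y_t \in \laspan(A_z : z\in\xX)$, so the definition of $\alpha$ produces $z_t\in\xX$ with $\ip{\bar x_t - y_t, v}^2 \le \alpha\norm{A_{z_t}^\T v}^2$ for $v = V_{t-1}^{-1}(\bar x_t - y_t)$. Since $\norm{A_{z_t}^\T v}^2 \le \lambda_{\max}(A_{z_t}^\T V_{t-1}^{-1} A_{z_t})\norm{\bar x_t - y_t}_{V_{t-1}^{-1}}^2$ and $\ip{\bar x_t - y_t, v} = \norm{\bar x_t - y_t}_{V_{t-1}^{-1}}^2$, this gives $\norm{\bar x_t - y_t}_{V_{t-1}^{-1}}^2 \le \alpha\lambda_{\max}(A_{z_t}^\T V_{t-1}^{-1}A_{z_t})$. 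The eigenvalue is at most $\norm{A_{z_t}}_2^2 \le 1$ because $V_{t-1}\succeq\eye_d$, so $\det(\eye_m + M)\ge 1 + \lambda_{\max}(M)$ together with $\log(1+u)\ge u/2$ on $[0,1]$ yields $\lambda_{\max}(A_{z_t}^\T V_{t-1}^{-1}A_{z_t}) \le 2 I_t(z_t)$, and therefore $\Delta_t(\bar x_t)^2 \le 2\alpha\beta_{t-1} I_t(z_t)$.

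Next I would test the two-point distribution $\nu = (1-\epsilon)\delta_{\bar x_t} + \epsilon\,\delta_{z_t}$ with $\epsilon = \Delta_t(\bar x_t)\in(0,1]$ (the case $\Delta_t(\bar x_t)=0$ being dealt with below). Using $\Delta_t(z_t)\le 1$ one gets $\Delta_t(\nu) \le \Delta_t(\bar x_t) + \epsilon = 2\Delta_t(\bar x_t)$ and $I_t(\nu) \ge \epsilon\,I_t(z_t) \ge \Delta_t(\bar x_t)^3/(2\alpha\beta_{t-1})$, hence $\Psi_t(\mu_t) \le \Psi_t(\nu) \le 8\alpha\beta_{t-1}/\Delta_t(\bar x_t)$. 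To replace $\Delta_t(\bar x_t)$ by the sampled $\Delta_t(\mu_t)$ — the quantity that enters \cref{lem:ids-regret} through $G_\eta$ — I would use first-order optimality of the minimiser: for every $x\in\xX$ the right derivative of $\lambda\mapsto\Psi_t((1-\lambda)\mu_t+\lambda\delta_x)$ at $\lambda=0$ is nonnegative, which unravels to $2\Delta_t(x) I_t(\mu_t) \ge \Delta_t(\mu_t)(I_t(\mu_t) + I_t(x)) \ge \Delta_t(\mu_t)I_t(\mu_t)$, so $\Delta_t(\mu_t) \le 2\Delta_t(x)$; taking $x = \bar x_t$ gives $\Delta_t(\mu_t)\le 2\Delta_t(\bar x_t)$ (in particular $\Delta_t(\mu_t) = 0$ when $\Delta_t(\bar x_t) = 0$, so such rounds lie in $G_\eta$). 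Combining, $\Psi_t(\mu_t) \le 16\alpha\beta_{t-1}/\Delta_t(\mu_t) \le 16\alpha\beta_n/\Delta_t(\mu_t)$ on every round with $\Delta_t(\mu_t) > 0$.

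Finally I would substitute into \cref{lem:ids-regret}: for $t\in[n]\setminus G_\eta$ we have $\Delta_t(\mu_t)>\eta$, hence $\Psi_t(\mu_t) \le 16\alpha\beta_n/\eta$, so $\sum_{t\in[n]\setminus G_\eta}\Psi_t(\mu_t) \le 16 n\alpha\beta_n/\eta$, and with \cref{lem:total-info-gain} this gives $R_n \le C\big(n\eta + \sqrt{16 n\alpha\beta_n\eta^{-1}(\gamma_n + \log\tfrac1\delta)}\big) + 4\log\tfrac{4n+4}{\delta}$. Optimising the bracket over $\eta>0$, whose minimiser is of order $(\alpha\beta_n(\gamma_n + \log\tfrac1\delta)/n)^{1/3}$, produces the stated $n^{2/3}(\alpha\beta_n(\gamma_n+\log\tfrac1\delta))^{1/3}$ rate up to a universal constant. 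The hard part is the passage from $\Delta_t(\bar x_t)$ to $\Delta_t(\mu_t)$: the mixture argument only controls the information ratio through the estimated gap of the \emph{greedy} action, whereas \cref{lem:ids-regret} is phrased through the sampled distribution, and it is the stationarity of the information-ratio minimiser that bridges the two; the degenerate cases $I_t(\mu_t)=0$ and $\Delta_t(\mu_t)=0$ have to be checked but are harmless.
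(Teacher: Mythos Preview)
Your proof is correct and follows essentially the same route as the paper: it reproduces \cref{lem:global-exp} (your first two paragraphs), \cref{lem:greedy} (your first-order optimality argument), and the mixture-plus-$\eta$-optimisation of the paper's proof of \cref{thm:global-upper}, arriving at the identical bound $\Psi_t(\mu_t)\le 16\alpha\beta_{t-1}/\Delta_t(\mu_t)$ with the same constants. The only cosmetic differences are that you pick $z_t$ as the minimiser in the definition of $\alpha$ for the specific direction $V_{t-1}^{-1}(\bar x_t-y_t)$ rather than as $\argmax_z I_t(z)$, and that you differentiate $\Psi_t((1-\lambda)\mu_t+\lambda\delta_x)$ directly whereas the paper differentiates the slightly relaxed $\Delta_t((1-p)\mu_t+p\delta_x)^2/((1-p)I_t(\mu_t))$; both yield $\Delta_t(\mu_t)\le 2\Delta_t(x)$.
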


For fixed feature dimension, $\beta_n$ and $\gamma_n$ depend only logarithmically on the horizon, and therefore the regret for globally observable games is $R_n \leq \tilde \oO(n^{2/3})$. We show in \cref{sec:lower} that all
globally observable games that are not locally observable have $R_n = \Omega(n^{2/3})$ in the worst case. 
The first step in the proof of Theorem~\ref{thm:global-upper} is to establish the existence of an action for which the information gain
is large relative to the regret of the greedy action.

\begin{lemma}\label{lem:global-exp}
	The greedy action $\hat x^*_t = \argmax_{x \in \xX} \ip{x,\hat \theta_{t-1}}$ satisfies
	$\Delta_t(\hat x^*_t)^2 \leq 2 \alpha \beta_{t-1} \max_z I_t(z)$.
\end{lemma}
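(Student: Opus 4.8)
The plan is to exploit that the greedy action annihilates the optimistic part of the gap estimate, so that $\Delta_t(\hat x^*_t)$ is controlled by a pure confidence width, and then to use the definition of $\alpha$ to trade that width against the information gain of one well-chosen action. Since $\hat x^*_t$ maximizes $\ip{\,\cdot\,,\hat\theta_{t-1}}$ over $\xX$, we have $\ip{y-\hat x^*_t,\hat\theta_{t-1}}\le 0$ for every $y\in\xX$. Dropping this nonpositive term and letting $\bar y\in\argmax_{y\in\xX}\norm{\hat x^*_t-y}_{\Vi}$ (a maximizer exists since $\xX$ is compact and $x\mapsto A_x$ is continuous), we obtain $\Delta_t(\hat x^*_t)\le \beta_{t-1}^{1/2}\norm{w}_{\Vi}$ with $w:=\hat x^*_t-\bar y$, hence $\Delta_t(\hat x^*_t)^2\le \beta_{t-1}\norm{w}_{\Vi}^2$. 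It therefore suffices to show $\norm{w}_{\Vi}^2\le 2\alpha\max_{z\in\xX}I_t(z)$.

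Next I would feed the direction $v:=\Vi w$ and the pair $(x,y)=(\hat x^*_t,\bar y)$ into the alignment constant. Since $w=\hat x^*_t-\bar y\in\laspan(A_z:z\in\xX)$ by global observability and $\alpha<\infty$ (\cref{lem:alignment-bounds}), the inner minimum over $z$ is attained at some $z^\star\in\xX$, and because $\ip{w,v}=w^\T\Vi w=\norm{w}_{\Vi}^2$,
\[
\norm{w}_{\Vi}^4=\ip{w,v}^2\le \alpha\,\norm{A_{z^\star}^\T v}^2 .
\]
So everything reduces to lower bounding $I_t(z^\star)=\log\det(\eye_m+A_{z^\star}^\T\Vi A_{z^\star})$ by a multiple of $\norm{A_{z^\star}^\T v}^2/\norm{w}_{\Vi}^2$.

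The delicate step — and the one I expect to be the main obstacle — is this passage from a quadratic form to a log-determinant. By Sylvester's identity $\det(\eye_m+A_{z^\star}^\T\Vi A_{z^\star})=\det(\eye_d+N)$ with $N:=V_{t-1}^{-1/2}A_{z^\star}A_{z^\star}^\T V_{t-1}^{-1/2}\succeq 0$, and writing $(\nu_i)$ for the nonnegative eigenvalues of $N$ one has $\det(\eye_d+N)=\prod_i(1+\nu_i)\ge 1+\sum_i\nu_i\ge 1+u^\T Nu$ for any unit vector $u$. Choosing $u\propto V_{t-1}^{-1/2}w$ gives $u^\T Nu=\norm{A_{z^\star}^\T v}^2/\norm{w}_{\Vi}^2=:\tau$. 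The key observation is that $\tau\le 1$: the boundedness assumption $\norm{A_{z^\star}}_2\le 1$ yields $A_{z^\star}A_{z^\star}^\T\preceq\eye_d\preceq V_{t-1}$, hence $\norm{A_{z^\star}^\T v}^2=v^\T A_{z^\star}A_{z^\star}^\T v\le v^\T V_{t-1}v=\norm{w}_{\Vi}^2$. Since $\log(1+\tau)\ge \tau/2$ for $\tau\in[0,1]$, combining with the previous display,
\[
I_t(z^\star)\ge \log(1+\tau)\ge \tfrac12\tau=\frac{\norm{A_{z^\star}^\T v}^2}{2\,\norm{w}_{\Vi}^2}\ge \frac{\norm{w}_{\Vi}^2}{2\alpha}.
\]
Rearranging gives $\norm{w}_{\Vi}^2\le 2\alpha I_t(z^\star)\le 2\alpha\max_{z\in\xX} I_t(z)$, which together with $\Delta_t(\hat x^*_t)^2\le \beta_{t-1}\norm{w}_{\Vi}^2$ finishes the proof. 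Beyond the determinant manipulation, the only points requiring care are the existence of the extremizers (compactness, continuity, and $\alpha<\infty$ on globally observable games) and checking that $\tau$ really lies in $[0,1]$ so that the cheap bound $\log(1+\tau)\ge\tau/2$ is legitimate.
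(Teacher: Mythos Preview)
Your proof is correct and follows essentially the same route as the paper's: kill the linear term via greediness to get $\Delta_t(\hat x^*_t)\le\beta_{t-1}^{1/2}\norm{w}_{\Vi}$, plug $v=\Vi w$ into the definition of $\alpha$, and convert the quadratic form $\norm{A_{z^\star}^\T v}^2/\norm{w}_{\Vi}^2\in[0,1]$ into a lower bound on $I_t(z^\star)$ via $\log(1+\tau)\ge\tau/2$. The only cosmetic difference is that the paper passes through $\lambda_{\max}(A_z^\T\Vi A_z)$ and the inequality $a\le 2\log(1+a)$, whereas you use $\det(\eye+N)\ge 1+u^\T Nu$ directly; both yield $I_t(z^\star)\ge\tau/2$.
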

\begin{proof}
	Let $w_t = \argmax_{w \in \{x-y : x,y \in \xX\}} \|w\|_{V_{t-1}^{-1}}^2$ be the most uncertain direction. Then, 
	\begin{align*}
	\Delta_t(\hat x^*_t) = \max_{y \in \xX} (y - \hat x^*_t)^\T \hat \theta_{t-1} + \beta_{t-1}^{1/2} \snorm{y - \hat x_t^*}_{V_{t-1}^{-1}}
	\leq \beta_{t-1}^{1/2}\|w_t\|_{V_{t-1}^{-1}}\,.
	\end{align*}
  	Note three ways to write $\|w\|_{V_{t-1}^{-1}}^2 = \| V_{t-1}^{-1/2} w_t \|^2 = \ip{w_t, V_{t-1}^{-1}w_t}$. Basic linear algebra shows that
	\begin{align*}
	\frac{\|A_z^\top V_{t-1}^{-1} w_t \|^2}{ \| V_{t-1}^{-1/2} w_t \|^2}
	&\leq \max_{v \in \RR^d} \frac{\|A_z^\top V_{t-1}^{-1/2}v\|^2}{ \| v \|^2} = \lambda_{\max}(A_z^\T V_{t-1}^{-1} A_z)\\
	& \leq 2 \log\det(\eye_{d}+ A_z^\T V_{t-1}^{-1}A_z) = 2 I_t(z)\,.
	\end{align*}
	For the last step we used the inequality $a \leq 2 \log(1+a)$ for $a \in [0,1]$ and that the eigenvalues of $A_z^\T V_{t-1}^{-1} A_z$ are bounded in $[0,1]$ by the assumption $\|A_z\| \leq 1$ and $V_{t-1}^{-1} \preceq \eye_{d}$. With the most informative action $z_t = \argmax_{x \in \xX} I_t(x)$, it follows that
	\begin{align*}
	\frac{\Delta_t(\hat x_t^*)^2}{I_t(z_t)} \leq 2 \beta_{t-1} \min_z\frac{\ip{w_t, V_{t-1}^{-1} w_t}^2}{\|A_z^\T V_{t-1}^{-1} w_t \|^2 } \leq 2 \beta_{t-1}\max_{v \in \RR^d} \max_{x,y \in \xX} \min_z \frac{\ip{x-y, v}^2}{\|A_z^\T v \|^2 } = 2 \alpha \beta_{t-1}\,.
	\end{align*}
	Rearranging completes the proof.
\end{proof}%

The following lemma shows that IDS never plays a distribution that is too far from greedy. The proof is deferred to Appendix~\ref{app:proof-greedy}.

\begin{lemma}\label{lem:greedy}
  Let $\mu_t$ be the IDS distribution at time $t$. Then $\Delta_t(\mu_t) \leq 2 \min_{x \in \xX} \Delta_t(x)$.
\end{lemma}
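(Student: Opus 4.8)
The plan is to use only the variational characterization of $\mu_t$ as the minimizer of the information ratio $\Psi_t$ over $\sP(\xX)$: if $\mu_t$ placed too much weight on actions with large gap estimate relative to the action $x_0$ of smallest gap, then shifting an infinitesimal amount of mass from $\mu_t$ towards $\delta_{x_0}$ would strictly decrease $\Psi_t$, contradicting optimality. So I would set this up as a one-sided first-order optimality condition along the segment $\epsilon \mapsto (1-\epsilon)\mu_t + \epsilon\,\delta_{x_0}$ and read off the factor $2$. I do not expect a genuine obstacle; the only point needing care is the degenerate case in which $I_t(z)=0$ for every action (equivalently all $A_z=0$), where $\Psi_t$ is $+\infty$ off the zero set of $\Delta_t$ and $\mu_t$ must be interpreted as a minimizer of $\Delta_t$ — then the claimed bound holds trivially (with equality to the minimum). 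Below I assume the non-degenerate case $\max_{z\in\xX} I_t(z)>0$.

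Fix the round and let $x_0 \in \argmin_{x\in\xX}\Delta_t(x)$, which exists since $\Delta_t$ is continuous and $\xX$ compact. Abbreviate $D=\Delta_t(\mu_t)$, $I=I_t(\mu_t)$, $d_0=\Delta_t(x_0)=\min_{x\in\xX}\Delta_t(x)$ and $I_0=I_t(x_0)\ge 0$. If $D=0$ the claim is immediate since $\Delta_t\ge 0$, so assume $D>0$. Since $\Psi_t(\delta_{z^*})=\Delta_t(z^*)^2/I_t(z^*)<\infty$ for the most informative action $z^*$, minimality of $\mu_t$ gives $\Psi_t(\mu_t)<\infty$, hence $I>0$. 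For $\epsilon\in[0,1)$ put $\mu_\epsilon=(1-\epsilon)\mu_t+\epsilon\,\delta_{x_0}$; by linearity of $\Delta_t$ and $I_t$ in the distribution,
\[
\Psi_t(\mu_\epsilon)=\frac{\big(D+\epsilon(d_0-D)\big)^2}{I+\epsilon(I_0-I)}\,,
\]
and the denominator equals $(1-\epsilon)I+\epsilon I_0\ge (1-\epsilon)I>0$ near $\epsilon=0$, so $\epsilon\mapsto\Psi_t(\mu_\epsilon)$ is differentiable there.

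Because $\mu_t$ minimizes the convex map $\Psi_t$, its derivative along this segment is nonnegative at $\epsilon=0$, and a direct computation gives
\[
0\;\le\;\left.\frac{d}{d\epsilon}\Psi_t(\mu_\epsilon)\right|_{\epsilon=0}=\frac{2D(d_0-D)I-D^2(I_0-I)}{I^2}=\frac{D\big(2d_0 I-D I-D I_0\big)}{I^2}\,.
\]
Since $D>0$ and $I>0$, this forces $2d_0 I\ge DI+DI_0\ge DI$, and dividing by $I$ yields $\Delta_t(\mu_t)=D\le 2d_0=2\min_{x\in\xX}\Delta_t(x)$, which is the claim. The whole argument reduces to differentiating the scalar ratio above; the only real content is realizing that the single perturbation towards the cheapest action, together with optimality of $\mu_t$, already pins down the constant.
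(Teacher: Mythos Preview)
Your proof is correct and follows essentially the same route as the paper: perturb $\mu_t$ towards $\delta_{x_0}$ and use the first-order optimality condition $\Psi_t'(0)\ge 0$ to extract the factor $2$. The only cosmetic difference is that the paper first lower-bounds the denominator by $(1-p)I_t(\mu_t)$ (dropping the $pI_t(x_0)$ term) \emph{before} differentiating, whereas you differentiate the exact ratio and drop the nonnegative term $DI_0$ \emph{after}; both arrive at $2d_0I\ge DI$.
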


\begin{proof}\textbf{of \cref{thm:global-upper}}\,\,
  Let $z_t = \argmax_{x \in \xX} I_t(x)$ be the informative action. 
	For $p \in [0,1]$, let $\mu(p) = (1 - p) \delta_{\hat x^*_t} + p \delta_{z_t}$ be the distribution that randomizes between the greedy and the informative action. 
  By definition, the information ratio of IDS is bounded by the ratio of $\mu(p)$,
	\begin{align*}
	\Psi_t(\mu_t) \leq \min_{p \in [0,1]} \frac{\Delta_t(\mu(p))^2}{I_t(\mu(p))}
	\leq  2 \alpha \beta_{t-1} \min_{p \in [0,1]}  \frac{\left((1 - p) \Delta_t(\hat x^*_t) + p\right)^2}{p \Delta_t(\hat x_t^*)^2} \leq \frac{8 \alpha \beta_{t-1}}{\Delta_t(\hat x_t^*)}
  \leq \frac{16 \alpha \beta_{t-1}}{\Delta_t(\mu_t)}\,.
	\end{align*}
	The second inequality uses $\Delta_t(z) \leq 1$, $I_t(\hat x_t^*) \geq 0$ and Lemma~\ref{lem:global-exp} to bound $\Delta_t(\hat x^*_t)^2 \leq 2 \alpha \beta_{t-1} I_t(z)$. 
  The third inequality follows by choosing $p = \Delta_t(\hat x_t^*) \in [0,1]$ and the last follows from Lemma~\ref{lem:greedy}.
  Next, Lemma \ref{lem:ids-regret} shows that with probability at least $1 - \delta$,
	\begin{align*}
	R_n \leq C \inf_{\eta \geq 0} \left[n \eta + \sqrt{\frac{16 n \alpha \beta_n (\gamma_n + \log \tfrac{1}{\delta})}{\eta}}\right] + 4\log\left(\frac{4n+4}{\delta}\right)\,,
	\end{align*}
	where we used the fact that $(\beta_t)_{t=0}^n$ is non-decreasing.
	Optimizing $\eta$ completes the proof.
\end{proof}

\subsection{Regret Bound for Locally Observable Games}
In globally observable games, the learner can estimate the gaps for all actions, but may need to play actions that are known to be suboptimal.
The definition of local observability (see Eq.\ \eqref{eq:local}) means that the learner can gain information while playing only actions that appear plausibly optimal.

\looseness=-1 Recall the definition of the confidence set $\cC_t$ in \cref{eq:conf} and let $\pP_t = \pP(\cC_{t-1})$ be the set of actions that are plausibly optimal in round $t$. Again, our bound depends on the signal to noise ratio when exploring. For a set of (plausible optimal) actions $\yY \subset \xX$, define the \emph{worst-case alignment} for $\yY$,
\begin{align}
 \alpha(\yY) = \max_{v \in \RR^d} \max_{x,y \in \yY} \min_{z \in \yY} \frac{\ip{x-y,v}^2}{\|A_z^\T v\|^2}\,.
\end{align}
Globally observable games satisfy $\alpha = \alpha(\xX) < \infty$. The local observability condition implies that this remains true if we restrict actions to $\pP(\cC)$. All games with bandit feedback ($A_x = x$) satisfy $\alpha(\yY) \leq 4$. We refer to Lemma \ref{lem:alignment-bounds} in Appendix \ref{app:alignment-bounds} for details.
We say a game is \emph{uniformly locally observable} if $\alpha(\pP(\cC)) \leq \alpha_0$ for all $\cC \subset \RR^d$ convex. All finite locally observable games are uniformly locally observable because there are only finitely many subsets.
The definition of the alignment constant can be tightened with a more careful analysis, to obtain improved bounds on model parameters such as the dimension in some cases. We refer to Appendix \ref{app:dueling} for details.
\begin{theorem}\label{thm:local-upper}
	For locally observable games denote $\alpha_t = \alpha(\pP_t)$. There exists a universal constant $C > 0$ such that for any $n \geq 1$ with probability at least $1-\delta$,
	\begin{align*}
	R_n \leq C \sqrt{\sum_{t=1}^n\alpha_t \beta_t \left( \gamma_n + \log \tfrac{1}{\delta} \right)} + 4\log\left(\frac{4n+4}{\delta}\right) \,.
	\end{align*}
	For games that are uniformly locally observable, the regret bound is $R_n \leq \tilde \oO(\sqrt{\alpha_0 \beta_n \gamma_n n})$.
\end{theorem}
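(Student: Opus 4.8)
The plan is to invoke the generic bound of \cref{lem:ids-regret} with $\eta = 0$, so that the theorem reduces to showing that on every round with $\Delta_t(\mu_t) > 0$ the information ratio is controlled by the alignment constant alone, with no suboptimality gap in the denominator:
\[
  \Psi_t(\mu_t) \;\le\; 8\,\alpha_t\,\beta_{t-1}\,.
\]
Granting this, $\sum_{t \in [n]\setminus G_0}\Psi_t(\mu_t) \le 8\sum_{t=1}^n\alpha_t\beta_t$ since $(\beta_t)$ is non-decreasing, and \cref{lem:ids-regret} immediately yields the first bound; the uniform case then follows by bounding $\alpha_t\le\alpha_0$, $\beta_t\le\beta_n$ and using that $\beta_n,\gamma_n$ are polylogarithmic (\cref{lem:total-info-gain}). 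The high-probability statement comes entirely from \cref{lem:ids-regret}; the bound on $\Psi_t(\mu_t)$ is deterministic.

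To bound the information ratio, let $D_t = \max_{x,y\in\pP_t}\norm{x-y}_{V_{t-1}^{-1}}$ be the $V_{t-1}^{-1}$-diameter of the plausible set $\pP_t = \pP(\cC_{t-1})$. Since $\cC_{t-1}$ is a convex (ellipsoidal) set, local observability together with \cref{lem:alignment-bounds} guarantees $\alpha_t = \alpha(\pP_t) < \infty$, and $\hat x^*_t = x^*(\hat\theta_{t-1}) \in \pP_t$ because $\hat\theta_{t-1}\in\cC_{t-1}$. The crucial structural step is a \emph{diameter bound}: for every $z\in\pP_t$,
\[
  \Delta_t(z) \;\le\; 2\,\beta_{t-1}^{1/2}\,D_t\,.
\]
This is proved by a plausible-maximiser argument. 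Write $z = x^*(\theta'')$ for some $\theta''\in\cC_{t-1}$ and let $y_0\in\xX$ attain the maximum in the definition of $\Delta_t(z)$ (dropping the truncation at $1$, which only decreases $\Delta_t$). When $\norm{y_0 - z}_{V_{t-1}^{-1}}>0$, the perturbed parameter $\theta' = \hat\theta_{t-1} + \beta_{t-1}^{1/2}\,V_{t-1}^{-1}(y_0 - z)/\norm{y_0 - z}_{V_{t-1}^{-1}}$ has $\norm{\theta'-\hat\theta_{t-1}}_{V_{t-1}} = \beta_{t-1}^{1/2}$, hence $\theta'\in\cC_{t-1}$, and a short computation gives $\ip{y_0 - z,\theta'} = \Delta_t(z)$. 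Since $x^*(\theta')\in\pP_t$ maximises $\ip{\cdot,\theta'}$ and $z = x^*(\theta'')$ maximises $\ip{\cdot,\theta''}$, we obtain $\Delta_t(z) = \ip{y_0 - z,\theta'} \le \ip{x^*(\theta')-z,\theta'} = \ip{x^*(\theta')-z,\theta''} + \ip{x^*(\theta')-z,\theta'-\theta''} \le 0 + \norm{x^*(\theta')-z}_{V_{t-1}^{-1}}\norm{\theta'-\theta''}_{V_{t-1}} \le 2\beta_{t-1}^{1/2}D_t$, using $\ip{x^*(\theta')-z,\theta''}\le 0$, $x^*(\theta'),z\in\pP_t$, and $\norm{\theta'-\theta''}_{V_{t-1}}\le 2\beta_{t-1}^{1/2}$; the degenerate case $\norm{y_0-z}_{V_{t-1}^{-1}}=0$ is handled directly using $\hat x^*_t\in\pP_t$.

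The remaining step mirrors \cref{lem:global-exp} but localised to $\pP_t$. Choose $x_1,x_2\in\pP_t$ with $\norm{x_1 - x_2}_{V_{t-1}^{-1}} = D_t$, set $w = x_1 - x_2$ and $v = V_{t-1}^{-1}w$, so $\ip{w,v} = \norm{w}_{V_{t-1}^{-1}}^2 = D_t^2$. The definition of $\alpha_t$ supplies $z^\star\in\pP_t$ with $\norm{A_{z^\star}^\T v}^2 \ge \ip{w,v}^2/\alpha_t = D_t^4/\alpha_t$, while the computation in \cref{lem:global-exp} gives $\norm{A_{z^\star}^\T v}^2 = \norm{A_{z^\star}^\T V_{t-1}^{-1}w}^2 \le \lambda_{\max}(A_{z^\star}^\T V_{t-1}^{-1}A_{z^\star})\,D_t^2 \le 2\,I_t(z^\star)\,D_t^2$. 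For $D_t>0$ these combine to $I_t(z^\star)\ge D_t^2/(2\alpha_t)$, and since $z^\star\in\pP_t$ the diameter bound applies to it, so using $\delta_{z^\star}$ as a feasible competitor in the information ratio,
\[
  \Psi_t(\mu_t) \;\le\; \frac{\Delta_t(z^\star)^2}{I_t(z^\star)} \;\le\; \frac{(2\beta_{t-1}^{1/2}D_t)^2}{D_t^2/(2\alpha_t)} \;=\; 8\,\alpha_t\,\beta_{t-1}\,.
\]
If $D_t = 0$, the diameter bound forces $\Delta_t(\hat x^*_t) = 0$ and hence $\Delta_t(\mu_t) = 0$ by \cref{lem:greedy}, so the round lies in $G_0$ and contributes nothing.

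The main obstacle is the diameter bound $\Delta_t(z)\le 2\beta_{t-1}^{1/2}D_t$ for all plausible $z$: this is exactly where local observability is used, ensuring both that $\pP_t$ is the relevant exploration set and that $\alpha_t$ stays finite, and its proof needs the plausible-maximiser construction together with the fact that every plausible action is itself a genuine maximiser of some parameter in $\cC_{t-1}$. Everything downstream is a localised replay of the globally-observable analysis combined with \cref{lem:ids-regret}.
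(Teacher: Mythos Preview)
Your proposal is correct and follows essentially the same approach as the paper: bound the information ratio by $8\alpha_t\beta_{t-1}$ via a single exploration action in $\pP_t$ (the content of \cref{lem:local-exp}), then invoke \cref{lem:ids-regret} with $\eta=0$. The only difference is in how you prove the diameter bound $\Delta_t(z)\le 2\beta_{t-1}^{1/2}D_t$ for $z\in\pP_t$: the paper first restricts the maximiser in $\Delta_t$ to $\pP_t$ (\cref{lem:gap-maximisers}) and then uses $\tilde\delta_t(z)\le\delta_t(z)=0$, whereas you construct $\theta'\in\cC_{t-1}$ explicitly and compare $z$ to $x^*(\theta')\in\pP_t$---a slightly different packaging of the same inequality.
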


We show in Appendix \ref{sec:lower} that on locally observable games with more than one Pareto optimal action, any algorithm suffers $\Omega(n^{1/2})$ regret in the worst case. To prove the upper bound, the first step is to construct an exploration distribution that is supported on the plausible maximizers $\pP_t$ and has a constant information ratio. 
Note that IDS is not restricted to playing actions within $\pP_t$, nor is it required to explicitly compute this set. In fact, actions that are not plausible maximizers can have a better trade-of between regret and information. 

\begin{lemma}\label{lem:local-exp}
  For locally observable games, there exists an exploration action in $\pP_t$ such that,
	\begin{align*}
	\forall x \in \pP_t\,,\quad \Delta_t(x)^2 \leq 8 \alpha_t \beta_{t-1} \max_{z\in \pP_t} I_t(z) \,.
	\end{align*}
\end{lemma}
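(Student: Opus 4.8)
The plan is to run the argument of Lemma~\ref{lem:global-exp}, the essential new ingredient being that for a plausibly optimal action $x\in\pP_t$ the gap estimate $\Delta_t(x)$ is controlled by confidence widths of differences of \emph{plausibly optimal} actions only. First I would rewrite $\Delta_t(x)$ as a worst-case regret over the confidence ellipsoid. Since $\cC_{t-1}=\{\theta':\norm{\hat\theta_{t-1}-\theta'}_{V_{t-1}}\le\beta_{t-1}^{1/2}\}$, the support-function identity $\ip{y-x,\hat\theta_{t-1}}+\beta_{t-1}^{1/2}\norm{y-x}_{V_{t-1}^{-1}}=\max_{\theta'\in\cC_{t-1}}\ip{y-x,\theta'}$, together with $\Delta_t(x)\le\max_{y\in\xX}\big(\ip{y-x,\hat\theta_{t-1}}+\beta_{t-1}^{1/2}\norm{y-x}_{V_{t-1}^{-1}}\big)$, gives
\[
\Delta_t(x)\le\max_{y\in\xX}\max_{\theta'\in\cC_{t-1}}\ip{y-x,\theta'}=\max_{\theta'\in\cC_{t-1}}\Big(\max_{y\in\xX}\ip{y,\theta'}-\ip{x,\theta'}\Big).
\]
Let $\theta^*\in\cC_{t-1}$ attain the outer maximum and put $y^*:=x^*(\theta^*)$; then $\Delta_t(x)\le\ip{y^*-x,\theta^*}$ and, crucially, $y^*\in\pP(\theta^*)\subseteq\pP(\cC_{t-1})=\pP_t$.

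Next I would bound $\ip{y^*-x,\theta^*}$ by a confidence width. Since $x\in\pP_t$, pick $\theta_1\in\cC_{t-1}$ with $x\in\pP(\theta_1)$, so $\ip{y^*-x,\theta_1}\le0$; then Cauchy--Schwarz and $\theta^*,\theta_1\in\cC_{t-1}$ yield
\[
\ip{y^*-x,\theta^*}\le\ip{y^*-x,\theta^*-\theta_1}\le\norm{y^*-x}_{V_{t-1}^{-1}}\norm{\theta^*-\theta_1}_{V_{t-1}}\le 2\beta_{t-1}^{1/2}\norm{y^*-x}_{V_{t-1}^{-1}}.
\]
With $w_t:=\argmax_{w\in\{x'-y' : x',y'\in\pP_t\}}\norm{w}_{V_{t-1}^{-1}}^2$ the most uncertain direction among differences of plausible maximizers, this gives $\Delta_t(x)^2\le 4\beta_{t-1}\norm{w_t}_{V_{t-1}^{-1}}^2$ for every $x\in\pP_t$.

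It then remains to bound $\norm{w_t}_{V_{t-1}^{-1}}^2$ in terms of $\max_{z\in\pP_t}I_t(z)$, which is where local observability enters and where the computation mirrors Lemma~\ref{lem:global-exp}. Exactly as there, $\norm{A_z^\T V_{t-1}^{-1}w_t}^2\le\lambda_{\max}(A_z^\T V_{t-1}^{-1}A_z)\,\norm{w_t}_{V_{t-1}^{-1}}^2\le 2I_t(z)\,\norm{w_t}_{V_{t-1}^{-1}}^2$ for every $z$. Set $v:=V_{t-1}^{-1}w_t$; if $\norm{w_t}_{V_{t-1}^{-1}}^2=\ip{w_t,v}=0$ the claim is trivial, so assume otherwise. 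Local observability applied to the convex set $\cC_{t-1}$ gives $w_t\in\laspan(A_z : z\in\pP_t)$, so $\alpha_t<\infty$ (Lemma~\ref{lem:alignment-bounds}) and, for the direction $v$, the minimum in the definition of $\alpha(\pP_t)$ is attained at some $z\in\pP_t$ with $A_z^\T v\neq0$ and $\ip{w_t,v}^2\le\alpha_t\norm{A_z^\T v}^2$. Combining,
\[
\norm{w_t}_{V_{t-1}^{-1}}^4=\ip{w_t,v}^2\le\alpha_t\norm{A_z^\T V_{t-1}^{-1}w_t}^2\le 2\alpha_t I_t(z)\,\norm{w_t}_{V_{t-1}^{-1}}^2,
\]
hence $\norm{w_t}_{V_{t-1}^{-1}}^2\le 2\alpha_t\max_{z\in\pP_t}I_t(z)$, and the claimed bound follows with the exploration action $\argmax_{z\in\pP_t}I_t(z)$.

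I expect the first step --- the reduction to plausibly optimal comparators --- to be the main obstacle: one has to recognise $\Delta_t(x)$ as the worst-case suboptimality of $x$ over $\cC_{t-1}$, whose optimal comparator $x^*(\theta^*)$ lies automatically in $\pP_t$, and then bound the resulting gap by a confidence width using that $x$ itself is optimal for some $\theta_1\in\cC_{t-1}$. Once $\Delta_t(x)$ is expressed through widths of $\pP_t$-differences, the rest is the linear algebra of Lemma~\ref{lem:global-exp} with $\alpha$ replaced by $\alpha_t=\alpha(\pP_t)$, the only care being to invoke local observability (and convexity of $\cC_{t-1}$) to guarantee $\alpha_t<\infty$ and that the minimisation over $z\in\pP_t$ is non-degenerate.
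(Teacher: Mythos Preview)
Your proof is correct and follows essentially the same route as the paper: both arguments establish $\Delta_t(x)^2\le 4\beta_{t-1}\max_{y\in\pP_t}\norm{y-x}_{V_{t-1}^{-1}}^2$ for $x\in\pP_t$ and then reuse the linear algebra of Lemma~\ref{lem:global-exp} with $\alpha$ replaced by $\alpha_t$. The only cosmetic difference is packaging: the paper first isolates the restriction of the maximiser to $\pP_t$ as a separate statement (Lemma~\ref{lem:gap-maximisers}) and then bounds the empirical gap via the auxiliary lower bound $\tilde\delta_t(x)\le\delta_t(x)=0$, whereas you obtain both facts in one sweep by writing $\Delta_t(x)$ as a support function over $\cC_{t-1}$ and applying Cauchy--Schwarz against $\theta^*-\theta_1$.
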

The complete proof is in Appendix \ref{app:proof-local-exp}. The argument shows that  for plausible maximizers $x \in \pP_t$, $\Delta_t(x)^2 \leq 4 \beta_{t-1} \max_{y \in \pP_t} \|y-x\|_{V_{t-1}^{-1}}^2$  and is otherwise similar to the proof of Lemma \ref{lem:global-exp}.

\begin{proof}\textbf{of Theorem \ref{thm:local-upper}}
	Let $z_t = \argmax_{x \in \pP_t} I_t(x)$ be the most informative action in the current plausible maximizer set $\pP_t$. By Lemma \ref{lem:local-exp},
	\begin{align*}
	\Psi_t(\mu_t) \leq \frac{\Delta_t(z_t)^2}{I_t(z_t)} \leq 8\alpha_t \beta_{t-1} \,.
	\end{align*}
	Invoking the general IDS bound (Lemma \ref{lem:ids-regret}) with $\eta =0$ completes the proof.
\end{proof}

\looseness=-1 The proof shows that randomization is not necessary to achieve a bounded information ratio in locally observable games. Deterministic IDS \citep{Kirschner2018}, which optimizes the ratio over a deterministic action choice $x_t = \argmin_{x \in \xX} \Psi_t(\delta_x)$, achieves the same upper bound with our analysis. Moreover, the bound shows how IDS adapts towards the current instance of the partial monitoring game. Consider a globally observable game where after some finite time $n_0$, the plausible maximizer sets $\pP_t$ are locally observable in sense that $\alpha(\pP_t) \leq \alpha_0, \forall \, t \geq n_0$. In this case the regret bound is $R_n \leq \tilde \oO(\alpha^{1/3} n_0^{3/2} + (\alpha_0 n)^{1/2})$. We have not yet identified non-artificial conditions that ensure this behavior, however.
The gold standard would be to prove finite-time, instance-dependent regret bounds with small constants. At present such results are more or less restricted to finite-armed bandits, however, and remain open even for linear bandits \citep{hao2019adaptive}.

\subsection{Smooth Convex Action Sets}\label{sub:convex}

The observability conditions are more ambiguous when $\conv(\xX)$ is not a polytope. Here we prove that when $\xX$ has strictly positive principle curvature, 
then IDS enjoys $\tilde O(\sqrt{n})$ regret on globally observable games. Curvature of the action set has been exploited in online learning \citep{HLG17} and bandits \citep{BCL17}.
The latter article considers the starved adversarial linear bandit, where the learner only observes the rewards when sampling an action from a pre-specified
distribution. They consider the case where the action set is the unit ball with
respect to $\norm{\cdot}_p$ and prove that for $p = 2$ one can obtain $O(n^{1/2})$ regret, but not for $p > 2$. This setting is close to a special case of linear partial monitoring (see Appendix \ref{app:starved}). 
Let $h_{\xX} : \RR^d \to \RR$ be the support function of $\xX$, which is defined by $h_{\xX}(u) = \sup_{x \in \xX} \ip{x, u}$.

\begin{theorem}\label{thm:convex}
Assume that $\xX$ is closed, convex, has a non-empty interior and that $h_{\xX}$ is twice differentiable.
Suppose furthermore that the game is globally observable according to \cref{eq:global} and has strictly positive principle curvature everywhere:
\begin{align*}
\kappa_{\circ} = \left(\max_{\eta \in \RR^d : \norm{\eta}_2 = 1} \lambda_{\max} (\nabla^2 h_{\xX}(\eta))\right)^{-1} > 0\,.
\end{align*}
Then, with probability at least $1 - \delta$, for any $n \geq 1$,
\begin{align*}
R_n \leq C\sqrt{\max\left(1, \frac{1}{\kappa_{\circ}}\right) \beta_n \left(\gamma_n + \log\tfrac{1}{\delta}\right)n} + 4 \log\left(\frac{4n + 4}{\delta}\right)\,, 
\end{align*}
where $C$ is a constant depending only on $(A_z : z \in \xX)$.
\end{theorem}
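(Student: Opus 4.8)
The plan is to reduce the claim to the analysis already carried out for locally observable games. Since IDS minimises $\Psi_t$ over all of $\sP(\xX)$, we have $\Psi_t(\mu_t) \leq \inf_{\nu \in \sP(\pP_t)} \Psi_t(\nu)$ with $\pP_t = \pP(\cC_{t-1})$, and the bound $\Delta_t(x)^2 \leq 4\beta_{t-1}\max_{y\in\pP_t}\|x-y\|^2_{V_{t-1}^{-1}}$ from the discussion after \cref{lem:local-exp} holds for every $x \in \pP_t$ irrespective of observability. Running the argument underlying \cref{lem:local-exp} (pick the most uncertain direction inside $\pP_t$ and convert it to information gain, which only requires $x-y\in\laspan(A_z:z\in\pP_t)$) and then the argument of \cref{thm:local-upper} verbatim yields $\Psi_t(\mu_t) \leq 8\,\alpha(\pP_t)\,\beta_{t-1}$ as soon as $\alpha(\pP_t)<\infty$; plugging this into \cref{lem:ids-regret} with $\eta = 0$, using $\beta_t \leq \beta_n$ and \cref{lem:total-info-gain}, then gives the stated bound. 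So the whole theorem reduces to the purely geometric inequality
\begin{align*}
\alpha(\pP(\cC)) \;\leq\; C(A)\max\!\Big(1,\tfrac{1}{\kappa_{\circ}}\Big)\qquad\text{for every convex }\cC\subseteq\RR^d\,,
\end{align*}
where $C(A)$ may depend on $(A_z : z\in\xX)$ only --- concretely on the global alignment constant $\alpha = \alpha(\xX) < \infty$, on the modulus of continuity of $x \mapsto A_x$, and on the inradius of $\xX$.

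To prove this inequality I would use two consequences of the hypotheses. First, twice differentiability of $h_\xX$ rules out corners, and $\nabla^2 h_\xX(\eta) \preceq \tfrac1{\kappa_\circ}\eye_d$ on $\{\|\eta\|_2 = 1\}$ says exactly that $\nabla h_\xX$ is $\tfrac1{\kappa_\circ}$-Lipschitz on the sphere and that $\xX$ is uniformly convex of modulus $\kappa_\circ$: at every exposed point $x_0 = \nabla h_\xX(\eta_0)$ one has $\ip{x_0 - y,\eta_0} \geq \tfrac{\kappa_\circ}{2}\|x_0-y\|_2^2$ for all $y\in\xX$ (the inside rolling-ball property). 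Second, $\pP(\cC)$ is always the image under $\nabla h_\xX$ of $\cC\setminus\{0\}$, hence a connected patch of exposed points of $\partial\xX$; the degenerate case $0\in\cC$, where $\pP(\cC)\supseteq\pP(0)=\xX$ and so $\alpha(\pP(\cC))=\alpha$, is absorbed into $C(A)$. Fixing $v \in \RR^d$ and $x,y\in\pP(\cC)$, global observability of $\xX$ supplies an action $z_0\in\xX$ with $\|A_{z_0}^\T v\|^2 \geq \ip{x-y,v}^2/\alpha$; the uniform convexity is used to replace $z_0$ by an action $z\in\pP(\cC)$ lying on the arc of exposed points joining $x$ and $y$ that is still well aligned with $v$, at the cost of a factor $\max(1,1/\kappa_\circ)$ --- a patch of boundary of diameter $\delta$ around $x_0$ bulges out of its tangent hyperplane by at least a constant multiple of $\kappa_\circ\delta^2$, so $\laspan(A_z : z\in\pP(\cC))$ captures, up to this factor, the same directions as $\laspan(A_z : z\in\xX)$. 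This is precisely the desired bound on $\alpha(\pP(\cC))$.

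The hard part is this ``transport'' step: making quantitative the assertion that a curved patch of the boundary is as informative as the whole body, uniformly over all convex $\cC$. One must control the worst directions $v$, namely those nearly normal to $\partial\xX$ along the patch, where both $\ip{x-y,v}$ and $\sup_{z\in\pP(\cC)}\|A_z^\T v\|$ are of second order in $\delta$; showing their ratio is $O(1/\kappa_\circ)$ rather than unbounded uses the uniform-convexity lower bound crucially and is where the $\kappa_\circ$-dependence is born. One also has to handle $\cC$ near the origin, where $\pP(\cC)$ is a large piece of $\partial\xX$ and the homogeneity singularity of $\nabla^2 h_\xX$ at $0$ must be sidestepped by working with the first-order rolling-ball inequality (which is uniform) rather than the Hessian, and --- unlike for finite locally observable games, which have only finitely many sets $\pP(\cC)$ --- the bound has to be obtained uniformly by a compactness argument over all convex $\cC$, not by enumeration. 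The remaining steps, assembling the information-ratio bound and invoking \cref{lem:ids-regret,lem:total-info-gain}, are routine.
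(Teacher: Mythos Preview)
Your reduction to uniform local observability is the wrong route: the claimed inequality $\alpha(\pP(\cC))\leq C(A)\max(1,1/\kappa_\circ)$ is simply false under the theorem's hypotheses, so the ``transport'' step you flag as hard is in fact impossible.

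Here is a counterexample. Take $\xX$ to be the unit disk in $\RR^2$, so $h_\xX(u)=\|u\|$ and $\kappa_\circ=1$. Define $A_x\in\RR^2$ (so $m=1$) by $A_x=e_1$ whenever $x_2\geq 0$ and let $A_x$ vary continuously to $e_2$ on the lower half of the disk. Then $\laspan(A_z:z\in\xX)=\RR^2$, so the game is globally observable and all hypotheses hold. Now let $\cC$ be a narrow cone around $e_2$; then $\pP(\cC)$ is a small arc about the north pole $(0,1)$, entirely in $\{x_2\geq 0\}$, so $A_z=e_1$ for every $z\in\pP(\cC)$. Taking $x=(0,1)$, $y=(\sin\epsilon,\cos\epsilon)\in\pP(\cC)$ and $v=e_2$ gives $\ip{x-y,v}=1-\cos\epsilon>0$ while $\|A_z^\T v\|=0$ for all $z\in\pP(\cC)$; hence $\alpha(\pP(\cC))=\infty$. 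Curvature does not make the game locally observable, not even in the uniform sense you need.

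The paper's proof does something quite different and never restricts exploration to $\pP_t$. The role of curvature is to control the \emph{greedy} action: from $\nabla^2 h_\xX\preceq\kappa_\circ^{-1}\eye$ one shows
\[
\Delta_t(\hat x_t^*)\;\leq\;\max_{\phi\in\cC_{t-1}}\ip{\nabla h_\xX(\phi)-\nabla h_\xX(\hat\theta_{t-1}),\phi}\;\lesssim\;\frac{\diam(\cC_{t-1})^2}{\kappa_\circ\,\min_{\phi\in\cC_{t-1}}\|\phi\|}\,,
\]
i.e.\ the gap of the greedy action is \emph{quadratic} in the confidence-set diameter, not linear. Meanwhile global observability plus $\interior(\xX)\neq\emptyset$ forces $\laspan(A_z:z\in\xX)=\RR^d$, so the most informative action $z_t\in\xX$ (not in $\pP_t$!) satisfies $\diam(\cC_{t-1})^2\leq c\,\beta_{t-1}I_t(z_t)$ for a constant $c$ depending only on $(A_z)$. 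One then mixes $\hat x_t^*$ with $z_t$ with probability $p$ of order $\diam(\cC_{t-1})^2$; the exploratory action costs at most $\diam(\xX)\max_{\phi\in\cC_{t-1}}\|\phi\|$ in regret, and a short calculation (with a separate case when $\cC_{t-1}$ is so wide that $\min_\phi\|\phi\|$ is comparable to $\diam(\cC_{t-1})$) yields $\Psi_t(\mu_t)\leq C\beta_{t-1}\max(1,1/\kappa_\circ)$. The point is that curvature buys a quadratic gap for greedy, which in turn lets you afford global (potentially high-regret) exploration at a rate that keeps the information ratio bounded---it does \emph{not} manufacture informative actions inside $\pP_t$.
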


The proof is given in \cref{app:convex}.
The key argument shows that $\Delta_t$ applied to the empirically optimal action scales like the square of the diameter of the confidence set. This
compares favorably with the case without curvature, where the error is about linear in the diameter of the confidence set.

\subsection{Contextual Partial Monitoring Games}\label{sub:context}
The contextual bandit problem is a well known extension of the bandit setting where the learner receives a context before choosing the action \citep{Woo79,LZ08}.
We introduce a novel contextual variant of linear partial monitoring, that strictly generalizes the linear contextual bandit setting.
Let $\zZ$ be a compact set of contexts. Each context $z \in \zZ$ defines a partial monitoring game with action set $\xX_z$ and the observation operators $\{A_x^z\}$, where the map $(x,z) \mapsto A_x^z$ is assumed to be continuous. At time $t$, the learner receives a context $z_t \in \zZ$ and chooses an action $x_t \in \xX_{z_t}$. The reward is $\ip{x_t, \theta}$ and the observation is $\a_t = A_{x_t}^{z_t\T} \theta + \epsilon_t$ where the parameter $\theta$ is the same in every context. The objective is to compete with the best in-hindsight policy that maps context to actions.  Regret is defined with respect to the context-dependent solution $x_t^* = \argmax_{x \in \xX_{z_t}} \ip{x,\theta}$:
\begin{align*}
	R_n = \sum_{t=1}^n \ip{x_t^* - x_t, \theta}\,.
\end{align*}
The regret of the learner depends on the sequence of contexts observed and the corresponding sequence of partial monitoring games which share the common parameter $\theta$. 
All our notions extend with the contextual argument, 
\begin{align*}
\Delta_t(x,z) = \max_{y \in \xX_z} \ip{y-x,\hat{\theta}_t} + \beta_{t-1}^{1/2} \|y-x\|_{V_{t-1}^{-1}}\,, \quad I_t(x,z) = \log \det(\eye + A_x^{z\T} V_{t-1}^{-1} A_x^z)\,.
\end{align*}
\emph{Conditional IDS} is the policy that minimizes the $\mu_t = \argmin_{\mu \in \sP(\xX)} \Psi(\mu,z_t)$ conditioned on the observed context. The next result  extends the regret guarantees for locally and globally observable games to the contextual setting by making strong assumptions on the sequence of games defined by the context. We refer to  Appendix \ref{app:conditional-ids} for our formal result.
\begin{corollary}\textit{(Informal)} If the sequence of games defined by the observed contexts $z_1, \dots, z_n$ are globally observable, conditional IDS achieves $R_n \leq \oO\big(n^{2/3} (\alpha \beta_n (\gamma_n + \log \tfrac{1}{\delta}))^{1/3}\big)$ regret with high probability. If the sequence of games is uniformly locally observable, then conditional IDS achieves $R_n \leq \oO\left((\alpha_0 \beta_n (\gamma_n + \log\frac{1}{\delta}) n)^{1/2}\right)$.
\end{corollary}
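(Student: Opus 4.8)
The plan is to reduce the contextual problem to a round-by-round application of the arguments behind \cref{thm:global-upper} and \cref{thm:local-upper}, after first establishing a contextual version of the general regret bound \cref{lem:ids-regret}. For the latter, observe that with probability at least $1-\delta$ the self-normalized concentration inequality of \citet{Abbasi2011improved} still yields $\theta \in \cC_t$ for all $t$: the least-squares estimator is built from the pairs $(A_{x_s}^{z_s}, \a_s)$ and the $\rho$-subgaussian noise assumption is unchanged, so nothing in the confidence-set construction cares about the context process, and on this event $\ip{x_t^* - x_t,\theta} \le \Delta_t(x_t, z_t)$. Writing $A_s = A_{x_s}^{z_s}$ and $V_t = \sum_{s\le t} A_s A_s^\T + \eye_d$, \cref{lem:total-info-gain} applies verbatim (since $\|A_x^z\|_2 \le 1$), giving $\gamma_n \le d\log(1 + nm/d)$ independently of the contexts and of the sizes of the action sets. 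Splitting $\sum_t \Delta_t(\mu_t, z_t)$ at a threshold $\eta$, applying Cauchy--Schwarz against $\sum_t I_t(\mu_t, z_t)$, and concentrating $R_n$ on its conditional expectation via Freedman's inequality (which is oblivious to the context process) reproduces \cref{lem:ids-regret} with $\Psi_t(\mu_t)$ replaced by the conditional information ratio $\Psi_t(\mu_t, z_t)$.

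It then remains to bound $\Psi_t(\mu_t, z_t)$ for each round. The key point is that \cref{lem:global-exp}, \cref{lem:greedy} and \cref{lem:local-exp} use only the shared quantities $\hat\theta_{t-1}$ and $V_{t-1}$ together with the geometry of the game currently in play; their linear-algebra core needs nothing about $V_{t-1}$ beyond $V_{t-1}^{-1} \preceq \eye_d$ (which holds since $V_{t-1}\succeq\eye_d$), and the alignment constant is a pure property of $(\xX_{z_t}, \{A_x^{z_t}\})$. Running the argument of \cref{lem:global-exp} at the current context --- with $w_t$ the most uncertain direction in $\{x-y : x,y \in \xX_{z_t}\}$ under $V_{t-1}^{-1}$ and the informative action taken in $\xX_{z_t}$ --- gives $\Delta_t(\hat x_t^*, z_t)^2 \le 2\,\alpha(\xX_{z_t})\,\beta_{t-1}\max_{x' \in \xX_{z_t}} I_t(x', z_t)$, where $\alpha(\xX_{z_t})$ is the worst-case alignment of the game at $z_t$. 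Combining with the conditional form of \cref{lem:greedy} and the mixture-of-greedy-and-informative computation from the proof of \cref{thm:global-upper} gives $\Psi_t(\mu_t, z_t) \le 16\,\alpha(\xX_{z_t})\,\beta_{t-1}/\Delta_t(\mu_t, z_t)$. If every game in the observed sequence is globally observable then $\alpha(\xX_{z_t}) \le \alpha := \max_{t\le n}\alpha(\xX_{z_t}) < \infty$ by \cref{lem:alignment-bounds}; plugging into the contextual \cref{lem:ids-regret} and optimizing $\eta$ exactly as in \cref{thm:global-upper} yields $R_n \le \oO\big(n^{2/3}(\alpha\beta_n(\gamma_n + \log\tfrac1\delta))^{1/3}\big)$.

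For the locally observable case, let $\pP_{z}(\cC_{t-1}) = \{\argmax_{x\in\xX_z}\ip{x,\theta'} : \theta'\in\cC_{t-1}\}$ be the plausible maximizers for context $z$. On the event $\theta\in\cC_{t-1}$ both $x_t^*$ and the greedy action lie in $\pP_{z_t}(\cC_{t-1})$, and --- as in the non-contextual proof --- the maximizing $y$ in $\Delta_t(x,z_t)$ may be taken inside $\pP_{z_t}(\cC_{t-1})$ whenever $x\in\pP_{z_t}(\cC_{t-1})$, so that $\Delta_t(x,z_t)^2 \le 4\beta_{t-1}\max_{y\in\pP_{z_t}(\cC_{t-1})}\|y-x\|_{V_{t-1}^{-1}}^2$. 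Repeating the argument of \cref{lem:local-exp} with the shared $V_{t-1}$ produces an exploration action in $\pP_{z_t}(\cC_{t-1})$ witnessing $\Delta_t(x,z_t)^2 \le 8\,\alpha(\pP_{z_t}(\cC_{t-1}))\,\beta_{t-1}\max_{x'\in\pP_{z_t}(\cC_{t-1})}I_t(x',z_t)$ for all plausible $x$, hence $\Psi_t(\mu_t, z_t) \le 8\,\alpha(\pP_{z_t}(\cC_{t-1}))\,\beta_{t-1}$. Uniform local observability of every game in the sequence means $\alpha(\pP_z(\cC)) \le \alpha_0$ for every observed context $z$ and every convex $\cC$, so the contextual \cref{lem:ids-regret} with $\eta = 0$ gives $R_n \le \oO\big((\alpha_0\beta_n(\gamma_n + \log\tfrac1\delta)n)^{1/2}\big)$.

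The step I expect to require the most care --- and the reason the corollary is not entirely immediate --- is precisely that $V_{t-1}$ aggregates information across different contexts, so one must verify that the exploration lemmas survive being invoked with a covariance matrix that ``knows more'' than the game currently being played. As indicated above this is harmless because those lemmas never exploit structure of $V_{t-1}$ beyond $V_{t-1}\succeq\eye_d$; the only genuinely new bookkeeping is (i) measurability of the map $z_t\mapsto\mu_t$, which follows from continuity of $(x,z)\mapsto A_x^z$ and compactness of the action sets, and (ii) checking that the $\log\det$ potential still telescopes when the observation operator changes from round to round, which it does since $\gamma_n$ depends only on the sequence $A_1,\dots,A_n$ actually used.
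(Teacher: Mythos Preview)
Your proposal is correct and follows essentially the same approach as the paper: the paper's own proof is a one-line sketch stating that the corollary ``follows along the lines of our main results, \cref{thm:global-upper} and \cref{thm:local-upper}'' because the assumptions bound the information ratio for every observed context, and you have simply spelled out that reduction in detail. Your extra care about $V_{t-1}$ aggregating across contexts and about the telescoping of the $\log\det$ potential is sound and more explicit than what the paper writes.
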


Perhaps surprisingly, the contextual case allows for much weaker conditions under which no-regret is possible if the learner exploits the distribution of contexts. 
Here we study the case where the context follows a known distribution $\nu \in \sP(\zZ)$; the case where the distribution is unknown or the learner tries to adapt her behaviour towards an arbitrary sequence is left as an interesting direction for future work. It is instructive to think about some examples:
\begin{itemize}
	\itemsep0pt
	\item An extreme case is where for some $z \in \zZ$ the learner obtains no information ($A_x^z = 0$ for all $x \in \xX_z$). In such rounds the only sensible choice is the greedy action. Exploration needs to happen in rounds where information is available and needs to be sufficiently diverse to account for rounds where the learner is forced to play greedily. Note that while there can be vanishing information gain in {\em some} rounds, the {\em expected} information gain, that takes the distribution $\nu$ over the context into account, is non-zero.
	\item Since also the greedy action depends on the random context, there can be cases where the learner incurs sufficient exploration by playing mostly greedy. This effect has been studied in the bandit literature before \citep{bastani2017mostly,hao2019adaptive}.
\end{itemize}
Conditional IDS does not depend on the distribution $\nu$ and it is easy to see that it can behave suboptimally in both examples. To include the randomness of the context within the IDS framework, consider a joint distribution $\xi \in \sP(\xX \times \cC)$ over context and actions with marginal $\xi_z \in \sP(\zZ)$. As before, $\Delta_t(\xi)$ and $I_t(\xi)$ extend linearly. At time $t$, \emph{contextual IDS} computes a distribution $\xi$ with marginal $\xi_z = \nu$, that minimizes the joint ratio,
\begin{align*}
\xi_t = \argmin_{\xi \in \sP(\xX \times \zZ),\,\xi_z = \nu} \Psi(\xi)\,.
\end{align*}
\looseness=-1 The action is sampled from $x_t \sim \xi_t(x | z_t)$ after observing $z_t$. In the joint minimization of the information ratio the contextual distribution $\nu$ contributes to exploration and a smaller information ratio. The intuition is that to estimate along a direction $x-y$ in a contextual action set $x,y \in \xX_z$, the learner can wait for a different context $z'$ to be realized where $x-y$ can easily be estimated and at low cost. This leads to the following condition that defines \emph{globally observable contextual games}:
\begin{align}
\forall z \in \zZ \text{ and } x,y \in \xX_z \Rightarrow \exists z' \in \zZ \text{ s.t. } x-y \in \laspan(A_x : x \in \xX_{z'}) \text{ .}\label{eq:context-globally}
\end{align}
The regret bound depends on the probability that a context occurs where estimation is possible. The \emph{expected worst-case alignment} $\alpha(\nu)$ is defined in Appendix \ref{app:contextual-ids}, Eq.\ \eqref{eq:exp-alignment}. It satisfies the intuitive upper bound $\alpha(\nu) \leq \EE_{\nu}[\alpha(z)]$ and recovers the previous definition for Dirac delta distributions (Lemma \ref{lem:contextual-aligment-bounds}, Appendix \ref{app:contextual-ids}). For finite games with finite context set, it further holds that
\begin{align*} 
\alpha(\nu) \leq  \max_{v\in \RR^d} \max_{z \in \zZ} \max_{x,y \in \xX_z} \min_{z' \in \zZ} \min_{u \in \xX_{z'}} \frac{\ip{v,x-y}^2}{\nu(z')\|A_{u}^{z'\T} v\|^2}\,,
\end{align*}
thus it suffices that a direction $x - y$ in $\xX_z$ can be estimated under {\em some} context $z'\in \zZ$ that appears with non-zero probability $\nu(z')$. The next result quantifies the rate in globally observable games. 
\begin{theorem}\label{thm:regret-contextual}
	For globally observable contextual games with bounded expected worst-case alignment $\alpha(\nu)$, for any $n \geq 1$, the regret is bounded with probability at least $1-\delta$,
	\begin{align*}
		R_n \leq C n^{2/3} \left(\alpha(\nu)\beta_n (\gamma_n + \log \tfrac{1}{\delta})\right)^{1/3} + 4\log\left(\frac{4n+4}{\delta}\right)\,.
	\end{align*}
\end{theorem}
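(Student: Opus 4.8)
The plan is to mirror the proof of Theorem~\ref{thm:global-upper}, the only genuinely new ingredient being that the exploration distribution is now allowed to place mass on \emph{other} contexts, which is precisely what the relaxed observability condition~\eqref{eq:context-globally} and the expected alignment $\alpha(\nu)$ are designed to capture. We always work at the start of round $t$, where $\hat\theta_{t-1}$ and $V_{t-1}$ are $\fF_{t-1}$-measurable; crucially, contextual IDS minimizes $\Psi_t(\xi)=\Delta_t(\xi)^2/I_t(\xi)$ over the \emph{convex} set of joint distributions $\xi$ with context marginal $\nu$, and any mixture of two such distributions again has marginal $\nu$, so the comparison $\Psi_t(\xi_t)\le\Psi_t(\xi)$ applies to any reference $\xi$ of this form. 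Since the optimization producing $\xi_t$ does not see $z_t$, while $z_t\sim\nu$ is independent of $\fF_{t-1}$, the conditional expected per-round regret is $\EE[\ip{x_t^*-x_t,\theta}\mid\fF_{t-1}]\le\Delta_t(\xi_t)$ and the conditional expected information gain is $I_t(\xi_t)$; hence the Cauchy--Schwarz plus Freedman argument behind Lemma~\ref{lem:ids-regret} carries over verbatim, with $\gamma_n\le d\log(1+nm/d)$ from Lemma~\ref{lem:total-info-gain} again independent of the number of actions and contexts.

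The heart of the argument is the contextual analog of Lemma~\ref{lem:global-exp}. Let $\hat\pi_t(z)=\argmax_{x\in\xX_z}\ip{x,\hat\theta_{t-1}}$ be the conditional-greedy policy and let $\xi^g_t$ denote the joint distribution with marginal $\nu$ and conditional $\delta_{\hat\pi_t(z)}$. Let $\bar z_t=\argmax_{z\in\zZ}\Delta_t(\hat\pi_t(z),z)$ (attained by the continuity and compactness assumptions) and let $w_t=x-y$ for the pair $x,y\in\xX_{\bar z_t}$ maximizing $\|x-y\|_{V_{t-1}^{-1}}$. Exactly as in Lemma~\ref{lem:global-exp}, optimality of $\hat\pi_t(\bar z_t)$ for $\hat\theta_{t-1}$ kills the linear term, giving $\Delta_t(\xi^g_t)\le\max_z\Delta_t(\hat\pi_t(z),z)\le\beta_{t-1}^{1/2}\|w_t\|_{V_{t-1}^{-1}}$, hence $\Delta_t(\xi^g_t)^2\le\beta_{t-1}\|w_t\|_{V_{t-1}^{-1}}^2$. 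Now apply the definition of $\alpha(\nu)$ in~\eqref{eq:exp-alignment} with the direction $v=V_{t-1}^{-1}w_t$ and the pair $x,y\in\xX_{\bar z_t}$ (note $w_t\in\laspan(A^{z'}_u:u\in\xX_{z'})$ for some $z'$ by~\eqref{eq:context-globally}): there is an exploration policy $u_t(\cdot)$, inducing a joint distribution $\xi^e_t$ with marginal $\nu$ and conditional $\delta_{u_t(z')}$, such that $\EE_{z'\sim\nu}\|A^{z'\T}_{u_t(z')}V_{t-1}^{-1}w_t\|^2\ge\ip{v,x-y}^2/\alpha(\nu)=\|w_t\|_{V_{t-1}^{-1}}^4/\alpha(\nu)$. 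Combining with the elementary bound $I_t(u,z)\ge\tfrac12\|A^{z\T}_uV_{t-1}^{-1}w_t\|^2/\|w_t\|_{V_{t-1}^{-1}}^2$ (the same linear-algebra chain as in Lemma~\ref{lem:global-exp}, using $a\le2\log(1+a)$ for $a\in[0,1]$ and $\|A\|_2\le1$) yields $I_t(\xi^e_t)\ge\|w_t\|_{V_{t-1}^{-1}}^2/(2\alpha(\nu))$, and therefore $\Delta_t(\xi^g_t)^2\le2\alpha(\nu)\beta_{t-1}\,I_t(\xi^e_t)$.

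From here the endgame is identical to Theorem~\ref{thm:global-upper}. For $p\in[0,1]$ the mixture $\xi(p)=(1-p)\xi^g_t+p\,\xi^e_t$ has marginal $\nu$, and using the truncation $\Delta_t\le1$, $I_t\ge0$ and the previous display, $\Psi_t(\xi_t)\le\Psi_t(\xi(p))\le2\alpha(\nu)\beta_{t-1}\bigl((1-p)\Delta_t(\xi^g_t)+p\bigr)^2/\bigl(p\,\Delta_t(\xi^g_t)^2\bigr)$; choosing $p=\Delta_t(\xi^g_t)$ gives $\Psi_t(\xi_t)\le8\alpha(\nu)\beta_{t-1}/\Delta_t(\xi^g_t)$. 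The contextual version of Lemma~\ref{lem:greedy} — contextual IDS never plays far from the conditional-greedy policy, i.e.\ $\Delta_t(\xi_t)\le2\Delta_t(\xi^g_t)$, which follows as in Appendix~\ref{app:proof-greedy} by perturbing within the feasible set of marginal-$\nu$ distributions and using convexity of $\Psi_t$ — upgrades this to $\Psi_t(\xi_t)\le16\,\alpha(\nu)\beta_{t-1}/\Delta_t(\xi_t)$. Plugging into the contextual version of Lemma~\ref{lem:ids-regret}: on $[n]\setminus G_\eta$ we have $\Delta_t(\xi_t)>\eta$, so $\Psi_t(\xi_t)\le16\alpha(\nu)\beta_n/\eta$ (using that $\beta_t$ is non-decreasing), whence $R_n\le C\inf_{\eta\ge0}\bigl[n\eta+\sqrt{16n\,\alpha(\nu)\beta_n(\gamma_n+\log\tfrac{1}{\delta})/\eta}\bigr]+4\log\bigl(\tfrac{4n+4}{\delta}\bigr)$, and optimizing over $\eta$ gives the claimed $n^{2/3}(\alpha(\nu)\beta_n(\gamma_n+\log\tfrac{1}{\delta}))^{1/3}$ bound.

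The main obstacle, and the step I would be most careful with, is the exploration construction: one must exhibit $\xi^e_t$ that (i) respects the hard marginal constraint $\xi_z=\nu$, (ii) uses only $\fF_{t-1}$-measurable information — which is why exploration must be targeted at the worst-case context $\bar z_t$ and its most uncertain direction $w_t$ rather than at the realized $z_t$, since $\xi_t$ is committed before $z_t$ is revealed — and (iii) has its information gain correctly lower-bounded by $\alpha(\nu)$ as defined in~\eqref{eq:exp-alignment}, which is itself a minimization over exploration distributions rather than single context/action pairs; matching that definition against the finite-game bound $\alpha(\nu)\le\max_v\max_z\max_{x,y\in\xX_z}\min_{z'}\min_u\ip{v,x-y}^2/(\nu(z')\|A^{z'\T}_uv\|^2)$ and against $\alpha(\nu)\le\EE_\nu[\alpha(z)]$ is where the bookkeeping concentrates. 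The remaining pieces — attainment and convexity of the constrained information ratio, the contextual greedy lemma, and the contextual regret decomposition with its Freedman-type concentration — are routine adaptations of the statements already in the paper. (Note this plan only addresses the globally observable case; the uniformly-locally-observable contextual bound would be obtained analogously by restricting to plausible maximizers as in Theorem~\ref{thm:local-upper}.)
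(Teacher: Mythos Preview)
Your overall architecture is correct and matches the paper: mix the conditional-greedy policy with an exploration policy, bound the information ratio of the mixture, invoke the contextual analog of Lemma~\ref{lem:greedy}, and finish with Lemma~\ref{lem:ids-regret}. The place where your argument does not go through is precisely the step you flagged yourself.

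The issue is the passage from $\|w_t\|_{V_{t-1}^{-1}}^2$ to $\alpha(\nu)$. You bound $\Delta_t(\xi^g_t)$ by the \emph{worst} context, pick a single direction $w_t\in\xX_{\bar z_t}-\xX_{\bar z_t}$, and then try to read off the exploration guarantee from~\eqref{eq:exp-alignment} with ``$v=V_{t-1}^{-1}w_t$ and $x,y\in\xX_{\bar z_t}$''. But in~\eqref{eq:exp-alignment} the arguments are tuples indexed by context: $v\in(\RR^d)^{\zZ}$ and $x,y\in\times_{z}\xX_z$, and the numerator is $\EE_\nu[\ip{v_z,x_z-y_z}^{1/2}]^2$, not $\ip{v,x-y}^2$. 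To make your single direction $w_t$ appear you would need $x_z-y_z=w_t$ for every $z$, which is generally impossible since $w_t$ need not lie in $\xX_z-\xX_z$ for $z\neq\bar z_t$. If instead you zero out the other coordinates you pick up an extra factor $\nu(\bar z_t)^2$ in the numerator, which spoils the bound. What your argument actually proves is a bound in terms of the simplified quantity $\max_{v,z}\max_{x,y\in\xX_z}\min_{z',u}\ip{v,x-y}^2/(\nu(z')\|A_u^{z'\T}v\|^2)$, which by Lemma~\ref{lem:contextual-aligment-bounds}(ii) is an \emph{upper} bound on $\alpha(\nu)$---the wrong direction for concluding the theorem as stated.

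The paper's remedy is to keep the expectation over $\nu$ throughout rather than passing to the worst context. One bounds $\Delta_t(\xi^g_t)=\EE_\nu[\Delta_t(\hat\pi_t(z),z)]\le\beta_{t-1}^{1/2}\EE_\nu[\|w_t(z)\|_{V_{t-1}^{-1}}]$ with $w_t(\cdot)$ now a \emph{function} of the context, and then applies Lemma~\ref{lem:explore-context}, which is the contextual analog of your linear-algebra step and is stated precisely as $\EE_\nu[\|w\|_{V_{t-1}^{-1}}]^2\le 2\alpha(\cC,\nu)\max_{u}I_t(u,\nu)$. The numerator $\EE_\nu[\ip{v_z,x_z-y_z}^{1/2}]^2$ in~\eqref{eq:exp-alignment} is engineered exactly so that plugging $v_z=V_{t-1}^{-1}w_t(z)$ and $x_z-y_z=w_t(z)$ reproduces $\EE_\nu[\|w_t\|_{V_{t-1}^{-1}}]^2$. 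After this, your mixture-and-optimize step and the contextual Lemma~\ref{lem:greedy} go through verbatim.
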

All proofs and details for this result can be found in Appendix \ref{app:contextual-ids} and the analogous result for the locally observable case is in Appendix \ref{app:contextual-local}.

\section{Classification of Finite Games}

The upper bounds show that for globally observable games the regret is $\tilde O(n^{2/3})$, while for locally observable games it is $\tilde O(n^{1/2})$.
Of course, if there is only one Pareto optimal action, then the regret vanishes for any algorithm that just plays this action.
The classification theorem follows by proving that for games that are not globally observable, the regret is linear in the worst case, that for globally observable games
that are not locally observable the regret is $\Omega(n^{2/3})$ and that for locally observable games with more than one Pareto optimal action it is $\Omega(n^{1/2})$.
These lower bounds are supplied in \cref{sec:lower}. For simplicity, our results are for the \emph{expected minimax regret}, which is 
\begin{align*}
	R_n^* = \inf_{\pi} \sup_\theta \EE[R_n(\pi,\theta)] \,.
\end{align*}
The infimum is over policies $\pi = (\pi_t)_{t=1}^n$ defined by a sequence of $\fF_t$-measurable random variables on $\xX$ and $R_n(\pi,\theta) = \sum_{t=1}^n \ip{x^* - x_t,\theta}$ is the regret for parameter $\theta \in \RR^d$ when the actions are sampled from the policy $\pi$.

\begin{theorem}
The minimax regret for any finite linear partial monitoring game satisfies
\begin{align*}
R_n^* = 
\begin{cases}
0 & \text{if there is only one Pareto optimal action,} \\
\tilde \Theta(n^{1/2}) & \text{for locally observable games,} \\
\tilde \Theta(n^{2/3}) & \text{for globally observable games,} \\
\Omega(n) & \text{otherwise}\,.
\end{cases}
\end{align*}
\end{theorem}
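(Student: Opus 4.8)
The four cases follow by combining the upper bounds already established with three matching lower bounds, the latter being exactly the content deferred to \cref{sec:lower}. If there is a unique Pareto optimal action $x^\star$, the stationary policy playing $x^\star$ has $R_n = 0$. A finite locally observable game has only finitely many plausible‑maximizer sets $\pP(\cC)$, so $\alpha_0 := \sup_{\cC}\alpha(\pP(\cC)) < \infty$; together with $\beta_n,\gamma_n = \tilde\oO(1)$ (Lemma~\ref{lem:total-info-gain}) and Lemma~\ref{lem:alignment-bounds}, Theorem~\ref{thm:local-upper} yields $R_n^* = \tilde\oO(n^{1/2})$. A finite globally observable game has $\alpha = \alpha(\xX) < \infty$ (Lemma~\ref{lem:alignment-bounds}), so Theorem~\ref{thm:global-upper} gives $R_n^* = \tilde\oO(n^{2/3})$. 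Reading the cases top to bottom (so that the locally observable line means ``locally observable with at least two Pareto optimal actions'' and the globally observable line means ``globally but not locally observable''), it remains to prove $R_n^* = \Omega(n^{1/2})$, $\Omega(n^{2/3})$ and $\Omega(n)$ in the respective regimes. All three use the standard two‑point recipe: build parameters with nearly indistinguishable observation laws but different optimal actions, and trade the cost of discrimination against the cost of committing wrongly.

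\textbf{Not globally observable $\Rightarrow R_n^* = \Omega(n)$.}
If $x - y \in \laspan(A_z : z \in \xX)$ for every pair of extreme points of $\conv(\xX)$, then expanding an arbitrary action as a convex combination of extreme points puts every difference in the span, contradicting \eqref{eq:global}; so some pair of extreme points has difference outside the span. Traversing an edge path of the polytope $\conv(\xX)$ between them, some \emph{adjacent} pair of extreme points $a,b$ still has $a - b \notin \laspan(A_z : z \in \xX)$. Let $v\neq 0$ be the component of $a-b$ orthogonal to that span, so $A_z^\T v = 0$ for all $z\in\xX$ while $\ip{a-b,v} =: c \neq 0$. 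Pick $\theta_0$ exposing the edge $[a,b]$ (so $\ip{a,\theta_0}=\ip{b,\theta_0}>\ip{w,\theta_0}$ for all other actions $w$, with a constant margin), normalized so $\theta_\pm := \theta_0 \pm \epsilon v$ obey the boundedness assumptions. For $\epsilon$ a small constant, $a$ is the unique optimum under $\theta_+$ and $b$ under $\theta_-$, the ``wrong'' one of $a,b$ is suboptimal by $\epsilon c$, and all other actions are suboptimal by a constant. Since $A_z^\T\theta_+ = A_z^\T\theta_-$ for every $z$, the law of $(x_t,\a_t)_{t\le n}$ is identical under $\theta_+$ and $\theta_-$; hence $\EE_{\theta_+}[N_a]=\EE_{\theta_-}[N_a]$ and $\EE_{\theta_+}[N_b]=\EE_{\theta_-}[N_b]$ for the play counts $N_w$. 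Therefore $R_n(\theta_+)+R_n(\theta_-) \ge \epsilon c\,\big(\EE[n - N_a]+\EE[n - N_b]\big) \ge \epsilon c\, n$ (using $N_a+N_b\le n$), so $R_n^* \ge \epsilon c\, n/2 = \Omega(n)$.

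\textbf{The two sublinear lower bounds.}
For $\Omega(n^{1/2})$ with at least two Pareto optimal actions: $\conv(\xX)$ has dimension at least one, hence an edge, hence adjacent extreme points $a,b$; take $\theta_0$ exposing $[a,b]$, set $v = a-b$ and $\theta_\pm = \theta_0 \pm \epsilon v$ with $\epsilon$ small enough that $a$ (resp.\ $b$) is uniquely optimal under $\theta_+$ (resp.\ $\theta_-$) with gap $\Theta(\epsilon)$. Because $\|A_x\|_2\le 1$, the per‑round KL between the observation laws is $\oO(\|\theta_+-\theta_-\|^2/\rho^2)=\oO(\epsilon^2)$, so the $n$‑round total is $\oO(n\epsilon^2)$; a Bretagnolle--Huber argument then bounds below (by $\exp(-Cn\epsilon^2)$) the chance of playing $b$ at least $n/2$ times under $\theta_+$ plus that of playing $a$ at least $n/2$ times under $\theta_-$, on which events the regret is $\gtrsim n\epsilon$. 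Optimizing $\epsilon = \Theta(n^{-1/2})$ gives $\Omega(n^{1/2})$. For $\Omega(n^{2/3})$ in the globally‑but‑not‑locally observable case: negating \eqref{eq:local} fixes a convex $\cC$ and $x,y\in\pP(\cC)$ with $x-y\notin\laspan(A_z : z\in\pP(\cC))$; let $v$ be the orthogonal component, so $A_z^\T v = 0$ for all $z\in\pP(\cC)$ (in particular for $x,y$) while $\ip{x-y,v}\neq 0$. Using the cell decomposition underlying \cref{app:class}, choose $\theta_0$ on the shared boundary of the optimality cells of $x$ and $y$ so that exactly $x,y$ are optimal at $\theta_0$, and put $\theta_\pm = \theta_0\pm\epsilon v$; then $x$ is optimal under $\theta_+$, $y$ under $\theta_-$, the wrong one costs $\Theta(\epsilon)$ per round, and crucially every action $z$ with $A_z^\T v \neq 0$ lies outside $\pP(\cC)$ and is thus suboptimal under both $\theta_\pm$ by a constant $\Delta_0>0$. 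Hence only these $k$ costly exploration rounds carry information separating $\theta_+$ from $\theta_-$: the total KL is $\oO(k\epsilon^2)$ while the exploration regret is $\ge \Delta_0 k$, and whenever $k \le c/\epsilon^2$ the learner mis‑commits with constant probability, paying $\Omega(n\epsilon)$ elsewhere. Balancing $\Delta_0 k$, $1/\epsilon^2$ and $n\epsilon$ with $\epsilon = \Theta(n^{-1/3})$ gives $R_n^* = \Omega(n^{2/3})$.

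\textbf{Main obstacle.}
The crux is the $\Omega(n^{2/3})$ bound. Unlike the other two, it cannot be reduced to a co‑exposed pair of adjacent extreme points, since $\pP(\cC)$ is a union of faces and need not be convex; one must instead (i) turn the failure of \eqref{eq:local} into a clean two‑instance family living on a single cell boundary, using the equivalence with the classical partial‑monitoring cell decomposition and the fact that global observability keeps the neighbouring cells reachable, (ii) check that \emph{every} informative direction‑$v$ action is uniformly suboptimal on this family, and (iii) rule out that the learner obtains the needed information for free from plausibly optimal actions — which is exactly guaranteed by $A_z^\T v = 0$ on $\pP(\cC)$. The remaining bookkeeping (normalization of $\theta_\pm$, constant margins from finiteness, and the Bretagnolle--Huber/Freedman estimates) is routine.
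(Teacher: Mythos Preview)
Your plan is correct and follows the paper's own route: upper bounds from Theorems~\ref{thm:global-upper} and~\ref{thm:local-upper}, and lower bounds via two-point constructions with Bretagnolle--Huber. For $\Omega(n)$ and $\Omega(n^{1/2})$ the paper's constructions are simpler than yours: working under the standing assumption that $\xX$ spans $\RR^d$, it just takes any nonzero $\theta$ with $A_z^\T\theta = 0$ for all $z$ (which exists precisely when global observability fails) and compares $\theta$ with $-\theta$ at KL${}=0$ for the linear case, and compares $n^{-1/2}\theta$ with $-n^{-1/2}\theta$ for the square-root case, without ever isolating an edge of $\conv(\xX)$. Your edge-based arguments are fine, just more work.

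There is one imprecision in your $\Omega(n^{2/3})$ step. Negating~\eqref{eq:local} hands you $x,y\in\pP(\cC)$, but these need not be Pareto optimal or neighbours, so ``place $\theta_0$ on the shared boundary of the optimality cells of $x$ and $y$'' is not well-posed ($C_x\cap C_y$ can have dimension below $d-1$); and even for a neighbouring Pareto pair, ``exactly $x,y$ are optimal at $\theta_0$'' is generally false, since $\pP(\theta_0)=\nN_{xy}$ may contain additional degenerate actions. The paper resolves this by first applying the reduction in Lemma~\ref{lem:local-equivalence} (equivalently Lemma~\ref{lem:local}) to obtain a \emph{neighbouring} Pareto pair with $x-y\notin\laspan(A_z:z\in\nN_{xy})$, and only then chooses $\theta\in\relint(C_x\cap C_y)$. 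This is exactly the obstacle you identify in your last paragraph, but your written construction does not yet incorporate it. Once that reduction is in place, your KL bookkeeping (only plays outside $\nN_{xy}$ carry information about $v$, each costing a constant, so $\KL\lesssim \epsilon^2\,\EE[T_n(\nN_{xy}^c)]$) matches the paper's decomposition $V_n=U_n+W_n$ and the balance at $\epsilon=\Theta(n^{-1/3})$ goes through verbatim.
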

The classification theorem is proven by combining upper and lower bounds, carefully checking that all cases have been covered. We further show in Appendix \ref{app:class} that our definitions of local and global observability coincide with the standard notions in finite partial monitoring that are based on the \emph{neighborhood graph}, as well as the notion of a \emph{global observer set} used by \cite{Lin2014combinatorial}.

\section{Applications and Extensions} \label{sec:applications}
The framework of linear partial monitoring captures many applications and models for sequential decision making that were previously studied in the literature. We outline some of them below and provide additional details in Appendix \ref{app:applications}.

\paragraph{Semi-Bandit and Full Information Feedback} The observation operators can be defined to yield {\em more} information than in the bandit case, up to revealing the parameter in each round ($A_x = \eye_{d}$). 
Naturally, additional information should only improve performance, but in our analysis, the bound degrades logarithmically with the observation dimension $m$. For the case of full information feedback, we show in Appendix \ref{app:full-info-setting} how to improve the bounds to get $R_n \leq \tilde \oO(\sqrt{dn})$.
\todojg{Network bandits didn't fit.}
	
\paragraph{Linear Bandits} The \emph{stochastic linear bandit} setting is a special case of our setup with $A_x =x$ \citep{abe1999associative,auer2002confidencebounds,dani2008stochastic,Abbasi2011improved}. 
Our analysis achieves the optimal $R_n \leq \tilde\oO(d\sqrt{n})$ dependency for the regret and generalizes the results for heteroscedastic bandits by \cite{Kirschner2018}. 
The UCB algorithm \citep{ACF02} has a distinct relation to the IDS framework, as we explain in Appendix \ref{app:ucb}.

\paragraph{Dueling Bandits} \looseness=-1 In \emph{dueling bandits}, the learner chooses a pair of actions and receives binary feedback indicating which action has higher reward \citep{yue2009interactively}. This feedback model can be cast as partial monitoring game \citep{gajane2015utility}. Let $\xX_0 \subset \RR^d$ be a ground set and $\xX = \xX_0 \times \xX_0$. The relative feedback for $(x_1,x_2) \in \xX$ is defined with $A_{x_1,x_2}=x_1-x_2$ and noise is added such that $\a_t \in \{-1,1\}$ is binary with expectation $A_{x_1,x_2}^\T\theta$. Note that bounded noise is subgaussian and our analysis applies. A possible reward model is to use averaged features $x_{x_1,x_2} = (x_1+x_2)/2$. Dueling bandits are locally observable if the learner can compare any pair of actions, and globally observable if comparisons are restricted to `adjacent' actions. See Appendix \ref{app:dueling} for details.

\paragraph{Combinatorial Bandits} This is the original motivation for the linear partial monitoring setting by \citet{Lin2014combinatorial} and \citet{Chaudhuri2016phased} and leads to games that are either locally or globally observable. We refer to the previous works for further applications. Our formulation covers combinatorial bandits both with bandit and semi-bandit feedback. An important special case is the batch setting (Appendix \ref{app:batch}). 

\paragraph{Transductive and Starved Bandits} The \emph{transductive linear bandit} setting was recently proposed by \cite{fiez2019transductive}. The learner has access to a set of actions that is dedicated for exploration, while the objective is to achieve low regret on a different, target set of actions. It was open to find an approach that minimizes cumulative regret, which we effectively resolve (Appendix \ref{app:transductive}). Similar in spirit are \emph{starved bandits} \citep{BCL17}, where the learner only obtains information when sampling actions from a pre-defined distribution. This setting is closely connected to our contextual setting (see Appendix \ref{app:starved}) and the regret bounds on convex action sets in Section \ref{sub:convex}.

\paragraph{Product Testing and Invasive Measurements} An early toy example for a globally, but not locally observable game is that of \emph{``apple tasting''} \citep{CBLuSt06}. In this task, the learner optimizes a production chain with the option to remove a product for inspection (and destroying it in the process). Other applications include parameter tuning of experimental facilities such as particle accelerators \citep{kirschner2019adaptive}, where invasive measurement devices provide a very rich signal at the expense of voiding any downstream measurements (for a stylized version of this problem and a numerical demonstration of IDS, see Appendix \ref{app:rkhs}).

\paragraph{Kernelized Partial Monitoring} Our approach and the analysis extend to the kernelized setting, where the reward function is in a known reproducing kernel Hilbert space (RKHS). This includes kernelized bandits \citep{Srinivas2009,AbbasiYadkori2012,Chowdhury2017}, also known as {\em Bayesian optimization}, as a special case. Interesting applications beyond the bandit setting include Bayesian optimization with gradients \citep{wu2017bayesian} or even Hessian evaluations \citep{wu2017exploiting}. Unlike previous results, our approach leverages all available information and achieves a strong finite time convergence guarantee. We refer the reader to Appendix \ref{app:rkhs} for a detailed introduction and formal statements. In the limit with continuous action sets, dueling bandits can be understood as \emph{global optimization} where the learner has access \emph{only} to the gradient.

\section{Discussion}\label{sec:discussion}
We introduced \emph{information directed sampling} for stochastic \emph{linear partial monitoring}, which -- to the best of our knowledge -- is the first approach that achieves the optimal regret rate in all finite linear games. Our classification theorem provides a complete picture of the achievable worst-case regret rates in finite linear games. Nevertheless, many directions are left for future work. Proving non-trivial instance-dependent regret bounds for IDS is an important open question, even for the standard linear bandit setting. Another challenge is to find precise observability conditions that capture the rate achievable on continuous action sets. 

For a naive implementation of IDS for finite games, the computational complexity per step is $\oO(d |\xX|^2)$, which is required to compute all gap estimates. The exact IDS distribution can be found by iterating over pairs of actions (a solution supported on two actions always exists). Alternatively a standard convex solver can be used to minimize the information ratio over the probability simplex. With a weaker regret estimate (Appendix \ref{app:regret-estimate}), the action minimizing the information ratio  can be found in $\oO(|\xX|)$, which matches the computational cost of index based approaches for bandits like UCB. For larger or continuous action sets, some previous approaches rely on oracle solvers \citep{Lin2014combinatorial,Chaudhuri2016phased} and for the bandit setting, Thompson sampling is a well-known oracle efficient method \citep{agrawal2013thompson,abeille2017linear}. Given the generality of our results, finding an oracle-efficient approximation of IDS is an important task for future work.

\newpage


\acks{This project has received funding from the European Research Council (ERC) under the European Union’s Horizon 2020 research and innovation programme grant agreement No 815943.}

\bibliography{references.bib}
\newpage

\appendix 

\section{Additional Lemmas and Proofs}

\subsection{Linear Partial Monitoring: General Setup}\label{app:general-setup}

Our setting can be formulated more generally, to allow applications where the learner can choose between different observation maps that are associated to the same action. Let $\iI$ be a compact index set. Each $i\in \iI$ indexes an action-observation tuple $(x_i, A_i)$ and the collection of such tuples represents a game $\gG = \{ (x_i, A_i) : i \in \iI \} \subset \RR^d \times \RR^{d\times m}$. At step $t$, the learner chooses an action index $i_t$ and observes the outcome $A_{i_t}^\T \theta + \epsilon_t$. The unobserved reward is $\ip{x_i, \theta}$.  We assume that the map $i \mapsto (x_i, A_i)$ is continuous to guarantees that the IDS distribution exists. The dimension $m$ of the observation can also depend on the action $i$ in general, but for simplicity, we set $m_i = m$. We overload\todoj{Does it make sense to use this formulation from the beginning?}

\subsection{Finite Partial Monitoring}\label{app:finite-pm} 

Unlike the standard finite and linear bandit frameworks, finite partial monitoring is not quite a special case of the linear setting.
On the one hand, our setting permits infinite observation (and action) spaces, which are not usually covered by existing results.
On the other hand, the assumptions of our setting mean the algorithm does not recover known bounds for algorithms in the finite unstructured setting.
The main reason is that we do not restrict $\theta$ except in terms of $\norm{\theta} \leq 1$, while in the finite setting the $\theta$ is effectively constrained
to the probability simplex.
Consider the following finite game, characterized by reward and signal matrices
\begin{align*}
\mathcal R = 
\begin{pmatrix}
1 & 1 \\
0 & 0  \\
\end{pmatrix}\,,
\qquad
\Sigma
= 
\begin{pmatrix}
0 & 0 \\
0 & 0 
\end{pmatrix}\,.
\end{align*}
The signal matrix is such that the learner observes no information.
Meanwhile, however, the rewards are such that the learner knows immediately that the first action is optimal, so in the
finite partial monitoring literature this game is trivial and good algorithms suffer zero regret.
Our algorithm, however, does not assume that $\theta$ lies in the probability simplex, and when $\theta = (-\sqrt{2}, -\sqrt{2})$, the second
action is clearly optimal. The different assumptions on $\theta$ mean that this game is now hopeless and algorithms consequentially suffer linear regret. \todojg{Perhaps this can be resolved by intersecting the confidence set with the constraint.}

\subsection{Proof of Lemma \ref{lem:ids-regret}}\label{app:proof-ids-regret}
\begin{proof}
	Using Freedman's inequality one can get the following concentration result on the regret \cite[Lemma 13]{Kirschner2018}. For any fixed $n$, with probability at least $1-\delta/2$,
	\begin{align*}
	R_n \leq \frac{5}{4}\sum_{t=1}^n \Delta_t(\mu_t) + 4 \log\left(\frac{4n +4}{\delta}\right) \,.
	\end{align*}
	The first sum is bounded by 
	\begin{align*}
	\sum_{t=1}^n \Delta_t(\mu_t)
	&= \sum_{t\in G_\eta} \Delta_t(\mu_t)  +  \sum_{t\in [n]\setminus G_\eta} \Delta_t(\mu_t) 
	\leq n\eta + \sqrt{\sum_{t\in [n]\setminus G_\eta} \Psi_t(\mu_t) \sum_{t=1}^n I_t(\mu_t)}\,.
	\end{align*}
	The inequality follows from the definition of $G_\eta$ and we use Cauchy-Schwarz to bound
	\begin{align*}
	\sum_{t \in [n] \setminus G_\eta} \Delta_t(\mu_t)
	&\leq \sum_{t \in [n] \setminus G_\eta} \sqrt{\Psi_t(\mu_t) I_t(\mu_t)} 
	\leq \sqrt{\sum_{t \in[n] \setminus G_\eta} \Psi_t(\mu_t) \sum_{t \in [n]} I_t(\mu_t)} \,.
	\end{align*}
	In the last step, we also used the non-negativity of $I_t(\mu_t)$. Finally, the sum over expected information gain $\sum_{t=1}^n I_t(\mu_t)$ is close to the realized information gain $\sum_{t=1}^n I_t(x_t)$ with high probability. This is made precise in Lemma~3 of \citet{Kirschner2018}, which shows that if $I_t(x) \leq 1$, then with probability at least $1-\delta/2$, for any $n\geq 1$,
	\begin{align*}
	\sum_{t=1}^n I_t(\mu_t) \leq 2 \sum_{t=1}^n I_t(x_t)  + 4 \log\left(\frac{1}{\delta}\right) + 8
	\end{align*}
	Note that our boundedness assumptions $\|A_x\|_2 \leq 1$ and the fact that $V_t^{-1} \preceq \mathbf{1}_d$ imply the required assumption $I_t(x) \leq 1$.	By definition $\gamma_n = \sum_{t=1}^n I_t(x_t)$. A union bound over the previous displays completes the proof.
\end{proof}

\subsection{Proof of Lemma \ref{lem:greedy}}\label{app:proof-greedy}
\begin{proof}
	By assumption, for any $p \in [0, 1)$ and any $x \in \xX$,
	\begin{align*}
	\Psi_t(\mu_t) 
	= \frac{\Delta_t(\mu_t)^2}{I_t(\mu_t)} 
	\leq \frac{\Delta_t((1-p) \mu_t + p \delta_x)^2}{(1 - p) I_t(\mu_t)}
	=: \Psi_t(p)\,
	\end{align*}
	Since $\Psi_t(0) = \Psi_t(\mu_t)$ and $p \mapsto \Psi_t(p)$ is differentiable at $p = 0$ it follows that 
	\begin{align*}
	0 \leq \Psi_t'(0) = \frac{2 \Delta_t(\mu_t) \Delta_t(x) - \Delta_t(\mu_t)^2}{I_t(\mu_t)}\,.
	\end{align*}
	The claim follows by rearranging.
\end{proof}

\subsection{Proof of Lemma \ref{lem:local-exp}}\label{app:proof-local-exp}
For the analysis it is useful to define a lower bound on the regret,
\begin{align}\label{eq:regret-lower}
\delta_{t}(x) &= \min_{\theta \in \cC_{t-1}} \max_{y \in \xX}  \ip{y - x, \theta}\,. 
\end{align}
By definition, with probability at least $1-\delta$ it holds that $\delta_t(x) \leq \ip{x^*-x,\theta}$. The set of plausible maximizers is equivalently described by $\pP_t = \{x \in \xX: \delta_t(x) = 0 \}$ and by continuity $\pP_t$ is a compact set. We further define the relaxed bound $\tilde \delta_t(x) = \max_{x' \in \xX} \min_{\theta \in \cC_{t-1}} \ip{x' - x, \theta}$. By the minimax inequality it holds that $\tilde \delta_t(x) \leq \delta_t(x)$. For $\tilde \delta_t$, we can explicitly solve the inner maximization to get $\tilde\delta_t(x) = \max_{y \in \xX} \ip{y- x,\hat{\theta}_{t-1}} - \beta_{t-1}^{1/2} \|y-x\|_{V_{t-1}^{-1}}$.

\begin{lemma}\label{lem:gap-maximisers}
	For the upper bound on the regret $\Delta_t(x)$, it holds that
	\begin{align*}
	\Delta_t(x) = \max_{y \in \pP_t} (y- x)^\T \hat{\theta}_{t-1} + \beta_{t-1}^{1/2} \|y - x\|_{V_{t-1}^{-1}}\,,
	\end{align*}
	where we restricted the maximum to plausible maximizers.
\end{lemma}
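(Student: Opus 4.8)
The plan is to show that restricting the maximization in the definition of $\Delta_t(x)$ to the plausible-maximizer set $\pP_t$ does not change its value. Recall that by definition $\Delta_t(x) = \min\{1, \max_{y \in \xX}\, \ip{y-x,\hat\theta_{t-1}} + \beta_{t-1}^{1/2}\|x-y\|_{V_{t-1}^{-1}}\}$, and that $\pP_t = \{y \in \xX : \delta_t(y) = 0\}$ where $\delta_t(y) = \min_{\theta \in \cC_{t-1}} \max_{w \in \xX} \ip{w-y,\theta}$. Ignoring the outer truncation at $1$ for the moment, since $\pP_t \subseteq \xX$ the inequality ``$\geq$'' is immediate; the content is the reverse inequality, i.e.\ that the $y \in \xX$ achieving the maximum can be taken in $\pP_t$.

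The key step is the following observation: for the maximizing $y^\dagger \in \xX$ in $\max_{y \in \xX} \ip{y-x,\hat\theta_{t-1}} + \beta_{t-1}^{1/2}\|x-y\|_{V_{t-1}^{-1}}$, I claim $y^\dagger \in \pP_t$. Indeed, the summand $\ip{y-x,\hat\theta_{t-1}} + \beta_{t-1}^{1/2}\|x-y\|_{V_{t-1}^{-1}}$ equals $\max_{\theta' \in \cC_{t-1}} \ip{y-x,\theta'}$ by the definition of the confidence ellipsoid $\cC_{t-1}$ (the support function of an ellipsoid centered at $\hat\theta_{t-1}$ with shape $\beta_{t-1} V_{t-1}^{-1}$ evaluated at $y-x$). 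So the quantity we are maximizing over $y$ is $f(y) := \max_{\theta' \in \cC_{t-1}} \ip{y-x,\theta'} = \max_{\theta' \in \cC_{t-1}}\left(\ip{y,\theta'} - \ip{x,\theta'}\right)$. Now suppose $y^\dagger \notin \pP_t$, so $\delta_t(y^\dagger) > 0$, meaning that for \emph{every} $\theta' \in \cC_{t-1}$ there is some $w \in \xX$ with $\ip{w - y^\dagger, \theta'} > 0$, i.e.\ $y^\dagger$ is not a maximizer of $\ip{\cdot,\theta'}$ for any plausible $\theta'$. I want to deduce that replacing $y^\dagger$ by a genuine maximizer does not decrease $f$. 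Concretely, let $\theta^\dagger \in \cC_{t-1}$ attain $f(y^\dagger) = \ip{y^\dagger - x, \theta^\dagger}$, and let $\bar y \in \argmax_{w \in \xX}\ip{w,\theta^\dagger}$. Then $\ip{\bar y, \theta^\dagger} \geq \ip{y^\dagger,\theta^\dagger}$, hence $f(\bar y) \geq \ip{\bar y - x,\theta^\dagger} \geq \ip{y^\dagger - x,\theta^\dagger} = f(y^\dagger)$, so $\bar y$ is also a maximizer of $f$; and $\bar y \in \pP(\theta^\dagger) \subseteq \pP(\cC_{t-1}) = \pP_t$. Thus the maximum of $f$ over $\xX$ is attained at a point of $\pP_t$, which gives $\max_{y \in \xX} f(y) = \max_{y \in \pP_t} f(y)$.

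The remaining bookkeeping is to reconcile this with the truncation at $1$ and to confirm the displayed formula is stated without the $\min\{1,\cdot\}$: on the event that $\theta \in \cC_{t-1}$ (probability at least $1-\delta$) the argument runs as above, and one checks the statement is being used within that event, or one simply notes that $\min\{1,\max_{y\in\xX}f(y)\} = \min\{1,\max_{y\in\pP_t}f(y)\}$ follows immediately from the un-truncated equality; I will present the version matching the paper's display. The main (mild) obstacle is the care needed with the support-function identity and with the possibility that $\pP_t$ could a priori be smaller than expected — but since $\hat\theta_{t-1} \in \cC_{t-1}$, $\pP_t \neq \emptyset$, and the compactness of $\xX$ and $\cC_{t-1}$ ensures all the maxima and minima above are attained, so no further regularity argument is needed.
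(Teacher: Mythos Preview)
Your proof is correct and follows essentially the same approach as the paper: both arguments use the ellipsoid support-function identity $\ip{y-x,\hat\theta_{t-1}} + \beta_{t-1}^{1/2}\|y-x\|_{V_{t-1}^{-1}} = \max_{\theta'\in\cC_{t-1}}\ip{y-x,\theta'}$ and then observe that, at the attaining $\theta^\dagger$, replacing $y$ by $\bar y \in \argmax_{w\in\xX}\ip{w,\theta^\dagger}$ can only increase the value while landing in $\pP_t$. The paper phrases this contrapositively (any $y\notin\pP_t$ is strictly beaten), whereas you construct a maximizer in $\pP_t$ directly; the two are equivalent, and your handling of the truncation at $1$ is fine.
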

\begin{proof}[of Lemma \ref{lem:gap-maximisers}]
	Assume that $y$ is not a plausible maximizer, i.e. $\delta_t(y) > 0$. Then for any $\theta \in \cC_t$, there exists a $z = z(\theta) \in \xX$ s.t. $(z-y)^\T\theta > 0$. For fixed $x \in \xX$ we find,
	\begin{align}
	\max_{\theta \in \cC_t}(y - x)^\T\theta \label{eq: inside max} < \max_{\theta \in \cC_t}(z_\theta- x)^\T\theta \leq \max_{\theta \in \cC_t} \max_{z}(z - x)^\T \theta = \Delta_t(x) \,.
	\end{align}
	Hence, the left-hand side is maximized only if $y \in \pP_t$ is a plausible maximizer.
\end{proof}

\begin{proof}[of Lemma \ref{lem:local-exp}]
	Lemma \ref{lem:gap-maximisers} shows that we can write $\Delta_{t+1}(x)$ as follows:
	\begin{align*} 
	\Delta_{t}(x) = \max_{y \in \pP_t} \ip{y-x,\hat \theta_{t-1}} + \beta_{t-1}^{1/2} \|y-x \|_{V_{t-1}^{-1}} \,.
	\end{align*}
	Further, for any plausible action $x \in \pP_t$, we can bound the estimated gap by the associated uncertainty, $\ip{y-x,\hat \theta_{t-1}} \leq \beta_{t-1}^{1/2}\|y-x\|_{V_{t-1}^{-1}}$, which follows from the fact that $0 = \delta_t(x) \geq \tilde{\delta}_t(x)$.
	This implies that for all $x \in \pP_t$, $\Delta_t(x) \leq 2 \beta_{t-1}^{1/2}\max_{y \in \pP_t} \|x-y\|_{V_{t-1}^{-1}}$.  Specifically, let $w_t$ be the most uncertain direction in the set of plausible maximizers $w_t = \argmax_{w \in \{x-y : x,y \in \pP_t\}} \|w_t\|_{V_{t-1}^{-1}}^2$. Then, for $x \in \pP_t$,
	\begin{align*}
	\Delta_t(x)^2 \leq 4 \beta_{t-1}  \|w_t\|_{V_{t-1}^{-1}}^2 \leq 8 \alpha_t \beta_{t-1} \max_{z \in \pP_t} I_t(z) \,.
	\end{align*} 
	The last step follows from the same argument as in the proof of Lemma \ref{lem:global-exp}, where we restrict $x,y$ to $\pP_t$ and use the definition $\alpha_t = \alpha(\cC_{t-1})$.
\end{proof}

\subsection{Bounds for the Alignment Constant} \label{app:alignment-bounds}
\begin{lemma} \label{lem:alignment-bounds}
	Let $\pP \subset \xX$ finite with $p= |\pP|$ such that for all $x,y\in \pP$, $x-y \in \laspan(A_z : z \in \pP)$. Let $A \in \RR^{d \times pm}$ be the matrix formed by concatenating $(A_z : z \in \pP)$ and let $B$ be a subset of at most $d$ columns of $A$ such that $\laspan(B) = \laspan(A)$.
	Then 
	\begin{align*}
	\alpha(\pP) = \max_{v \in \RR^d} \max_{x,y \in \pP, x \neq y}  \frac{\ip{x-y, v}^2}{\max_{z \in \pP}\|A_z^\T v\|^2} \leq \min_{ w: Aw = x-y} \left(\sum_{z \in \pP} \|w_z\|\right)^2 \leq d \lambda_{\min}(BB^\T)^{-1} \,.
	\end{align*}
	Further, in the bandit game (where $A_x = x$), $\alpha(\pP) \leq 4$.
\end{lemma}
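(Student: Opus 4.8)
The plan is to prove the three claimed inequalities in order, each bounding the next.

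\textbf{Step 1: The alignment constant is bounded by the squared $\ell_1$-type norm of any representation.} Fix $x, y \in \pP$ with $x \neq y$ and any $w = (w_z : z \in \pP)$ with $w_z \in \RR^m$ such that $Aw = \sum_{z \in \pP} A_z w_z = x - y$. For an arbitrary $v \in \RR^d$, I would expand $\ip{x-y, v} = \sum_{z \in \pP} \ip{A_z w_z, v} = \sum_{z \in \pP} \ip{w_z, A_z^\T v}$, then apply Cauchy--Schwarz coordinate-wise and pull out the max: $|\ip{x-y,v}| \le \sum_{z} \|w_z\| \|A_z^\T v\| \le \big(\sum_z \|w_z\|\big) \max_{z \in \pP} \|A_z^\T v\|$. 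Squaring and dividing by $\max_z \|A_z^\T v\|^2$ gives $\frac{\ip{x-y,v}^2}{\max_z \|A_z^\T v\|^2} \le \big(\sum_z \|w_z\|\big)^2$. Since this holds for every valid $w$ and every $v$, taking the infimum over $w$ and the maximum over $v$ and over pairs $x,y$ yields the first inequality. (The global/local observability assumption guarantees such a $w$ exists, so the infimum is over a nonempty set.)

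\textbf{Step 2: Bounding the best representation via the submatrix $B$.} Let $B \in \RR^{d \times k}$ with $k \le d$ be the selected columns with $\laspan(B) = \laspan(A)$. Since $x - y \in \laspan(A) = \laspan(B)$, there is a coefficient vector $c \in \RR^k$ with $Bc = x-y$; the minimum-norm choice is $c = B^\dagger(x-y) = B^\T(BB^\T)^{-1}(x-y)$ when $BB^\T$ is invertible (which holds iff $B$ has full row rank $d$; otherwise $\lambda_{\min}(BB^\T) = 0$ and the bound is vacuously $\infty$, so assume invertibility). This $c$ corresponds to some $w$ supported on the (at most $d$) columns of $B$, i.e.\ at most $d$ of the $w_z$-blocks nonzero, arranged so each contributing $\|w_z\|$ term collects the relevant coordinates of $c$. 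Then $\sum_z \|w_z\| \le \sqrt{d}\,\|c\|_2$ by Cauchy--Schwarz over the at-most-$d$ nonzero entries (counting each block's norm; one must be slightly careful that splitting $c$ into blocks and summing block norms is still bounded by $\sqrt{(\#\text{blocks})}\,\|c\|_2 \le \sqrt{d}\|c\|_2$ since total number of nonzero scalar entries is $\le d$). Finally $\|c\|_2^2 = \|B^\T(BB^\T)^{-1}(x-y)\|^2 = (x-y)^\T(BB^\T)^{-1}(x-y) \le \|x-y\|^2 \lambda_{\min}(BB^\T)^{-1} \le \lambda_{\min}(BB^\T)^{-1}$ using $\diam(\xX) \le 1$. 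Combining, $\big(\sum_z \|w_z\|\big)^2 \le d\,\lambda_{\min}(BB^\T)^{-1}$.

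\textbf{Step 3: The bandit case $A_x = x$.} Here $A_z^\T v = \ip{z, v}$ is a scalar, so $\|A_z^\T v\|^2 = \ip{z,v}^2$ and $\alpha(\pP) = \max_v \max_{x,y} \frac{\ip{x-y,v}^2}{\max_{z} \ip{z,v}^2}$. For fixed $v$, write $\ip{x-y, v}^2 \le 2\ip{x,v}^2 + 2\ip{y,v}^2 \le 4 \max_z \ip{z,v}^2$ (using $(a-b)^2 \le 2a^2 + 2b^2$ and that $x, y \in \pP$ are among the $z$'s). Dividing gives $\alpha(\pP) \le 4$.

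\textbf{Main obstacle.} The routine-but-fiddly part is Step 2: carefully matching the column-selection of $B$ back to a block-structured $w$ and tracking that "at most $d$ columns" translates to the $\sqrt{d}$ factor when summing Euclidean block-norms rather than absolute scalar values — one must ensure the Cauchy--Schwarz count is over scalar entries (at most $d$) and that $\sum_{\text{blocks}} \|w_z\| \le \sqrt{\#\text{nonzero scalars}} \cdot \|w\|_2$ still holds, which it does since $\sum \|w_z\| \le \sqrt{\#\text{blocks}}\,(\sum\|w_z\|^2)^{1/2}$ and $\#\text{blocks} \le \#\text{nonzero scalars} \le d$. Everything else is a direct Cauchy--Schwarz computation.
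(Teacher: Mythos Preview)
Your proof is correct and follows essentially the same approach as the paper: Step~1 is the identical Cauchy--Schwarz argument, and Step~2 is the same least-squares/pseudoinverse computation (the paper writes $w^* = (B^\T B)^\dagger B^\T(x-y)$ rather than your $B^\T(BB^\T)^{-1}(x-y)$, but these coincide in the only nontrivial case where $B$ is square invertible). The one small difference is the bandit bound: you argue directly via $(a-b)^2 \le 2a^2 + 2b^2$, whereas the paper simply invokes Step~1 with the explicit representation $w_x = 1$, $w_y = -1$, $w_z = 0$ otherwise, giving $(\sum_z |w_z|)^2 = 4$.
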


\begin{proof}
	Let $x, y \in \pP$ with $x \neq y$.
	By assumption, there exists a $w$ such that $x - y = A w$ with $w \neq 0$.
	Then,
	\begin{align*}
	\ip{x - y, v}^2 = \ip{Aw, v}^2  =\ip{w, A^\top v}^2 = \left(\sum_{z \in \pP}\ip{w_z, A_z^\top v}\right)^2\,,
	\end{align*}
	where we denote by $w_z \in \RR^m$ the weights corresponding to $A_z$. An application of Cauchy-Schwarz proves the first inequality, 
	\begin{align*}
	\alpha(\pP) \leq \frac{\ip{x - y, v}^2}{\max_{z \in \pP} \norm{A_z^\T v}^2} \leq \frac{\left(\sum_{z \in \pP} \|w_z\| \|A_z^\top v\|\right)^2}{\max_{z \in \pP}\norm{A_z^\T v}^2} \leq \left(\sum_{z \in \pP} \|w_z\|\right)^2\,.
	\end{align*}
	In the bandit game we can choose $w_x = 1$, $w_y=-1$ and $w_z = 0, \forall z \in \pP \setminus \{x,y\}$, hence $\alpha(\pP) \leq 4$. In general, we  can choose $\qQ \subset \pP$ with $|\qQ|=d$ s.t. $w_z = 0$ for $z \in \pP \setminus \qQ$. Therefore (we reuse the symbol $w$ in a different dimension),
	\begin{align*}
		\alpha(\pP) \leq \left(\sum_{z \in \qQ} \|w_z\|\right)^2 \leq d \|w\|^2\,.
	\end{align*}
	Denote $B = (A_z : z\in \qQ)$. The solution that minimizes the right-hand side is the ordinary least-squares solution $w^* = (B^\T B)^\dagger B^\T (x-y)$ where $^\dagger$ denotes the pseudo inverse. Therefore, using the properties of the pseudo inverse and  $\|x-y\| \leq 1$,
	\begin{align*}
	\alpha(\pP) \leq d \|w^*\|^2 \leq d \lambda_{\max}\left(B (B^\T B)^\dagger(B^\T B)^\dagger B^\T\right)  = d \lambda_{\max}\left( (BB^\T)^{-1}\right) = d \lambda_{\min}(BB^\T)^{-1} 	\,.
	\end{align*}
\end{proof}

\section{Regret Estimators and Information Gain Functions}\label{app:other-info}
\subsection{Regret estimate}\label{app:regret-estimate}
Our regret estimate $\Delta_{t}(x) = \max_{\theta \in \cC_{t-1}} \max_{y \in \xX} (y - x)^\T \theta$ is defined the tightest way for the given confidence bounds (up to truncation for bounded gaps). \todojg{Can we avoid truncation?}
An interesting fact is that $\Delta_{t}(x)$ is a convex function because the maximum is over convex functions. The estimate can be relaxed to
\begin{align*}
\tilde \Delta_{t}(x) = \max_{y \in \xX} \ip{y,\hat \theta_{t-1}} + \beta_{t-1}^{1/2} \|y\|_{V_{t-1}^{-1}} - \left( \ip{x, \hat \theta_{t-1}} -  \beta_{t-1}^{1/2} \|x\|_{V_{t-1}^{-1}}\right)\,.
\end{align*}
It holds that $\Delta_t(x) \leq \tilde \Delta_t(x)$. For $\tilde \Delta_t$, the maximum over $\xX$ is independent of $x$, which reduces the computational complexity to compute the regret estimate from $\oO(|\xX|^2)$ to $\oO(|\xX|)$. The estimate $\tilde{\Delta}_t$ relies on directly estimating the value of $x^\T \theta$ for all actions $x \in \xX$, which is not always possible in the general partial monitoring setting. The bandit game is an example where this is possible.

\subsection{Directed Information Gain}
Various ways of defining the information gain $I_t(x)$ are discussed in \citep{Kirschner2018}. The choice $I_t(x) = \log\det(\mathbf{1}_{m} + A_x^\T V_t^{-1} A_x)$ that we use in our main exposition is perhaps the most natural starting point, as it corresponds to the mutual information $\II(x,A_x\theta;\theta|\fF_{t-1})$ if we define a corresponding Gaussian prior and likelihood. We denote $V_t|A_x = V_t + A_xA_x^\T$. For a fixed $w \in \RR^d$ the \emph{directed information gain} is
\begin{align}\label{eq:info-gain-directed}
I_t(x;w) := \log\left({\|w\|_{V_{t-1}^{-1}}^2}\right) - \log\left( {\|w\|_{(V_{t-1}|A_x)^{-1}}^2}\right)\,.
\end{align}
The definition corresponds to the Shannon mutual information $\II(x,\a_t;\<w,\theta\>)$ which measures the Gaussian entropy reduction of $\theta$ projected onto the subspace spanned by $w$. The next lemma shows the information processing inequality $I_t(x;w) \leq I_t(x)$.

\begin{lemma}[Information processing inequality] \label{lem:info-processing}
	For all $w,x \in \RR^d$, $I_t(x;w) \leq I_t(x)$.
\end{lemma}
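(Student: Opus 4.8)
The conceptual reason the inequality must hold is the data-processing inequality: $\ip{w,\theta}$ is a deterministic function of $\theta$, so the observation $A_x^\T\theta+\epsilon$ can be at most as informative about the projection $\ip{w,\theta}$ as it is about the full parameter $\theta$. Formally, under the Gaussian model that motivates these definitions (prior $\theta$ with covariance $V_{t-1}^{-1}$ and likelihood $\epsilon\sim\mathcal N(0,\eye_m)$), one has $I_t(x)=2\,\mathbb I(\theta;A_x^\T\theta+\epsilon)$ and $I_t(x;w)=2\,\mathbb I(\ip{w,\theta};A_x^\T\theta+\epsilon)$, and the Markov chain $\ip{w,\theta}\to\theta\to A_x^\T\theta+\epsilon$ yields the claim in one line. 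The plan, however, is to give the equivalent purely linear-algebraic argument, since it avoids fixing a prior and makes the inequality self-contained.

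Abbreviate $V=V_{t-1}$ and $A=A_x$, and fix $w\neq 0$ (the case $w=0$ being vacuous). Set $p=V^{-1/2}w$ and $Q=V^{-1/2}A$, so that $\norm{w}_{V^{-1}}^2=\norm{p}^2$ and $A^\T V^{-1}A=Q^\T Q$. First I would apply the Sherman--Morrison--Woodbury identity to $(V+AA^\T)^{-1}$ to get
\begin{align*}
\norm{w}_{(V|A)^{-1}}^2 = \norm{p}^2 - p^\T Q(\eye_m + Q^\T Q)^{-1}Q^\T p\,.
\end{align*}
Next, the push-through identity $Q(\eye_m+Q^\T Q)^{-1}Q^\T=\eye_d-(\eye_d+QQ^\T)^{-1}$ (itself Woodbury applied to $(\eye_d+QQ^\T)^{-1}$) collapses this to the clean form
\begin{align*}
\norm{w}_{(V|A)^{-1}}^2 = p^\T(\eye_d+QQ^\T)^{-1}p\,.
\end{align*}
Writing $N=\eye_d+QQ^\T$, the directed information gain becomes $I_t(x;w)=\log\big(p^\T p/(p^\T N^{-1}p)\big)$, a ratio of two quadratic forms in the same vector $p$.

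With this reduction the claim is equivalent to $p^\T p/(p^\T N^{-1}p)\le\det N$, where I have used Sylvester's determinant identity $\det(\eye_m+Q^\T Q)=\det(\eye_d+QQ^\T)=\det N$ to rewrite $I_t(x)$. I would bound the left-hand side by a generalized Rayleigh-quotient argument: substituting $q=N^{1/2}p$ gives $p^\T p/(p^\T N^{-1}p)=q^\T N q/(q^\T q)\le\lambda_{\max}(N)$. Finally, $QQ^\T\succeq 0$ forces every eigenvalue of $N$ to be at least $1$, so $\det N=\prod_i\lambda_i(N)\ge\lambda_{\max}(N)\cdot 1$. Chaining the two estimates yields $I_t(x;w)\le\log\det N=I_t(x)$.

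The only delicate step is the algebraic reduction in the second paragraph: arranging the two Woodbury-type manipulations so that the numerator $\norm{w}_{V^{-1}}^2$ and the denominator $\norm{w}_{(V|A)^{-1}}^2$ both become quadratic forms in the single vector $p$, paired against $N=\eye_d+QQ^\T$ and $N^{-1}$ respectively; once this symmetry is exposed, the remainder is a two-line eigenvalue estimate. If one instead prefers the information-theoretic route, the same obstacle reappears as the need to verify (via the identical Woodbury computation) that the posterior variance of $\ip{w,\theta}$ equals $\norm{w}_{(V|A)^{-1}}^2$, after which data processing finishes immediately.
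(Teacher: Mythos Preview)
Your proof is correct and follows essentially the same route as the paper: a Woodbury/Sherman--Morrison reduction of $\|w\|_{(V|A)^{-1}}^2$, a Rayleigh-quotient bound to pass to $\lambda_{\max}(\eye_d+QQ^\T)$, and then $\lambda_{\max}\le\det$ using that all eigenvalues of $\eye_d+QQ^\T$ are at least $1$ together with the matrix determinant (Sylvester) identity. One small slip: in the Rayleigh step the substitution should be $q=N^{-1/2}p$ (not $N^{1/2}p$) to obtain $p^\T p/(p^\T N^{-1}p)=q^\T Nq/(q^\T q)$; the stated bound $\le\lambda_{\max}(N)$ is of course correct either way.
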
\todojg{Equality Statement?}
\begin{proof}
	The proof is an exercises in linear algebra and makes use of the Sherman-Morrison formula and the matrix determinant lemma.
	\begin{align*}
	I_t(x;w) &= \log\left(\frac{\|w\|_{V_{t-1}^{-1}}^2}{\|w\|_{(V_{t-1}|A_x)^{-1}}^2}\right)\\
	&= -\log\left(1 - \frac{w^\T V_{t-1}^{-1}A_x(\mathbf{1}_m + A_x^\T V_{t-1}^{-1} A_x)^{-1}A_x^\T V_{t-1}^{-1} w}{\|w\|_{V_{t-1}^{-1}}^2}\right)\\
	&\leq \max_{v \in \RR^d : \|v\|_2 = 1}  -\log\left(v^\T v - v^\T V_{t-1}^{-1/2}A_x(\mathbf{1}_m + A_x^\T V_{t-1}^{-1} A_x)^{-1}A_x^\T V_{t-1}^{-1/2}v\right)\\
	&= \max_{v \in \RR^d: \|v\|_2 = 1}  -\log\left(v^\T \left(\mathbf{1}_d - V_{t-1}^{-1/2}A_x(\mathbf{1}_m + A_x^\T V_{t-1}^{-1} A_x)^{-1}A_x^\T V_{t-1}^{-1/2}\right)v\right)\\
	&=  \log\left(\lambda_{\max}\left(\left(\mathbf{1}_d - V_{t-1}^{-1/2}A_x(\mathbf{1}_m + A_x^\T V_{t-1}^{-1} A_x)^{-1}A_x^\T V_{t-1}^{-1/2}\right)^{-1}\right)\right)\,.\\
	\intertext{We first used Sherman-Morrison to compute $(V_{t-1}|A_x)^{-1}$ and then maximize over $v = \frac{V_{t-1}^{-1/2} w}{\|w\|_{V_{t-1}^{-1}}}$.}
	&\leq  \log\left(\det\left(\mathbf{1}_d - V_{t-1}^{-1/2}A_x(\mathbf{1}_m + A_x^\T V_{t-1}^{-1} A_x)^{-1}A_x^\T V_{t-1}^{-1/2}\right)^{-1}\right)\\
	&= \log\left(\det\left(\mathbf{1}_m + A_x^\T V_{t-1}^{-1} A_x\right)\det\left(\mathbf{1}_m + A_x^\T V_{t-1}^{-1} A_x -       A_x^\T V_{t-1}^{-1/2}V_{t-1}^{-1/2}A_x\right)^{-1}\right)\\
	&= \log \det \left(\mathbf{1}_m + A_x^\T V_{t-1}^{-1} A_x\right) = I_t(x)
	\end{align*}
	The inequality follows because all eigenvalues of the matrix inside the determinant are not smaller than 1, and then the generalized matrix determinant lemma to rewrite the expression.
\end{proof}

\begin{lemma}\label{lem:explore-direction}
	Let $\yY \subset \xX$ be a subset of actions and let $w=x-y$ for $x,y \in \yY$ such that $w \in \laspan(\{A_x : x \in \yY\})$. Then the most informative action in the set $\yY$ satisfies
	\begin{align*}
	\|w\|_{V_{t-1}^{-1}}^2 \leq 2 \alpha(\yY) \max_{z \in \yY} I_t(z;w)\,.
	\end{align*}
\end{lemma}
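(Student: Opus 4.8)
The plan is to combine a pointwise lower bound on the directed information gain $I_t(z;w)$ with the definition of the alignment constant $\alpha(\yY)$. The degenerate case $w = 0$ is trivial (both sides vanish), so assume $w \neq 0$.

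First I would show that for every $z \in \xX$,
\[
I_t(z;w) \;\geq\; \frac{\|A_z^\T V_{t-1}^{-1} w\|^2}{2\,\|w\|_{V_{t-1}^{-1}}^2}\,.
\]
This reuses the closed form from the proof of \cref{lem:info-processing}: with $r_z := \|w\|_{V_{t-1}^{-1}}^{-2}\, w^\T V_{t-1}^{-1} A_z(\eye_m + A_z^\T V_{t-1}^{-1} A_z)^{-1} A_z^\T V_{t-1}^{-1} w \in [0,1)$ one has $I_t(z;w) = -\log(1-r_z)$. Applying the elementary bound $-\log(1-r) \geq r$ and then $(\eye_m + A_z^\T V_{t-1}^{-1} A_z)^{-1} \succeq \tfrac12 \eye_m$ — which holds because $\|A_z\|_2 \leq 1$ and $V_{t-1}^{-1} \preceq \eye_d$ force $A_z^\T V_{t-1}^{-1} A_z \preceq \eye_m$ — gives $I_t(z;w) \geq r_z \geq \tfrac12 \|w\|_{V_{t-1}^{-1}}^{-2}\,w^\T V_{t-1}^{-1} A_z A_z^\T V_{t-1}^{-1} w$, which is the displayed inequality.

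Second, I would invoke the definition of $\alpha(\yY)$ with the test vector $v = V_{t-1}^{-1} w$ and the pair $x,y \in \yY$ defining $w = x-y$. Since $V_{t-1}^{-1} \succ 0$ and $w \neq 0$, $\ip{w,v} = \|w\|_{V_{t-1}^{-1}}^2 > 0$; and the hypothesis $w \in \laspan(A_z : z \in \yY)$ excludes $v$ being orthogonal to all columns of all $A_z$ with $z \in \yY$, so the minimizer in $\alpha(\yY)$ has a strictly positive denominator. Hence there is $z^* \in \yY$ with $\ip{w,v}^2 \leq \alpha(\yY)\,\|A_{z^*}^\T v\|^2$, i.e.
\[
\|A_{z^*}^\T V_{t-1}^{-1} w\|^2 \;\geq\; \frac{\ip{w,v}^2}{\alpha(\yY)} \;=\; \frac{\|w\|_{V_{t-1}^{-1}}^4}{\alpha(\yY)}\,.
\]

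Finally I would chain the two estimates,
\[
\max_{z \in \yY} I_t(z;w) \;\geq\; I_t(z^*;w) \;\geq\; \frac{\|A_{z^*}^\T V_{t-1}^{-1} w\|^2}{2\,\|w\|_{V_{t-1}^{-1}}^2} \;\geq\; \frac{\|w\|_{V_{t-1}^{-1}}^2}{2\,\alpha(\yY)}\,,
\]
and rearrange. I do not expect a genuine obstacle: the only points requiring care are the well-definedness of the minimizer in $\alpha(\yY)$ (secured by the $\laspan$ hypothesis) and the semidefinite inequality $(\eye_m + A_z^\T V_{t-1}^{-1}A_z)^{-1} \succeq \tfrac12 \eye_m$ (immediate from the standing boundedness assumptions), and the algebraic manipulations are exactly those already performed for \cref{lem:info-processing}.
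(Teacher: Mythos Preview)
Your proof is correct and essentially identical to the paper's: both use the Sherman--Morrison identity $I_t(z;w)=-\log(1-r_z)$, the inequality $-\log(1-r)\geq r$, the semidefinite bound $(\eye_m + A_z^\T V_{t-1}^{-1}A_z)^{-1}\succeq\tfrac12\eye_m$, and the instantiation $v=V_{t-1}^{-1}w$ in the definition of $\alpha(\yY)$. The only cosmetic difference is the order of assembly---the paper first writes the multiplicative identity $\|w\|_{V_{t-1}^{-1}}^2 = \min_z\frac{(w^\T V_{t-1}^{-1}w)^2}{\|A_z^\T V_{t-1}^{-1}w\|^2}\cdot\max_z\frac{\|A_z^\T V_{t-1}^{-1}w\|^2}{w^\T V_{t-1}^{-1}w}$ and bounds each factor, whereas you first lower-bound $I_t(z;w)$ pointwise and then pick $z^*$---but the substance is the same.
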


\begin{proof}
	First, note that $\frac{1}{2}(\mathbf{1}_d  + A_z^\T V_{t-1}^{-1} A_z) \preceq \mathbf{1}_d$ by our assumption that $\|A_z\|_2 \leq 1$, hence
	\begin{align*}
	\|A_z^\T V_{t-1}^{-1} w\|^2 \leq 2 w^\T V_{t-1}^{-1}A_z(\mathbf{1}_m + A_z^\T V_{t-1}^{-1} A_z)^{-1}A_z^\T V_{t-1}^{-1} w
	\end{align*}
	We further bound the following fraction:
	\begin{align*}
	\min_{z \in \yY} \frac{(w^\T V_{t-1}^{-1} w)^2}{ \|A_z^\T V_{t-1}^{-1}w \|^2} \leq \max_{x,y \in \yY} \max_{v \in \RR^d} \min_{z \in \yY} \frac{\ip{x-y,v}^2}{\|A_z^\T v\|^2 } = \alpha(\yY)\,.
	\end{align*}
	Since $x \leq -\log(1-x)$ for all $x \in [0,1]$,
	\begin{align*}
	\|w\|_{V_{t-1}^{-1}}^2 &= \min_{z \in \yY} \frac{(w^\T V_{t-1}^{-1} w)^2}{\|A_z^\T V_{t-1}^{-1}w \|^2} \max_{z \in \yY} \frac{\|A_z^\T V_{t-1}^{-1}w \|^2}{w^\T V_{t-1}^{-1} w}\\
	&\leq  2 \alpha(\yY) \max_{z \in \yY} \frac{w^\T V_{t-1}^{-1}A_z(\mathbf{1}_m + A_z^\T V_{t-1}^{-1} A_z)^{-1}A_z^\T V_{t-1}^{-1} w}{\|w\|_{V_{t-1}^{-1}}^2}\\
	&\leq - 2 \alpha(\yY) \max_{z \in \yY} \log\left(1 - \frac{w^\T V_{t-1}^{-1}A_z(\mathbf{1}_m + A_z^\T V_{t-1}^{-1} A_z)^{-1}A_z^\T V_{t-1}^{-1} w}{\|w\|_{V_{t-1}^{-1}}^2}\right)\\
	&=  2 \alpha(\yY) \max_{z \in \yY} \log\left(\frac{\|w\|_{V_{t-1}^{-1}}^2}{\|w\|_{(V_{t-1}|A_z)^{-1}}^2}\right) \\
	&=  2 \alpha(\yY) \max_{z \in \yY} I_t(z;w)
	\end{align*}
	This completes the proof.
\end{proof}
\noindent Define the most uncertain direction in the set of plausible maximisers,
\begin{align}
w_t = \argmax_{w \in \{w=x-y : x,y \in \pP_t\}} \|w\|_{V_{t-1}^{-1}}^2 \,.
\end{align}
Our next results extends the regret bounds to the variant of IDS that uses $I_t(x,w_t)$ as information function. Note that the information processing inequality (Lemma \ref{lem:info-processing}) implies that $\sum_{t=1}^n I_t(x_t;w_t) \leq \sum_{t=1}^n I_t(x_t)$, and therefore the bound in Lemma \ref{lem:total-info-gain} on the total information gain $\gamma_n$ continues to hold.
\begin{theorem}
 IDS, defined with the directed information gain $I_t(x;w_t)$, achieves for any $n \geq 1$, with probability at least $1-\delta$, $R_n \leq \oO\big(n^{2/3} (\alpha \beta_n (\gamma_n + \log \tfrac{1}{\delta}))^{1/3}\big)$ on globally observable games, and $R_n \leq \oO\left((\alpha_0 \beta_n (\gamma_n + \log \frac{1}{\delta})n)^{1/2}\right)$ on uniformly locally observable games.
\end{theorem}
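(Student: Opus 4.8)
The plan is to rerun the proofs of \cref{thm:global-upper} and \cref{thm:local-upper} essentially verbatim, replacing the full information gain $I_t(\cdot)$ by the directed gain $I_t(\cdot\,;w_t)$ and supplying the handful of ingredients on which those proofs rely. First I would check that the generic bound of \cref{lem:ids-regret} still applies to the IDS variant driven by $I_t(x;w_t)$: its only structural requirements are that $x\mapsto I_t(x;w_t)$ be $\fF_{t-1}$-measurable, non-negative, and bounded by $1$. Measurability holds because $w_t$ is a deterministic function of $V_{t-1}$ and $\pP_t=\pP(\cC_{t-1})$, which are $\fF_{t-1}$-measurable; non-negativity follows from $(V_{t-1}|A_x)^{-1}\preceq V_{t-1}^{-1}$; and the information processing inequality (\cref{lem:info-processing}) gives $I_t(x;w_t)\le I_t(x)$, which is bounded by $1$ exactly as argued for $I_t$ in the proof of \cref{lem:ids-regret}. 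That same inequality also gives $\sum_{t=1}^n I_t(x_t;w_t)\le\sum_{t=1}^n I_t(x_t)=\gamma_n$, so \cref{lem:total-info-gain} still controls the realized directed information gain and the $\gamma_n$ appearing in the final rates is the same one as before. (When $\pP_t$ is a singleton one has $w_t=0$ and $\Delta_t(\mu_t)=0$, so the round contributes nothing and is handled by convention; otherwise $w_t\neq 0$ and \cref{lem:explore-direction} forces $\max_z I_t(z;w_t)>0$, so the IDS minimization is well posed.)

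Second — the only genuinely new step — I would replace \cref{lem:global-exp} (resp.\ \cref{lem:local-exp}) by its directed analogue, which is precisely \cref{lem:explore-direction}. In the globally observable case take $w_t=\argmax_{w=x-y:\,x,y\in\pP_t}\|w\|_{V_{t-1}^{-1}}^2$ as in the definition preceding the theorem. Since $\hat x_t^*=\argmax_{x}\ip{x,\hat\theta_{t-1}}$ lies in $\pP_t$ (as $\hat\theta_{t-1}\in\cC_{t-1}$), \cref{lem:gap-maximisers} together with $\ip{y-\hat x_t^*,\hat\theta_{t-1}}\le 0$ gives $\Delta_t(\hat x_t^*)\le\beta_{t-1}^{1/2}\|w_t\|_{V_{t-1}^{-1}}$. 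Because $\pP_t\subseteq\xX$ we have $w_t\in\{x-y:x,y\in\xX\}$ and $w_t\in\laspan(A_z:z\in\xX)$ by global observability, so \cref{lem:explore-direction} with $\yY=\xX$ yields $\|w_t\|_{V_{t-1}^{-1}}^2\le 2\alpha\max_{z\in\xX}I_t(z;w_t)$ and hence $\Delta_t(\hat x_t^*)^2\le 2\alpha\beta_{t-1}\max_z I_t(z;w_t)$, the direct replacement for \cref{lem:global-exp}. In the locally observable case, the argument in the proof of \cref{lem:local-exp} already shows $\Delta_t(x)^2\le 4\beta_{t-1}\|w_t\|_{V_{t-1}^{-1}}^2$ for every $x\in\pP_t$; since $w_t\in\laspan(A_z:z\in\pP_t)$ by local observability, \cref{lem:explore-direction} with $\yY=\pP_t$ gives $\Delta_t(x)^2\le 8\alpha_t\beta_{t-1}\max_{z\in\pP_t}I_t(z;w_t)$ for all $x\in\pP_t$.

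Third, I would feed these two bounds through the unchanged remainders of the two proofs. \cref{lem:greedy} holds verbatim, as its proof uses only convexity and non-negativity of $I_t$, not its precise form. For the global case, randomizing between $\hat x_t^*$ and $z_t=\argmax_{z\in\xX}I_t(z;w_t)$ with weight $p=\Delta_t(\hat x_t^*)$ gives $\Psi_t(\mu_t)\le 16\alpha\beta_{t-1}/\Delta_t(\mu_t)$ exactly as in the proof of \cref{thm:global-upper}; plugging into \cref{lem:ids-regret} and optimizing $\eta$ yields the $n^{2/3}$ rate. For the local case, the bound on $\Delta_t(x)^2$ over $\pP_t$ with $z_t=\argmax_{z\in\pP_t}I_t(z;w_t)$ gives $\Psi_t(\mu_t)\le 8\alpha_t\beta_{t-1}$, and \cref{lem:ids-regret} with $\eta=0$ closes the argument (replacing $\alpha_t$ by $\alpha_0$ under uniform local observability). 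I do not anticipate a real obstacle here: the only points that need care are the bookkeeping over which index set ($\xX$ versus $\pP_t$) hosts the informative action in each regime, and confirming that the directed gain meets the measurability and boundedness hypotheses of the black-box lemmas into which it is substituted — both of which are covered above.
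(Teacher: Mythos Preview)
Your proposal is correct and follows essentially the same approach as the paper: rerun the proofs of \cref{thm:global-upper} and \cref{thm:local-upper}, substituting \cref{lem:explore-direction} for \cref{lem:global-exp}/\cref{lem:local-exp} and invoking \cref{lem:info-processing} to control the total directed information gain by $\gamma_n$. The paper's own proof is a one-line remark to precisely this effect; your write-up simply fills in the bookkeeping (measurability, boundedness, the $w_t=0$ edge case, and the use of \cref{lem:gap-maximisers} to restrict the gap to $\pP_t$) that the paper leaves implicit.
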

\begin{proof}
	The proof is the same as for Theorem \ref{thm:global-upper} and Theorem \ref{thm:local-upper}, but uses the stronger inequality of Lemma \ref{lem:explore-direction} to bound $\max_{x,y \in \pP_t} \|x-y\|_{V_{t-1}^{-1}}^2 \leq \alpha(\pP_t) \max_{z \in \pP_t} I_t(x;w_t)$.
\end{proof}
Unlike for IDS defined with $I_t(x)$, the information gain $I_t(x;w_t)$ requires to compute the set of plausible maximizers $\pP_t = \{x \in \xX : \delta_t(x) = 0\}$. This can be done by computing $\delta_t(x) = \min_{\theta \in \cC_t} \max_{y \in \xX} \ip{y-x, \theta}$ for each $x \in \xX$. Note that the minimization over $\theta$ is on a convex function and therefore can be solved efficiently. \todoj{Explain how this can be solved efficiently.}

\subsection{Relation to the UCB algorithm}\label{app:ucb}
\citet{Kirschner2018} refer to the algorithm that chooses $x_t = \argmin_{x \in \xX} \Psi_t(\delta_x)$ as \emph{deterministic IDS}. Optimizing over a deterministic action choice is computationally cheaper and sufficient to obtain $\tilde{\oO}(\sqrt{n})$ regret on locally observable games as evident by Lemma \ref{lem:local-exp}. We draw a connection to the UCB algorithm. For $m=1$  and $\|A_x\|_{V_{t-1}^{-1}}^2 \ll 1$ we have
\begin{align*}
I_{t}(x) = \log (1 + \|A_x\|_{V_{t-1}^{-1}}^2) \approx \|A_x\|_{V_{t-1}^{-1}}^2\,.
\end{align*}
Define $\tilde I_t(x) = \|A_x\|_{V_{t-1}^{-1}}^2$ and $\tilde \Delta_t(x) = \max_y \ip{y,\hat \theta_{t-1}} + \beta_{t-1}^{1/2}\|y\|_{V_{t-1}^{-1}} - \left(\ip{x,\hat \theta_{t-1}} - \beta_{t-1}^{1/2} \|x\|_{V_{t-1}^{-1}}\right)$. The next lemma shows that in bandit games ($A_x = x$), deterministic IDS with $\tilde \Delta_t$ and $\tilde I_t$ as gap estimate and information gain, is equivalent to the UCB algorithm.
\begin{lemma}
	For a bandit game, let $x_t^\ucb = \argmax_{x \in \xX} \ip{x,\hat \theta_{t-1}} + \beta_{t-1}^{1/2} \|x\|_{V_{t-1}^{-1}}$ be the UCB action. Then,
	\begin{align*}
	x_t^\ucb \in	\argmin_{x \in \xX} \frac{\tilde \Delta_t(x)^2}{\tilde I_t(x) }\,.
	\end{align*}
\end{lemma}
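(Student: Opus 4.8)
The plan is to show that the UCB action is the unique maximizer of the quantity $\ip{x,\hat\theta_{t-1}} + \beta_{t-1}^{1/2}\|x\|_{V_{t-1}^{-1}}$ by construction, and then to reduce the claimed minimization of $\tilde\Delta_t(x)^2/\tilde I_t(x)$ to exactly this maximization. First I would write out the objective explicitly. Since $A_x = x$ in a bandit game, $\tilde I_t(x) = \|x\|_{V_{t-1}^{-1}}^2$ and, abbreviating $M := \max_{y\in\xX}\ip{y,\hat\theta_{t-1}} + \beta_{t-1}^{1/2}\|y\|_{V_{t-1}^{-1}}$ (a constant not depending on $x$), we have
\begin{align*}
\tilde\Delta_t(x) = M - \ip{x,\hat\theta_{t-1}} + \beta_{t-1}^{1/2}\|x\|_{V_{t-1}^{-1}}\,.
\end{align*}
Hence
\begin{align*}
\frac{\tilde\Delta_t(x)^2}{\tilde I_t(x)} = \left(\frac{M - \ip{x,\hat\theta_{t-1}}}{\|x\|_{V_{t-1}^{-1}}} + \beta_{t-1}^{1/2}\right)^2\,.
\end{align*}
Since $\beta_{t-1}^{1/2}\geq 0$ is a constant and the square is monotone on the relevant range (the inner expression is nonnegative because $M \geq \ip{x,\hat\theta_{t-1}} + \beta_{t-1}^{1/2}\|x\|_{V_{t-1}^{-1}} \geq \ip{x,\hat\theta_{t-1}}$, the latter using $\beta_{t-1}^{1/2}\|x\|_{V_{t-1}^{-1}}\geq 0$), minimizing $\tilde\Delta_t(x)^2/\tilde I_t(x)$ is equivalent to minimizing the ratio $\big(M - \ip{x,\hat\theta_{t-1}}\big)/\|x\|_{V_{t-1}^{-1}}$.

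Next I would argue that this ratio is minimized exactly by $x_t^{\ucb}$. Set $f(x) := M - \ip{x,\hat\theta_{t-1}}$ and $g(x) := \|x\|_{V_{t-1}^{-1}} > 0$ (assuming $x\neq 0$; the degenerate case $x=0$ has $\tilde I_t = 0$ and can be handled separately or excluded). The key observation is that for any $x$,
\begin{align*}
\frac{f(x)}{g(x)} \geq \beta_{t-1}^{1/2} \iff M - \ip{x,\hat\theta_{t-1}} \geq \beta_{t-1}^{1/2}\|x\|_{V_{t-1}^{-1}} \iff \ip{x,\hat\theta_{t-1}} + \beta_{t-1}^{1/2}\|x\|_{V_{t-1}^{-1}} \leq M\,,
\end{align*}
which holds for every $x\in\xX$ by the definition of $M$, with equality precisely when $x$ attains the maximum in $M$, i.e.\ when $x = x_t^{\ucb}$. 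Therefore the ratio $f/g$ is bounded below by $\beta_{t-1}^{1/2}$ on all of $\xX$ and this bound is achieved at $x_t^{\ucb}$, so $x_t^{\ucb}\in\argmin_{x\in\xX} f(x)/g(x) = \argmin_{x\in\xX}\tilde\Delta_t(x)^2/\tilde I_t(x)$.

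I do not anticipate a genuine obstacle here; the statement is essentially an algebraic identity once the objective is expanded, and the only thing requiring a word of care is the handling of degenerate cases — namely actions $x$ with $\|x\|_{V_{t-1}^{-1}} = 0$ (only possible if $x = 0$, since $V_{t-1}^{-1}$ is positive definite), where $\tilde I_t$ vanishes and the ratio is undefined; I would simply note that such an action is never selected (or assume $0\notin\xX$, or that $\xX$ contains a point with nonzero reward estimate). One should also confirm that the inner expression $(M-\ip{x,\hat\theta_{t-1}})/\|x\|_{V_{t-1}^{-1}} + \beta_{t-1}^{1/2}$ is indeed nonnegative so that squaring preserves the minimizer, which follows from the chain of inequalities above. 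This completes the argument.
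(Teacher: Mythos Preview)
Your proof is correct and follows essentially the same idea as the paper's: both arguments use the defining inequality $M \geq \ip{x,\hat\theta_{t-1}} + \beta_{t-1}^{1/2}\|x\|_{V_{t-1}^{-1}}$ to show the ratio is bounded below by $4\beta_{t-1}$, with equality at $x_t^{\ucb}$. The paper computes the ratio directly at $x_t^{\ucb}$ and then lower-bounds the numerator $\tilde\Delta_t(x)$ by $2\beta_{t-1}^{1/2}\|x\|_{V_{t-1}^{-1}}$ for arbitrary $x$, whereas you factor the ratio as $\big((M-\ip{x,\hat\theta_{t-1}})/\|x\|_{V_{t-1}^{-1}} + \beta_{t-1}^{1/2}\big)^2$ and reduce to minimizing the inner fraction; these are the same calculation in slightly different dress, and your treatment of the degenerate case $x=0$ and the nonnegativity check are a welcome bit of extra care.
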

\begin{proof}
	A related result appears in \cite[Lemma 2.1]{wang2016optimization}.	The information-ratio of the UCB action is
	\begin{align*}
	\frac{\tilde \Delta_t(x_t^\ucb)^2}{\tilde I_t(x_t^\ucb)} 
	= \frac{\left(\ip{x_t^\ucb,\hat \theta_{t-1}} + \beta_{t-1}^{1/2}\|x_t^\ucb\|_{V_{t-1}^{-1}} - \left(\ip{x_t^\ucb,\hat \theta_{t-1}} - \beta_{t-1}^{1/2} \|x_t^\ucb\|_{V_{t-1}^{-1}}\right)\right)^2}{\|x_t^\ucb\|_{V_{t-1}^{-1}}^2} 
	= 4\beta_{t-1}\,.
	\end{align*}
	Further, for any $x\in \xX$, $\max_y \ip{y,\hat \theta_{t-1}} + \beta_{t-1}^{1/2}\|y\|_{V_{t-1}^{-1}} \geq  \ip{x,\hat \theta_{t-1}} + \beta_{t-1}^{1/2}\|x\|_{V_{t-1}^{-1}}$, therefore
	\begin{align*}
	\frac{\tilde \Delta_t(x)^2}{\tilde I_t(x)} 
	\geq \frac{\left(\ip{x,\hat \theta_{t-1}} + \beta_{t-1}^{1/2}\|x\|_{V_{t-1}^{-1}} - \big(\ip{x,\hat \theta_{t-1}} - \beta_{t-1}^{1/2} \|x\|_{V_{t-1}^{-1}}\big)\right)^2}{\|x\|_{V_{t-1}^{-1}}^2} 
	= 4\beta_{t-1}\,.
	\end{align*}
	This shows that the UCB action minimizes the deterministic information ratio. 
\end{proof}
\section{Applications and Extensions}\label{app:applications}

We discuss applications and extensions. Note that we make use of the generalized setup (Appendix \ref{app:general-setup}) where necessary. In this case $\pP(\cC) \subset \iI$ is defined to contain indexes plausible actions.

\subsection{Full Information} \label{app:full-info-setting}
The {\em full information setting} is perhaps not the most interesting case to study in the stochastic setting, because IDS reduces to the naive algorithm that aggregates the information and always plays greedy. Nevertheless, we demonstrate that the regret bounds improve given the additional information. Two natural settings are $A_x = \eye_{d}$ and $A_x = X$ where $X = (x \in \xX)$ collects the actions as columns. In games where the information gain does not dependent on the action, IDS simply picks a regret minimizing action, $x_t = \argmin_{x \in \xX} \Delta_t(x)$.  We show that IDS achieves $R_n \leq \tilde \oO(\sqrt{dn})$, which improves a factor $\sqrt{d}$ compared to the bandit setting. For simplicity, let $A_x = \eye_{d}$ and therefore $V_t = (t+1)\eye_d$. The information gain is 
\begin{align*}
I_t(x) = \log \det( \eye_{d} + (t+1)^{-1}\eye_{d}) = d \log(1 + \tfrac{1}{t})\,.
\end{align*}
Hence $\gamma_n = d \log(n)$, but the ratio for the greedy action $\hat x_t^*$ is 
\begin{align*}
	\Psi_t(\hat x_t^*) \leq \frac{\beta_t \max_{x,y} \|x-y\|_{V_{t-1}^{-1}}^2}{d \log(1 + \tfrac{1}{t})} \approx \beta_t d^{-1}\,.
\end{align*}
Given that $\beta_n \approx d \log(n)$, this means the overall bound is $R_n \leq \tilde\oO(\sqrt{dn})$. The same holds true for the directed information gain $I_t(x;w_t)$. Interestingly, here the improvements stem from a reduced total information gain $\gamma_n \approx \log(n)$, and the ratio remains $\Psi_t(x_t^*) \approx \beta_t \approx  d\log(t)$.

\subsection{Transductive Bandits}\label{app:transductive}
In the {\em transductive bandit setting} \citep{fiez2019transductive} the learner has access to a set of informative actions $\sS \subset \RR^d$ for exploration and a set of actions $\vV \subset \RR^d$ that, when played, return reward.  The sets are allowed to overlap or be contained in the other. In the original formulation the objective is to minimize the simple regret of a final recommendation on the target set $\vV$ by choosing actions only from $\sS$. When the objective is to minimize cumulative regret, we can model this setting as a partial monitoring game by defining action-observation tuples  $\gG_1 =\{(0, x) : x \in \sS\setminus \vV \}$, $\gG_2 = \{(x, 0) : x \in \vV \setminus \sS \}$ and $\gG_3 = \{(x, x) : x \in \sS\cap \vV \}$, corresponding to informative actions with zero reward, actions that return reward but no information, and actions with the usual bandit information. The game is defined by $\gG= \gG_1 \cup \gG_2 \cup \gG_3$. Depending on the sets $\sS$ and $\vV$, the game can be either locally observable or globally observable (or even infeasible).

\subsection{Starved Bandits}\label{app:starved}
In the {\em starved bandit setting} \citep{BCL17} the learner only receives information if the action is sampled from a predefined distribution. Let $\xX_0$ be a ground set of actions that, when played, yield no information ($A_x = 0$). Denote by $\nu \in \sP(\xX)$ the distribution that the learner can use for exploration and $z_t \sim \nu$ is a sample from the distribution in round $t$. The starved bandit setting is closely related to the contextual partial monitoring game with $\gG = \gG_0 \cup \{ (z_t, A_{z_t} = z_t) \}$ added to the set of action-observation tuples. This game is globally observable if the distribution $\nu$ is sufficiently diverse such that the samples $(z_t)_{t=1}^n$ span the set of differences $\{x-y : x,y \in \xX\}$. Note that on a curved actions set, the rate can still be $\oO(\sqrt{n})$ as shown by \cite{BCL17} (also compare our results on curved action sets in Section \ref{sub:convex}).

\subsection{Batch Setting}\label{app:batch}
In the batch setting, the learner commits to choosing $B$ actions before observing the associated outcomes. This is important for applications where querying the objective for a number of actions in parallel is cheaper (or faster) than obtaining individual evaluations. This setting can be naturally formulated as a combinatorial partial monitoring game with semi-bandit feedback. Let $\xX_0 \subset \RR^d$ be a ground set of actions. The learner chooses a batch $(x_1, \dots, x_B) \in \xX_0^B$.  In the special case of a bandit feedback game, the reward is $\ip{x_1 + \dots + x_B, \theta}$ and the observation operator is $A_{x_1,\dots,x_B} = (x_1, \dots, x_B)$. With general feedback matrices, the batch game is
\begin{align*}
	\gG = \left\{ \left(\sum_{i=1}^B x_i, \big(A_{x_1}, \dots, A_{x_B}\big) \right)  :  (x_1, \dots, x_B)\in \xX_0^B \right\} \,.
\end{align*}
The bandit batch game is locally observable with $\alpha_0 \leq 4B^2$ (see Lemma \ref{lem:alignment-bounds}). The disadvantage of this formulation is, however, that the action space is exponentially large. Finding an efficient approximation of the IDS distribution is an interesting direction for future work.

\subsection{Dueling Bandits with Average Reward}\label{app:dueling} Let $\xX_0 \subset \RR^d$ be a ground set of actions. The dueling bandit with average reward is the following game with index set $\iI= \xX_0 \times \xX_0$:
\begin{align*}
\gG = \left\{ \left(\frac{x_1 + x_2}{2}, x_1 - x_2\right) : (x_1, x_2) \in \iI \right\}\,.
\end{align*}
In words, the learner can pick any pair of actions $x_1, x_2 \in \xX_0$, obtains the average reward $(x_1 + x_2)^\T \theta / 2$ and a noisy observation of the reward difference $(x_1 - x_2)^\T \theta$. Note that the learner can also choose $(x_1, x_1)$ with reward $x_1^\T \theta$ and no observation. Let $\pP(\cC)$ be a plausible set of actions. The first observation is that if $(x_1, x_2) \in \pP(\cC)$ then $(x_1,x_1) \in \pP(\cC)$ and $(x_2, x_2) \in \pP(\cC)$, because $(x_1 + x_2)/2 \in [x_1, x_2]$ lays on the line segment between $x_1$ and $x_2$. \todojg{Moreover, $(x_1,x_1), (x_2, x_2)$ and $(x_1, x_2)$ are all contained in the neighborhood $\nN_{(x_1,x_1), (x_2, x_2)}$ (TODO: neighborhod undefined at that point).}
Let $(x_1, x_2), (y_1, y_2) \in \pP(\cC)$ be two plausible actions. We can choose a path $(z_1, \dots, z_l)$ with $z_1 = x_1$, $z_l = y_1$ and $(z_i,z_{i+1}) \in \pP(\cC)$. Therefore we can write 
\begin{align*}
x_1 - y_1 = \sum_{i=1}^{l-1} x_i - x_{i+1} = \sum_{i=1}^{l-1} A_{(x_i, x_{i+1})}\,.
\end{align*}
The difference $x_2 - y_2$ can be written similarly, which shows that $\frac{x_1 + x_2}{2} -\frac{y_1 + y_2}{2} \in \laspan(A_i : i \in \pP(\cC))$. This shows that the game is locally observable. Turning to the local alignment constant 
\begin{align*}
\alpha(\pP) =\max_{v\in \RR^d} \max_{(x_1,x_2), (y_1, y_2) \in \pP} \min_{(z_1,z_2) \in \pP} \frac{\ip{(x_1 + x_2)/2 - (y_1 + y_2)/2, v}^2}{\|(z_1 - z_2)^\T v\|^2 }\,.
\end{align*}
Using Lemma \ref{lem:alignment-bounds} and the path construction above we can bound $\alpha \leq l$ or $\alpha \leq C d$.

\paragraph{Tightening the Alignment Constant} Define the sets
\begin{align*}
\qQ_1(\cC) &= \{i \in \iI : x_i \in \conv(x_j : j \in \pP(\cC)) \}\\
\qQ_2(\cC) &= \{ i  \in \iI: \Delta_\cC(x_i) \leq \max_{j \in \pP(\cC)} \Delta_\cC(x_j)\}
\end{align*}
with the regret estimate $\Delta_\cC(x) = \max_{\theta \in \cC} \max_{j \in \pP(\cC)} (x_j - x)^\T\theta$.
Note that $\Delta_\cC(x)$ is a convex function which implies that $\pP(\cC) \subset \qQ_1(\cC) \subset \qQ_2(\cC)$, but equality is not true in general. The observation is that in locally observable games, we can play actions in  $\qQ_2(\pP_t)$ without worsening the regret bound. Consequently, the local alignment constant can be tightened to
\begin{align*}
\bar \alpha(\cC)	= \max_{v\in \RR^d} \max_{i,j \in \pP(\cC)} \min_{k \in \qQ_2(\cC)} \frac{\ip{x_i-x_j, v}^2}{\|A_k^\T v\|^2 }\,.
\end{align*}
Clearly, $\bar \alpha(\cC) \leq \alpha(\cC)$ and all regret bounds hold true with $\alpha$ replaced by $\bar \alpha$. For the dueling bandit game with average reward, recall that $x_1,  y_1\in \pP(\cC)$ and therefore $(x_1 + y_1)/2 \in \conv(x_1, y_1)$, and the same holds true for $x_2,y_2$. This means we can now choose $A_{(x_1, y_1)} = x_1 -y_1$ and $A_{(x_2, y_2)}=x_2 - y_2$ as a response to estimate along the direction $(x_1 + x_2)/2 - (y_1 + y_2)/2$. We then write
\begin{align*}
\frac{x_1 + x_2}{2} - \frac{y_1 + y_2}{2} = \frac{1}{2} A_{(x_1, y_1)} + \frac{1}{2} A_{(x_2, y_2)}\,,
\end{align*}
and therefore, using the argument of Lemma \ref{lem:alignment-bounds}, $\bar \alpha(\cC) \leq 1$. 

\subsection{Partial Monitoring in Reproducing Kernel Hilbert Spaces}\label{app:rkhs}

The \emph{kernelized} setting is a practically relevant extension of the linear setting, where the feature dimension can be infinite. Let $\xX_0$ be a ground set of actions, not to be confused with the features. This is often a subset of $\RR^d$ but can be defined on other structures (e.g.\ graphs) as well. The actions $x \in \xX_0$ exhibit a non-linear dependence on the features through a positive-definite kernel map $k : \xX_0 \times \xX_0 \rightarrow \RR$. Let $\hH$ be the reproducing kernel Hilbert space (RKHS) corresponding to the given kernel $k$ and Hilbert norm $\|\cdot\|_{\hH}$. Vectors in $\hH$ represent functions over $\xX_0$, so we denote the unknown parameter by $f \in \hH$ (instead of $\theta$). The standard boundedness assumption is that the unknown function has bounded Hilbert norm $\|f\|_{\hH} \leq 1$. The kernel features $k_x = k(x, \cdot) \in \hH$ satisfy $f(x) = \ip{k_x, f}$ according to the reproducing property and the set of kernel features associated to the actions is $\xX = \{k_x : x \in \xX_0\}$. The best action is $x^* = \argmax_{x \in \xX_0} f(x)$, and the regret is 
\begin{align*}
	R_n = \sum_{t=1}^n f(x^*) - f(x_t)\,.
\end{align*}
The linear observation functions are linear operators $A_x : \hH \rightarrow \RR^m$. As before, the observations when choosing $x_t$ are $\a_t = A_{x_t}f + \epsilon_t$. The regularized kernel least squares estimator is
\begin{align}\label{eq:kernel-rls}
\hat{f}_t = \argmin_{f \in \hH} \sum_{s=1}^t \|A_{x_s} f - \a_s \|^2 + \|f\|_\hH^2\,.
\end{align}
In the bandit setting, the \emph{kernel trick} allows to express all quantities of interest in terms of the inner product $\ip{k_x,k_y} = k(x,y)$ evaluated on observed data points. In the general case where observations are generated from the observation operators $A_x$, we will need a slightly stronger assumption.
Denote the adjoint map of $A_x$ by $A_x^* : \RR^m \rightarrow \hH$. The requirement is that the matrix $M_{x,y} = A_x A_y^* \in \RR^{m\times m}$ and the vectors $k_x A_y^* \in \RR^m$ can be computed for any $x,y \in \xX_0$ (the theory also holds without the assumption, but it is needed to implement the algorithm if the feature dimension is infinite). We detail such a computation in examples below. By (a slight modification of) the representer theorem, we can write the solution to \eqref{eq:kernel-rls} as $\hat{f_t} = \sum_{s=1}^t A_{x_s}^* \varphi_s$ for weights $\varphi_s \in \RR^m$. Denote $\mathbf \a_t \in \RR^{mt}$ the vector that collects all observations $\mathbf \a_t = (\a_1^\T, \dots, \a_t^\T)^\T$, $K_t \in \RR^{mt\times mt}$ the kernel matrix that collects the matrices $(K_t)_{ij} = M_{x_i,y_i}$ and $k_t(x) \in \RR^{mt}$ the evaluation vector $k_t(x) = (k_x A_{x_1}^*, \dots, k_x A_{x_t}^*)^\T$. The solution $\hat{f}_t(x) = \ip{k_x, \hat f_t}$ to the least squares problem evaluated  at $x \in \xX_0$ is
\begin{align*}
	\hat{f}_t(x) = k_t(x)^\T (K_t + \eye_{mt})^{-1}\mathbf \a_t\,.
\end{align*}
The estimate corresponds to the posterior mean of a Gaussian process (GP) model with kernel $k$ and Gaussian likelihood \citep[c.f.][]{kanagawa2018gaussian}. The gap estimate at time $t+1$ is defined as
\begin{align*}
	\Delta_{t+1}(x) = \max_{y \in \xX_0} \hat f_{t}(y) - \hat f_{t}(x) + \beta_{t} \sqrt{\sigma_t(x)^2 + \sigma_t(y)^2 - 2k_t(x,y)}\,,
\end{align*}
where
\begin{align*}
	k_t(x,y) &= k(x,y) - k_t(x)^\T(K_t + \eye_{mt})^{-1}k_t(y)\,,\\
	\sigma_t(x) &= \sqrt{k_t(x,x)}\,,\\
	\beta_t^{1/2} &= \sqrt{\log \det(K_t + \eye_{mt})  + 2\log \tfrac{1}{\delta}} + 1\,.
\end{align*}
The estimate is chosen such that with probability at least $1-\delta$, $f(x^*) - f(x)\leq \Delta_t(x)$  for any $x \in \xX$ and $t \geq 1$ \citep[Theorem 3.11]{AbbasiYadkori2012}.

To compute the information gain, define $M_t(x) = (M_{x,x_1}^\T, \dots, M_{x,x_t}^\T)^\T \in \RR^{mt\times m}$. The kernelized information gain \eqref{eq:info-full} is given by
\begin{align*}
	I_t(x) = \log \det \left(\eye_m + M_{x,x} - M_t(x)^\T K_t^{-1} M_t(x)\right)\,.
\end{align*}

Denote by $k_{t|A_z}$ and $\sigma_{t|A_z}$ the uncertainty estimates that are (tentatively) updated with an observation generated from $A_z$. Such an update does not require the observation outcome $y_t$, similar to the linear case, where we can update the precision matrix $V_t|A_z = V_t + A_zA_z^\T$. Further, let $w_{x,y} = k_x - k_y$ be the difference of kernel features for the gap difference that we want to estimate. The kernelized directed information gain is
\begin{align*}
	I_t(z;w_{x,y}) = \log \left(\frac{\sigma_t(x)^2 + \sigma_t(y)^2 - 2 k_t(x,y)}{\sigma_{t|A_z}(x)^2 + \sigma_{t|A_z}(y)^2 - 2 k_{t|A_z}(x,y)}\right)\,.
\end{align*}
As before the information processing inequality (Lemma \ref{lem:info-processing}) implies that $I_t(z;w_{x,y}) \leq I_t(z)$. The bound in  Lemma \ref{lem:total-info-gain} on the total information gain $\gamma_n = \sum_{t=1}^n I_t(x_t) = \log \det(K_n + \eye_{mn})$ for finite feature dimension can be replaced by bounds that depend on the eigenspectrum of the kernel \citep{Srinivas2009}, for example $\gamma_n = \oO(\log(n)^{d+1})$ for the squared-exponential kernel on $\RR^d$. We remark that in the kernelized setting, only the computation of the estimator and information gain are different compared to the linear setting. The regret analysis remains the same with the appropriate constants $\beta_n$ and $\gamma_n$ , defined above. We therefore summarize our result:

\begin{corollary} The kernelized variant of IDS achieves $R_n \leq \oO\big(n^{2/3} (\alpha \beta_n (\gamma_n + \log \tfrac{1}{\delta}))^{1/3}\big)$ on globally observable games and  $R_n \leq \oO\left((\alpha_0 \beta_n (\gamma_n + \log\frac{1}{\delta}) n)^{1/2}\right)$ on uniformly locally observable games for any $n \geq 1$  with probability at least $1-\delta$.
\end{corollary}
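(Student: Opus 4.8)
The plan is to observe that the regret analysis behind \cref{thm:global-upper} and \cref{thm:local-upper} is entirely coordinate-free, so it transports verbatim to an arbitrary separable Hilbert space once each of its ingredients is re-established there. Concretely, those proofs use only: (i) a self-normalised concentration guarantee placing the unknown parameter in every confidence ellipsoid $\cC_t$ simultaneously with probability $1-\delta$; (ii) that the estimate $\Delta_t$ built from $\cC_t$ upper bounds the instantaneous regret; (iii) the purely algebraic facts relating $\|x-y\|_{V_{t-1}^{-1}}^2$, the information gain $I_t$, and the alignment constant (as in \cref{lem:global-exp} and \cref{lem:local-exp}), together with $V_{t-1}^{-1}\preceq\eye$ and $a\le 2\log(1+a)$ on $[0,1]$; (iv) the telescoping identity $\gamma_n=\sum_t I_t(x_t)=\log\det V_n-\log\det V_0$; and (v) the scalar Freedman-type martingale argument of \cref{lem:ids-regret}. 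None of (i)--(v) uses finite-dimensionality of the parameter space.

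First I would set up the abstract version: replace $\theta\in\RR^d$ by $f\in\hH$, the matrix $A_x$ by a bounded operator $A_x:\hH\to\RR^m$ with adjoint $A_x^*$, and $V_t=\eye_\hH+\sum_{s\le t}A_{x_s}^*A_{x_s}$, a self-adjoint operator with $\eye_\hH\preceq V_t$, hence bounded inverse $V_t^{-1}\preceq\eye_\hH$. Every determinant appearing in the analysis stays finite-dimensional: the per-step gain is $I_t(x)=\log\det(\eye_m+A_xV_{t-1}^{-1}A_x^*)$ (an $m\times m$ determinant), and by the matrix determinant lemma $\log\det V_t-\log\det V_{t-1}=I_t(x_t)$, so by Sylvester's determinant identity $\gamma_n=\sum_t I_t(x_t)=\log\det(K_n+\eye_{mn})$ --- precisely the $\gamma_n$ of the RKHS section, now bounded by kernel-eigenspectrum quantities \citep{Srinivas2009} rather than $d\log(1+nm/d)$. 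Likewise $\beta_t^{1/2}=\sqrt{\gamma_t+2\log(1/\delta)}+1$ is unchanged, and $A_xV_{t-1}^{-1}A_x^*\preceq A_xA_x^*\preceq\|A_x\|_2^2\,\eye_m\preceq\eye_m$ keeps each $I_t(x)=\oO(1)$, so the boundedness hypothesis used in \cref{lem:ids-regret} still holds.

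Next I would re-establish ingredients (i)--(iii). For concentration, invoke the RKHS self-normalised bound \citep[Theorem 3.11]{AbbasiYadkori2012} to get $f\in\cC_t$ for all $t$ with probability at least $1-\delta$; then by definition $f(x^*)-f(x)=\ip{k_{x^*}-k_x,f}\le\Delta_t(x)$, so the gap estimate is valid, and the representer-theorem formulas for $\hat f_t$, $\sigma_t$, $k_t(x,y)$ and $I_t$ in the RKHS section are merely Schur-complement rewrites of the abstract quantities, needed only to \emph{run} the algorithm, not for the analysis. For the alignment constant, global (respectively uniformly local) observability in $\hH$ means each difference $k_x-k_y$ lies in the closed linear span of $\{A_z^*:z\in\xX\}$ (respectively of the operators indexed by a plausible-maximiser set), and the argument of \cref{lem:alignment-bounds} bounds $\alpha$ (respectively $\alpha(\yY)$) by $(\sum_z\|w_z\|)^2$ for a minimal-norm representation $k_x-k_y=\sum_z A_z^*w_z$; this estimate is dimension-free and holds verbatim in $\hH$, so $\alpha<\infty$ under the global hypothesis and $\alpha(\pP_t)\le\alpha_0$ under the uniformly local hypothesis (and $\alpha\le 4$ in the bandit case). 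With these in place, \cref{lem:global-exp}, \cref{lem:greedy} and \cref{lem:local-exp} go through word for word, and so do the two main proofs: randomising between the greedy and the most-informative action gives $\Psi_t(\mu_t)\le 16\alpha\beta_{t-1}/\Delta_t(\mu_t)$ (globally observable) or $\Psi_t(\mu_t)\le 8\alpha_t\beta_{t-1}$ (uniformly locally observable), and feeding this into \cref{lem:ids-regret} and optimising $\eta$ yields the claimed rates $\oO\!\big(n^{2/3}(\alpha\beta_n(\gamma_n+\log\tfrac1\delta))^{1/3}\big)$ and $\oO\!\big((\alpha_0\beta_n(\gamma_n+\log\tfrac1\delta)n)^{1/2}\big)$, with $\beta_n,\gamma_n$ now their RKHS counterparts.

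The main obstacle is bookkeeping rather than a new idea: one must check that every $\log\det$, $\lambda_{\max}$, inverse and $\preceq$ from the finite-dimensional proof either (a) actually lives on the finite-dimensional range of some $A_x$ or on the $mn\times mn$ kernel Gram matrix --- where it is literally the same object and the bound is unchanged --- or (b) is an operator relation on $\hH$ that continues to hold for bounded positive operators with bounded inverse; and that the only place the ambient dimension $d$ entered a \emph{quantitative} estimate, namely the bound $\alpha(\pP)\le d\,\lambda_{\min}(BB^\T)^{-1}$ in \cref{lem:alignment-bounds}, is simply dropped in favour of the dimension-free $(\sum_z\|w_z\|)^2$ bound. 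Once this is verified the corollary follows, since --- as already noted in the text --- only the computation of $\hat f_t$, $\Delta_t$ and $I_t$ changes, while the regret argument is identical.
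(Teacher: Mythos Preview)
Your proposal is correct and matches the paper's own treatment: the paper does not give a separate proof but simply remarks that ``only the computation of the estimator and information gain are different compared to the linear setting'' while ``the regret analysis remains the same with the appropriate constants $\beta_n$ and $\gamma_n$,'' and then states the corollary. Your write-up is a careful expansion of exactly this observation --- verifying that the concentration bound, the information-gain telescoping, and Lemmas~\ref{lem:global-exp}, \ref{lem:greedy}, \ref{lem:local-exp}, \ref{lem:ids-regret} are all dimension-free --- so it is fully aligned with (and more detailed than) the paper's argument.
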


\paragraph{Example: Kernelized Dueling Bandits}
We illustrate a dueling bandit setting, where the learner chooses two actions $(x,x') \in \xX_0^2$ and observes binary feedback on $f(x) \geq f(x')$. In the partial monitoring formulation, the observation operator is $A_{x,x'} = k_x - k_{x'}$, which means that the learner observes $\ip{k_x - k_{x'},f} = f(x) - f(x')$ up to noise. The learner obtains the reward of the first action (other reward models are possible), so the set of action-observation tuples is  
\begin{align*}
\gG = \{\left(k_x, A_{x,x'} = k_x - k_{x'}\right) : (x,x') \in \xX_0 \times \xX_0 \}\,.
\end{align*}
The noise on the observation $\a_t = f(x_t) - f(x_t') + \epsilon_t$ is such that $\a_t \in \{-1,1\}$ and $\EE[\a_t] = f(x_t) - f(x_t')$ (i.e.\ $\PP[\a_t=1] = 1 - \PP[\a_t = -1] = (1 + f(x) - f(x'))/2$). The quantities that are required to compute the estimator are
\begin{align*}
	M_{(x,x'),(z,z')} &= (k_x - k_{x'})(k_{z} - k_{z'})^* = k(x,z) - k(x,z') - k(x',z) + k(x',z')\,,\\
	k_x A_{(z,z')}^* &= k_x (k_z + k_z')^* = k(x,z) - k(x,z')\,.
\end{align*}
Kernelized dueling bandits have been studied in the literature \citep{gonzalez2017preferential,sui2017correlational,sui2018advancements} as well as extensions with multi-point comparisons \citep{sui2017multi}. Assuming that the learner can compare any pair of actions $(x,x')$, the setting is locally-observable by nature with $\alpha \leq 1$. Therefore, IDS achieves a $\tilde \oO(\sqrt{\beta_n \gamma_n n})$ regret bound. The same holds true for the deterministic variant that simply chooses the action which minimizes the information ratio.

\paragraph{Example: Bayesian Optimization with Gradients} While Bayesian optimization (or kernelized bandits) is typically phrased for the noisy, zero-order oracle, previous work also incorporates gradient information where it is available \citep{wu2017bayesian}. We illustrate a setting where the learner \emph{only} observes the gradient, which can be understood as a type of dueling bandit. Let $\xX_0 \subset \RR^d$ be a compact, connected domain and $f : \xX_0 \rightarrow \RR^d$ be an element in $\hH$ with a kernel that guarantees that any $f \in \hH$ is continuously differentiable. The gradient $\nabla_x$ acts linearly on the function $f$ and therefore is a valid choice for the observation operator $A_x : \hH \rightarrow \RR^d$ with $m=d$. The key step is to compute the quantities required for the estimation,
\begin{align*}
	(A_x A_y^*)_{ij} &= \ip{e_i, A_xA_y^* e_j} = \left(\nabla_x \ip{ A_y k_x,e_j}\right)_i = \frac{\partial}{\partial y_i}\frac{\partial}{\partial x_j} k(x,y)\,,\\
	(k_x A_y^*)_{i} &= \ip{k_x, A_y^* e_i} = \ip{\nabla_y k_x, e_i} = \frac{\partial}{\partial y_i} k(x,y) \,.
\end{align*}
The game where the learner observes only the gradient is globally observable, which means that for all $x,y \in \xX_0$, $k_x - k_y \in \laspan(A_x^* : x \in \xX_0)$. To see this, let $\tau : [0,1] \rightarrow \xX_0$ be a differentiable path with $\tau(0) = x$ and $\tau(1) = y$. We claim that
\begin{align*}
k_x - k_y = \int_0^1 A_{\alpha(t)}^*\dot \alpha(t) dt \,.
\end{align*}
This is verified, because for any $f \in \hH$ by the fundamental theorem of calculus,
\begin{align*}
 \left\langle\int_0^1 A_{\alpha(t)}^*\dot \alpha(t) dt,f\right\rangle &=  \int_0^1  \ip{A_{\alpha(t)}^*\dot \alpha(t), f} dt\\
&=  \int_0^1  \ip{\dot \alpha(t), A_x f} dt\\
&=  \int_0^1  \ip{\dot \alpha(t), \nabla_x f} dt\\
&= f(x) - f(y) = \ip{k_x - k_y, f}\,.
\end{align*}
If the learner observers both the function value and the gradient, the game is locally observable.

\paragraph{Example: Invasive Laser Alignment}
Consider a simplistic setup, where an experimenter wishes to align a laser on a squared target using two parameters $(x_1, x_2) \in \xX_0 = [-1,1]^2$ that correspond to a vertical and horizontal shift of the device (see Figure \ref{fig:laser} for an illustration). The power of the laser on a two-dimensional plane is given by an (initially) unknown function $f: \RR^2 \rightarrow \RR^2$. In the illustrated example it is set to $f(z_1,z_2) = \exp\left(- ((z_1 - 0.5)^2 + (z_2 - 0.5)^2)\right)$. The objective is to find a parameter setting that maximize the integrated intensity on the  (e.g.\ $1\times 1$) target, $I_t(x_1,x_2) = \int_{x_1}^{x_1+1} \int_{x_2 }^{x_2+1} f(z_1,z_2) dz_1dz_2$. At any step, the experimenter can choose to evaluate a setting $(x_1,x_2)$ and observes the corresponding intensity $I_t(x_1,x_2)$ up to noise. Since the intensity is the reward, this action has standard bandit feedback. Alternatively, the experimenter can drive a screen into the laser beam to measure the laser power on a $m\times m$-grid $G(x_1,x_2)$ centered at $(x_1,x_2)$, which yields $m^2$ noisy measurements $(f(z_1,z_2) : (z_1,z_2) \in G(x_1,x_2))$, possibly at a lower noise level than the integrated intensity measurement. As the screen blocks of the beam, there is no reward in such rounds (hence the term \emph{`invasive measurement`}). The learner therefore has the choice between a direct measurement of the objective and a more informative action that yields no reward. Clearly, the game is locally observable as each action contains the bandit feedback. We remark that the UCB algorithm \emph{never} chooses the invasive measurements, because the UCB score for these actions is always zero. On the other hand, IDS naturally trades of between the informative actions and those that lead to reward. In a (transductive) variant of the setup, the signal can \emph{only} be observed through the invasive measurements and the integrated signal is \emph{not} observed. In this case, the game is globally, but not locally observable.

We present a numerical simulation of this setup in Figure \ref{fig:laser}. Our set $\xX_0$ is discrete with 9 actions corresponding to a unit shift in any direction (or no shift). We use 25-dimensional features computed from a radial basis function kernel. In the setup where the reward signal can be observed directly, UCB outperforms IDS for the first $\sim 1000$ steps; but then IDS gains an advantage from choosing the more informative measurements from time to time. Without the direct reward observation, UCB continues to play actions that yield the integrated reward, but no longer receives any information. The parameter estimate is therefore never updated, and the UCB algorithm suffers linear regret. On the other hand, IDS still achieves no-regret through trading off the informative measurements with parameter settings that yield reward.
 
\begin{figure}[t]
	\centering
	\includegraphics{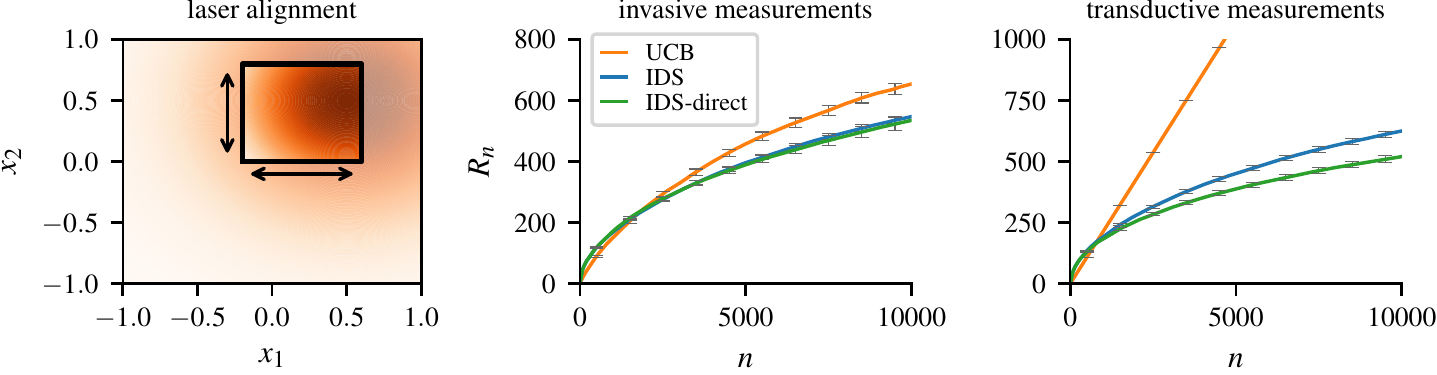}
	\caption{A demonstration of the stylized laser example. The left plot shows the energy  of the laser on the two dimensional plane. The objective is to shift the square target such that the integrated intensity within the square is maximized. The learner chooses to either observe a noisy measurement of the intensity, or alternatively, the energy function directly, evaluated on a measurement grid within the square (invasive measurements). The latter feedback is obtained from a screen that is put in the line of the laser, which blocks the beam and voids the reward signal. In the second variant (transductive measurements), the learner obtains information \emph{only} through the grid measurements. To solve the task, the learner needs to estimate the function and `blindly' move the target to the position with maximum integrated intensity. The plots on the right show the regret of IDS (with directed and undirected information gain) compared to the UCB algorithm. Note that UCB \emph{never} chooses the informative actions and therefore suffers linear regret on the second task. }\label{fig:laser}
\end{figure}

\section{Convex Action Sets}\label{app:convex}

The proof of Theorem~\ref{thm:convex} follows by using the curvature to bound the information ratio.
We will show the following:
\begin{align*}
\Psi_t(\mu_t) \leq C \beta_{t-1} \max\left(\frac{\diam(\xX)}{\kappa_{\circ}}, \diam(\xX)^2\right)\,.
\end{align*}
where $C > 0$ is a constant depending only on $(A_z : z \in \xX)$. Recall the definition of the support function $h_\xX(u) = \sup_{x \in \xX} \ip{x,u}$.  A simple calculation shows that $\nabla h_{\xX}(u) = \argmax_{x \in \xX} \ip{x, u}$, and $\hat{x}_t^* = \nabla h_{\xX}(\hat \theta_t)$ is the greedy action. Before the proof of the theorem we need a simple lemma bounding the regret in terms of the curvature.

\begin{lemma}\label{lem:curvature}
Suppose that $1/\kappa_{\circ} = \max_{x \in \xX} \lambda_{\max}(\nabla^2 h_{\xX}(x))$. Then
for any $\theta, \theta' \in \RR^d$,
\begin{align*}
\ip{\nabla h_{\xX}(\theta) - \nabla h_{\xX}(\theta'), \theta}
\leq \frac{2\snorm{\theta - \theta'}^2}{\kappa_{\circ} \snorm{\theta}}\,.
\end{align*}
\end{lemma}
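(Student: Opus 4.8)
The plan is to recognise the left-hand side as a second-order Taylor remainder of the support function $h_{\xX}$ and to bound that remainder with the curvature hypothesis. Write $g := \nabla h_{\xX}$, so that $g(u) = \argmax_{x \in \xX} \ip{x,u}$ is the support point in direction $u$. Two elementary consequences of the $1$-homogeneity of $h_{\xX}$ will be used repeatedly: Euler's identity $h_{\xX}(u) = \ip{g(u),u}$; and the fact that $g$ is $0$-homogeneous (so $g(c\theta')=g(\theta')$ for every $c>0$) while $\nabla^2 h_{\xX}$ is $(-1)$-homogeneous, so the hypothesis $\lambda_{\max}(\nabla^2 h_{\xX}(u))\le 1/\kappa_{\circ}$ for $\|u\|=1$ upgrades to $\lambda_{\max}(\nabla^2 h_{\xX}(x)) \le \tfrac{1}{\kappa_{\circ}\|x\|}$ for all $x\neq 0$. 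Expanding with Euler's identity gives the key identity: for every $c>0$,
\begin{align*}
\ip{g(\theta) - g(\theta'),\,\theta} \;=\; h_{\xX}(\theta) - h_{\xX}(c\theta') - \ip{\nabla h_{\xX}(c\theta'),\, \theta - c\theta'}\,,
\end{align*}
i.e.\ the quantity we must bound is exactly the first-order Taylor error of $h_{\xX}$ based at $c\theta'$ and evaluated at $\theta$; in particular it is independent of the choice of $c$.

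Assume $\theta,\theta'\neq 0$ (the degenerate cases are trivial) and split on whether $\|\theta-\theta'\|\le\|\theta\|$. In this main case pick $c$ so that $\bar\theta':=c\theta'$ has $\|\bar\theta'\|=\|\theta\|=:R$. A short computation (completing the square) gives $\|\theta-\bar\theta'\|^2 \le 2\|\theta-\theta'\|^2$, and since $\bar\theta'$ and $\theta$ lie on the same sphere of radius $R$, every point $\theta_s = (1-s)\bar\theta' + s\theta$ of the segment satisfies
\begin{align*}
\|\theta_s\|^2 = R^2 - s(1-s)\|\theta-\bar\theta'\|^2 \ge R^2 - \tfrac14\|\theta-\bar\theta'\|^2 \ge R^2 - \tfrac12\|\theta-\theta'\|^2 \ge \tfrac12 R^2 > 0\,,
\end{align*}
so $h_{\xX}$ is $C^2$ along this segment and Taylor's theorem with Lagrange remainder applies. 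Combining the key identity with the remainder and the homogeneous curvature bound, for some $\xi$ on the segment,
\begin{align*}
\ip{g(\theta) - g(\theta'),\,\theta} = \tfrac12 (\theta-\bar\theta')^{\T}\nabla^2 h_{\xX}(\xi)(\theta-\bar\theta') \le \frac{\|\theta-\bar\theta'\|^2}{2\kappa_{\circ}\|\xi\|} \le \frac{\sqrt 2\,\|\theta-\theta'\|^2}{\kappa_{\circ}\|\theta\|} \le \frac{2\,\|\theta-\theta'\|^2}{\kappa_{\circ}\|\theta\|}\,.
\end{align*}

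In the complementary case $\|\theta-\theta'\| > \|\theta\|$ the bound is essentially free: since $g(\theta),g(\theta')\in\xX$ we have $\ip{g(\theta)-g(\theta'),\theta} \le \diam(\xX)\|\theta\|$, while $\diam(\xX)\le 2/\kappa_{\circ}$ because a convex body all of whose radii of curvature are at most $1/\kappa_{\circ}$ is contained in a Euclidean ball of that radius (a rolling-ball/Blaschke argument; equivalently, restricting $h_{\xX}$ to a $2$-plane through a diametral direction and integrating the planar curvature inequality $f+f'' \le 1/\kappa_{\circ}$ against $\sin s$ over $[0,\pi]$ bounds the corresponding width). Hence $\ip{g(\theta)-g(\theta'),\theta}\le 2\|\theta\|/\kappa_{\circ} < 2\|\theta-\theta'\|^2/(\kappa_{\circ}\|\theta\|)$, completing the proof.

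The only real obstacle is soft: the curvature hypothesis controls $\nabla^2 h_{\xX}$ only away from the origin, so one must take the Taylor expansion along a segment bounded away from $0$. Rescaling $\theta'$ onto the sphere $\{\|x\|=\|\theta\|\}$ — legitimate precisely because $g$ is $0$-homogeneous — achieves this and is also what keeps the constant equal to $2$; a naive expansion along $[\theta',\theta]$ degrades when $\|\theta'\|\ll\|\theta\|$ or $\theta'$ is nearly orthogonal to $\theta$, and exactly these configurations fall into the trivial far-away case. The single genuinely geometric ingredient is $\diam(\xX)\le 2/\kappa_{\circ}$, used only there.
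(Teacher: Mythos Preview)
Your proof is correct and shares the same core ingredients as the paper's --- the $0$-homogeneity of $g=\nabla h_{\xX}$, the Hessian curvature bound, and the geometric inequality $\snorm{\theta - \tfrac{\snorm{\theta}}{\snorm{\theta'}}\theta'}^2 \le 2\snorm{\theta-\theta'}^2$ --- but the organisation differs. The paper normalises \emph{both} vectors to the unit sphere, setting $\eta=\theta/\snorm{\theta}$ and $\eta'=\theta'/\snorm{\theta'}$, and instead of your Bregman/Taylor-remainder identity it uses the optimality of $g(\eta')$ in direction $\eta'$ to pass to the symmetric form $\ip{g(\eta)-g(\eta'),\eta}\le\ip{g(\eta)-g(\eta'),\eta-\eta'}$, which it then bounds by $\snorm{\eta-\eta'}^2/\kappa_{\circ}$ via a line integral of the Hessian over $[\eta,\eta']$. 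This one-shot argument avoids your case split and the auxiliary fact $\diam(\xX)\le 2/\kappa_{\circ}$. Conversely, what your route buys is care about where the Hessian bound is invoked: you keep the interpolation segment at distance at least $\snorm{\theta}/\sqrt{2}$ from the origin, whereas the paper's segment $[\eta,\eta']$ can approach $0$ when $\eta$ and $\eta'$ are nearly antipodal, and by $(-1)$-homogeneity $\lambda_{\max}(\nabla^2 h_{\xX})$ is unbounded there --- so the paper's integral estimate tacitly assumes the segment stays away from the origin, a point your case analysis handles explicitly.
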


\begin{proof}
Abbreviate $\eta = \theta / \norm{\theta}$ and $\eta' = \theta' / \norm{\theta'}$. Note that for $c > 0$, $h_\xX(c\cdot u) = c \cdot h_\xX(u)$, which implies that $\nabla h_\xX(c\cdot u) = \nabla h_\xX(u)$.
Using the definitions,
\begin{align*}
\ip{\nabla h_{\xX}(\theta) - \nabla h_{\xX}(\theta'), \theta}
&=\ip{\nabla h_{\xX}(\eta) - \nabla h_{\xX}(\eta'), \theta} \\
&= \norm{\theta} \ip{\nabla h_{\xX} (\eta) - \nabla h_{\xX}(\eta'), \eta} \\
&\stackrel{(i)}{\leq} \norm{\theta} \ip{\nabla h_{\xX}(\eta) - \nabla h_{\xX}(\eta'), \eta - \eta'} \\
&\stackrel{(ii)}{\leq} \norm{\theta} \frac{\snorm{\eta - \eta'}^2}{\kappa_{\circ} } \\
&\stackrel{(iii)}{\leq} \frac{2\snorm{\theta - \theta'}^2}{\kappa_{\circ} \norm{\theta}}\,,
\end{align*}
where inequality (i) follows because $\nabla h_{\xX}(\eta') = \argmax_{x \in \xX} \ip{x, \eta'}$.
The second inequality (ii) follows from the definition of $\kappa_{\circ}$ and because
\begin{align*}
\ip{\nabla h_{\xX}(\eta) - \nabla h_{\xX}(\eta'), \eta - \eta'} = \int^1_0 \snorm{\eta - \eta'}^2_{\nabla^2 h_{\xX}((1-t)\eta + t \eta')} dt 
&\leq \frac{\norm{\eta - \eta'}^2}{\kappa_{\circ}}\,.
\end{align*}
The last inequality (iii) follows from the following geometric inequality:
\begin{align*}
\forall x, y \in \RR^d\,\quad \norm{x - \tfrac{\norm{x}}{\norm{y}} y}^2 \leq 2 \norm{x - y}^2\,.
\end{align*}

\end{proof}

\begin{proof}\textbf{of Theorem~\ref{thm:convex}}\,\,
Let $\hat x_t^* = \nabla h_{\xX}(\hat \theta_{t-1})$. Then, by Lemma~\ref{lem:curvature},
\begin{align*}
\Delta_t(x_t^*) \leq \max_{y \in \xX, \phi \in \cC_{t-1}} \ip{y - x, \phi} \leq \max_{\phi \in \cC_{t-1}} \frac{2\snorm{\phi - \hat \theta_{t-1}}^2}{ \kappa_{\circ} \norm{\phi}}\,.
\end{align*}
Let $z_t = \argmax_{z \in \xX} I_t(z)$.
By the assumption that $\xX$ spans $\RR^d$ and the definition of global observability, it follows that $\laspan(A_z : z \in \xX) = \RR^d$, which means there exists a constant $c$ depending only
on $(A_z : z \in \xX)$ such that
\begin{align*}
\lambda_{\max}(V_{t-1}^{-1}) 
\leq c \lambda_{\max}(A_{z_t}^\top V_{t-1}^{-1} A_{z_t})
\leq c I_t(z_t)\,,
\end{align*}
where the second inequality follows from the same argument as in Lemma~\ref{lem:global-exp}. 
Hence,
\begin{align*}
\diam(\cC_{t-1})^2 
&= \max_{\theta, \phi \in \cC_{t-1}} \norm{\theta - \phi}^2 \\ 
&\leq \max_{\theta, \phi \in \cC_{t-1}} \lambda_{\max}(V_{t-1}^{-1}) \norm{\theta - \phi}_{V_{t-1}^{-1}}^2 \\
&\leq \lambda_{\max}(V_{t-1}^{-1}) \beta_{t-1} \\
&\leq c \beta_{t-1} I_t(z_t)\,.
\end{align*}
The analysis of the information ratio is decomposed into two cases. The first case is when $\cC_{t-1}$ has a large diameter, in which case the information ratio is well controlled
without using curvature, and by only exploration. 
Suppose that 
\begin{align}
2 \max\left(1, \sqrt{\frac{1}{\kappa_{\circ} \diam(\xX)}}\right) \diam(\cC_{t-1}) \geq \max_{\phi \in \cC_{t-1}} \norm{\phi}
\label{eq:convex:cond}
\end{align}
Then, using Cauchy--Schwarz inequality and the definition of $\Delta_t$,
\begin{align*}
\Delta_t(z_t)^2 
&\leq \diam(\xX)^2 \max_{\phi \in \cC_{t-1}} \norm{\phi}^2 \\ 
&\leq 4\max\left(1, \frac{1}{\kappa_{\circ} \diam(\xX)}\right) \diam(\xX)^2 \diam(\cC_{t-1})^2 \\
&\leq 4c \beta_{t-1} \max\left(1, \frac{1}{\kappa_{\circ} \diam(\xX)}\right) \diam(\xX)^2 I_t(z_t)\,,
\end{align*}
which implies that
\begin{align*}
\Psi_t(z_t) 
\leq 4 c \beta_{t-1}  \max\left(1, \frac{1}{\kappa_{\circ} \diam(\xX)}\right) \diam(\xX)^2\,.
\end{align*}
Moving to the second case where \cref{eq:convex:cond} does not hold.
Let
\begin{align*}
p = \frac{2 \diam(\cC_{t-1})^2}{\kappa_{\circ} \min_{\phi \in \cC_{t-1}} \norm{\phi} \max_{\phi \in \cC_{t-1}} \norm{\phi} \diam(\xX)}\,.
\end{align*}
That $p \in [0,1]$ follows by virtue of the fact that
\begin{align*}
\min_{\phi \in \cC_{t-1}} \norm{\phi} \geq \max_{\phi \in \cC_{t-1}} \norm{\phi} - \diam(\cC_{t-1}) \geq 
\frac{1}{2} \max_{\phi \in \cC_{t-1}} \norm{\phi}\,.
\end{align*}
Hence, $\mu = (1 - p) \delta_{\hat x^*_t} + p \delta_{z_t} \in \sP(\xX)$.
Using that $\hat x_t^* = \nabla h_{\xX}(\hat \theta_{t-1})$ and Lemma~\ref{lem:curvature},
\begin{align*}
\Delta_t(\mu) 
&\leq p \diam(\xX) \max_{\phi \in \cC_{t-1}} \norm{\phi} + \max_{\phi \in \cC_{t-1}} \ip{\nabla h_{\xX}(\phi) - \nabla h_{\xX}(\hat \theta_{t-1}), \phi} \\
&\leq p \diam(\xX) \max_{\phi \in \cC_{t-1}} \norm{\phi} + \frac{2}{\kappa_{\circ}} \max_{\phi \in \cC_{t-1}} \frac{\snorm{\hat \theta_{t-1} - \phi}^2}{\norm{\phi}} \\
&\leq p \diam(\xX) \max_{\phi \in \cC_{t-1}} \norm{\phi} + \frac{2 \diam(\cC_{t-1})^2}{\kappa_{\circ} \min_{\phi \in \cC_{t-1}} \norm{\phi}} \\
&= \frac{4 \diam(\cC_{t-1})^2}{\kappa_{\circ} \min_{\phi \in \cC_{t-1}} \norm{\phi}}\,.
\end{align*}
Therefore, using the fact that $c \beta_{t-1} I_t(\mu) \geq c \beta_{t-1} p I_t(z_t) \geq p \diam(\cC_{t-1})^2$,
\begin{align*}
\frac{\Delta_t(\mu)^2}{I_t(\mu)} 
&\leq \frac{16 c \beta_{t-1} \diam(\cC_{t-1})^2}{p\kappa_{\circ}^2 \min_{\phi \in \cC_{t-1}} \norm{\phi}^2} \\
&= \frac{8 c\beta_{t-1}\diam(\xX) \max_{\phi \in \cC_{t-1}} \norm{\phi}}{\kappa_{\circ} \min_{\phi \in \cC_{t-1}} \norm{\phi}} \\
&\leq \frac{16 c \beta_{t-1} \diam(\xX) }{\kappa_{\circ}}\,.
\end{align*}
Combining the two parts shows that
\begin{align*}
\Psi_t(\mu_t) \leq c \beta_{t-1} \max\left(\frac{16 \diam(\xX)}{\kappa_{\circ}},\, 4\max\left(1, \frac{1}{\kappa_{\circ} \diam(\xX)}\right) \diam(\xX)^2\right)\,.
\end{align*}
With the bound on the information ratio and Lemma~\ref{lem:ids-regret}, the proof of Theorem~\ref{thm:convex} follows now immediately.
\end{proof}

\section{Contextual Partial Monitoring}

\subsection{Conditional IDS for Contextual Games}\label{app:conditional-ids}

Conditional IDS optimizes the sampling distribution for the given context $z_t$,
\begin{align*}
	\mu_t = \argmin_{\mu} \Psi_t(\mu;z_t)\,.
\end{align*}
The computational complexity required to find the minimizer of the information ratio is the same as in the non-contextual case. We extend the notion of the alignment-constant with the contextual argument,
\begin{align}\label{eq:conditional-alignment}
	\alpha(\cC,z) = \max_{v \in \RR^d} \max_{x,y \in \pP(\cC,z)} \min_{u \in \pP(\cC,z)} \frac{\ip{x-y,v}^2}{\|A_{u}^{z\T} v\|^2}\,,
\end{align}
where $\pP(\cC,z) = \cup_{\theta \in \cC} \{x \in \xX_z: \ip{x,\theta} \geq \max_{y \in \xX_z}\ip{y,\theta}\}$ is the plausible maximizer set for context $z$. For globally observable games, we denote $\alpha(z) = \alpha(\RR^d,z)$.

The next result is an immediate upper bound for the regret of conditional IDS under the assumption that for any context $z \in \zZ$, each game $(\xX_z, \aA_z)$ is globally or locally observable, respectively.
\begin{corollary} \label{cor:conditional-ids}
	If a contextual game is globally observable in the sense that for any context $z$, the game $(\xX_z, \aA_z)$ is globally observable with uniformly bounded alignment constant $\alpha(z) \leq \alpha$, then for any $n \geq 1$ with probability at least $1-\delta$, conditional IDS achieves
	\begin{align*}
		R_n \leq C n^{2/3} \left(\alpha \beta_n (\gamma_n + \log\tfrac{1}{\delta})\right)^{1/3} + 4\log\left(\frac{4n+4}{\delta}\right)\,.
	\end{align*}
	If the contextual game is locally observable in the sense that for any $z \in \zZ$, the game $(\xX_z, \aA_z)$ is locally observable with uniformly bounded alignment constant $\alpha(\cC;z) \leq \alpha_0$ for all convex $\cC\subset \RR^d$, then for any $n \geq 1$ with probability at least $1-\delta$, conditional IDS achieves
	\begin{align*}
		R_n \leq C \sqrt{n \alpha_0 \beta_n \left( \gamma_n + \log \tfrac{1}{\delta} \right)} + 4\log\left(\frac{4n+4}{\delta}\right) \,,
	\end{align*}
	where $C$ is a universal constant. 
\end{corollary}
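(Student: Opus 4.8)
The plan is to reduce the contextual statement to the non-contextual arguments behind \cref{thm:global-upper,thm:local-upper}, exploiting that conditional IDS optimizes the information ratio separately in each round using only the game $(\xX_{z_t},\aA_{z_t})$ revealed by the context. First I would observe that the estimator and concentration machinery is untouched: the shared parameter $\theta$ is estimated from observations $\a_t=A_{x_t}^{z_t\T}\theta+\epsilon_t$, which still have the linear form with operators satisfying $\|A_{x_t}^{z_t}\|_2\le 1$, so the confidence sets $\cC_t$ of \cref{eq:conf} and the guarantee $\theta\in\cC_t$ for all $t$ (with probability $\ge 1-\delta$) hold verbatim, and with the contextual gap estimate one still has $\ip{x_t^*-x_t,\theta}\le\Delta_t(x_t,z_t)$ on that event. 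Likewise $I_t(x,z)\in[0,1]$ since $V_{t-1}^{-1}\preceq\eye_d$, so the proof of \cref{lem:ids-regret} — which only uses that bound, the Cauchy--Schwarz split over $G_\eta$, Freedman's inequality, and the concentration of the realized information gain — carries over unchanged to give, with probability $\ge 1-\delta$,
\begin{align*}
R_n \le C\inf_{\eta\ge 0}\Big[n\eta+\sqrt{\textstyle\sum_{t\in[n]\setminus G_\eta}\Psi_t(\mu_t;z_t)\,(\gamma_n+\log\tfrac{1}{\delta})}\Big]+4\log\left(\tfrac{4n+4}{\delta}\right)\,,
\end{align*}
with $G_\eta=\{t:\Delta_t(\mu_t;z_t)\le\eta\}$ and $\gamma_n$ bounded by \cref{lem:total-info-gain}.

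Next I would bound the per-round information ratio $\Psi_t(\mu_t;z_t)$. For the globally observable case, fix $t$ and run the argument of \cref{lem:global-exp} inside the action set $\xX_{z_t}$ with operators $\{A_x^{z_t}\}$: the greedy action $\hat x_t^*=\argmax_{x\in\xX_{z_t}}\ip{x,\hat\theta_{t-1}}$ satisfies $\Delta_t(\hat x_t^*,z_t)^2\le 2\alpha(z_t)\beta_{t-1}\max_{z\in\xX_{z_t}}I_t(z,z_t)$, where $\alpha(z_t)=\alpha(\RR^d,z_t)$ from \cref{eq:conditional-alignment} is finite by global observability and \cref{lem:alignment-bounds}, and is $\le\alpha$ by hypothesis. \cref{lem:greedy} is a statement about a single information ratio, hence also holds for $\Psi_t(\cdot\,;z_t)$, so the same randomization between the greedy and the most informative action of $\xX_{z_t}$ used in the proof of \cref{thm:global-upper} yields $\Psi_t(\mu_t;z_t)\le 16\alpha\beta_{t-1}/\Delta_t(\mu_t;z_t)$. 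Substituting into the displayed bound and optimizing $\eta$ exactly as in \cref{thm:global-upper} gives the claimed $n^{2/3}$ rate. For the locally observable case I would, for each $t$, work with the plausible-maximizer set $\pP_t(z_t)=\pP(\cC_{t-1},z_t)$ of the context-$z_t$ game; on the event $\theta\in\cC_{t-1}$ this set is nonempty and contains $x_t^*$, so \cref{lem:gap-maximisers} and \cref{lem:local-exp}, applied to $(\xX_{z_t},\aA_{z_t})$, furnish an exploration action in $\pP_t(z_t)$ with $\Delta_t(x,z_t)^2\le 8\alpha(\cC_{t-1};z_t)\beta_{t-1}\max_{z\in\pP_t(z_t)}I_t(z,z_t)\le 8\alpha_0\beta_{t-1}\max_z I_t(z,z_t)$. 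Hence $\Psi_t(\mu_t;z_t)\le 8\alpha_0\beta_{t-1}$, and invoking the displayed bound with $\eta=0$ and $\beta_t\le\beta_n$ delivers the $\sqrt{n}$ rate. A union bound over the confidence event and the martingale events (each at level $\delta/2$, absorbed into the constant) finishes both cases.

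I expect the main obstacle to be bookkeeping rather than a new idea: one must verify that every ingredient lemma — \cref{lem:global-exp}, \cref{lem:greedy}, \cref{lem:local-exp}, \cref{lem:gap-maximisers} — depends only on the current round's action set and operators, so that it localizes correctly even though the action set varies with $t$, and that the alignment quantity appearing in the bound (the $\RR^d$ version $\alpha(z_t)$ for the global case, the restricted version $\alpha(\cC_{t-1};z_t)$ for the local case) is exactly what the hypotheses control uniformly in $z$. A minor care point is that the gap estimate's inner maximization ranges over $\xX_{z_t}$, which changes which actions count as plausible maximizers in a given round but not the structure of the arguments; and one should double-check that the $\eta$-optimization in \cref{lem:ids-regret} still yields the same exponents when $\Psi_t(\mu_t;z_t)$ is controlled uniformly by $16\alpha\beta_{t-1}/\Delta_t(\mu_t;z_t)$, which it does since $\beta_{t-1}\le\beta_n$ and $\sum_{t\notin G_\eta}\Psi_t(\mu_t;z_t)\le 16n\alpha\beta_n/\eta$.
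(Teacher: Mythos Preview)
Your proposal is correct and takes essentially the same approach as the paper: the paper simply states that the corollary ``follows along the lines of our main results, Theorem~\ref{thm:global-upper} \& \ref{thm:local-upper}'' because the assumptions bound the information ratio for every context, and you have spelled out exactly that reduction in detail, verifying that each ingredient lemma localizes to the game $(\xX_{z_t},\aA_{z_t})$ of the current round.
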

The corollary follows along the lines of our main results, Theorem \ref{thm:global-upper} \& \ref{thm:local-upper}. The assumptions of Corollary \ref{cor:conditional-ids} imply that the information ratio is bounded for any context $z$ in the respective regimes. One can achieve a slightly stronger result by replacing the alignment constant $\alpha_0$ with the average observed alignment $\frac{1}{n}\sum_{t=1}^n \alpha(\cC_t;z_t)$. In this case the bound explicitly depends on the sequence of observed contexts $(z_t)_{t=1}^n$ and the confidence sets $(\cC_t)_{t=1}^n$, which can lead to improved bounds in benign cases.

\subsection{Regret Bounds for Contextual IDS}\label{app:contextual-ids}
In this section we summarize results for \emph{contextual IDS}, which minimizes
\begin{align*}
\xi_t = \argmin_{\xi \in \sP(\xX \times \zZ),\,\xi_z = \nu} \Psi(\xi)\,,
\end{align*}
where $\nu \in \sP(\zZ)$ is a known distribution over the set of contexts. For general compact $\xX \times \zZ$, Prokhorov's theorem guarantees the existence of a minimizer \citep[cf.][]{Kirschner2018}. \todoj{Measurability of kernel?}

We overload the notation and let $\xX = \times_{z \in \zZ} \xX_z$ be the joint action space over all contexts and $\pP_t = \times_{z \in \zZ} \pP(\cC_t; z)$ the joined set of plausible maximisers. 
For a function $g : \zZ \rightarrow \zZ$ and a vector $x \in \xX$, we define $x_g \in \RR^{|\xX|}$ by $(x_g)_z = x_{g(z)}$. The \emph{expected alignment constant} is defined as
\begin{align}\label{eq:exp-alignment}
	\alpha(\cC,\nu) = \max_{v \in (\RR^d)^{\zZ}} \max_{x,y \in \pP(\cC)} \min_{u \in \pP(\cC)} \min_{g : \zZ \rightarrow \zZ} \EE_\nu\left[\ip{v,x-y}^{1/2}\right]^2 \EE_{\nu}\left[\frac{\|A_{u} v_{g} \|^2}{\ip{v_g,x_g - y_{g}}}\right]^{-1}\,.
\end{align}
As before, this corresponds to the signal-to-noise ratio that can be achieved by choosing the best aligned observation operator $\aA_u^z$ in context $z$, with the additional twist that learner can choose to estimate along a direction $x_{z'} -y_{z'}$ of a different context $z' = g(z)$. In the next lemma, we show that the definition satisfies more intuitive upper bounds. We will see that in the finite case, the definition of the alignment constant relates to natural conditions for local and global observability.

\begin{lemma}\label{lem:contextual-aligment-bounds} Let $\alpha(\cC,\nu)$ be the expected alignment \eqref{eq:exp-alignment} and $\alpha(\cC,z)$ the conditional alignment \eqref{eq:conditional-alignment}.
	\begin{enumerate}[i)] 
		\itemsep0pt
		\item For any convex $\cC \subset \RR^d$ and distribution $\nu \in \sP(\zZ)$ it holds that,
		\begin{align*}
		\alpha(\cC;\nu) \leq \EE_{z \sim \nu}[\alpha(\cC;z)]\,.
		\end{align*}
		\item For finite context sets,
		\begin{align*}
		\alpha(\cC,\nu) \leq \max_{z \in \zZ} \max_{x,y \in \pP(\cC;z), v\in \RR^d} \min_{z' \in \zZ} \min_{u \in \pP(\cC,z')} \frac{\ip{v,x-y}^2}{\nu(z')\|A_{u}^{z'\T} v\|^2} \,.
		\end{align*}
		\item For Dirac-delta distributions $\nu = \delta_{z}$,
		\begin{align*}
		\alpha(\cC,\delta_z) = \alpha(\cC,z) \,.
		\end{align*}
	\end{enumerate}

\end{lemma}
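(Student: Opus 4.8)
The plan is to unfold the definition \eqref{eq:exp-alignment} and, in each of the three cases, to collapse the two expectations down to a single context by a convenient choice of the reindexing map $g : \zZ \to \zZ$ and of the observation assignment $u \in \pP(\cC)$. Throughout, for $z$ in the support of $\nu$ write $a_z = \ip{v_z, x_z - y_z}$ (taken nonnegative, as in \eqref{eq:exp-alignment}) and $b_z = \|A_{u_z}^{z\T} v_z\|^2$, so that the per-context quantity underlying \eqref{eq:conditional-alignment} is $a_z^2 / b_z$.

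For part (i), I would first restrict the inner minimisation over $g$ to the identity map, which can only increase the value, so
\begin{align*}
\alpha(\cC, \nu) \leq \max_v \max_{x, y \in \pP(\cC)} \min_{u \in \pP(\cC)} \frac{\EE_\nu[a_z^{1/2}]^2}{\EE_\nu[b_z / a_z]}\,.
\end{align*}
For any fixed $v, x, y, u$, Cauchy--Schwarz applied to the functions $z \mapsto (a_z^2/b_z)^{1/2}$ and $z \mapsto (b_z/a_z)^{1/2}$, whose product is $z \mapsto a_z^{1/2}$, gives $\EE_\nu[a_z^{1/2}]^2 \leq \EE_\nu[a_z^2/b_z]\,\EE_\nu[b_z/a_z]$, hence the displayed ratio is at most $\EE_\nu[a_z^2/b_z]$. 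The map $u \mapsto \EE_\nu[a_z^2/b_z]$ is an integral whose integrand depends on $u$ only through $u_z$, so the minimisation decouples over contexts; using a measurable selection $z \mapsto u_z$ one gets $\min_u \EE_\nu[a_z^2/b_z] = \EE_\nu[\min_{u_z} a_z^2 / \|A_{u_z}^{z\T} v_z\|^2]$, and the inner minimum is at most $\alpha(\cC; z)$ by substituting $(v_z, x_z, y_z) \in \RR^d \times \pP(\cC; z)^2$ into \eqref{eq:conditional-alignment}. The resulting bound $\EE_\nu[\alpha(\cC; z)]$ is independent of $v, x, y$, so the outer maxima are vacuous and (i) follows.

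For part (ii), fix the maximising $v, x, y$ and let $\bar z$ attain $\max_z a_z$ over the finite support of $\nu$; then $\EE_\nu[a_z^{1/2}] \leq a_{\bar z}^{1/2}$, so the numerator $\EE_\nu[a_z^{1/2}]^2$ is at most $a_{\bar z}$. Choosing the constant map $g \equiv \bar z$ makes $v_g \equiv v_{\bar z}$ and $x_g - y_g \equiv x_{\bar z} - y_{\bar z}$, so after the inner minimisation over $u$ (which maximises the denominator) the denominator equals $a_{\bar z}^{-1}\,\EE_\nu[\max_{u_z} \|A_{u_z}^{z\T} v_{\bar z}\|^2]$, which for every $z' \in \zZ$ is at least $a_{\bar z}^{-1}\,\nu(z')\,\max_u \|A_u^{z'\T} v_{\bar z}\|^2$. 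Combining the two estimates, the ratio is bounded by
\begin{align*}
\frac{a_{\bar z}^2}{\nu(z')\,\max_u \|A_u^{z'\T} v_{\bar z}\|^2} = \min_{u \in \pP(\cC, z')} \frac{\ip{v_{\bar z}, x_{\bar z} - y_{\bar z}}^2}{\nu(z')\,\|A_u^{z'\T} v_{\bar z}\|^2}\,,
\end{align*}
and since $z'$ is free and $(\bar z, v_{\bar z}, x_{\bar z}, y_{\bar z})$ is a single context together with vectors in $\RR^d \times \pP(\cC; \bar z)^2$, taking $\min_{z'}$ and relaxing to a maximisation over all contexts and plausible maximisers gives the claimed inequality. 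Part (iii) is the case $\nu = \delta_z$, where both expectations collapse to evaluation at the point $z$, so
\begin{align*}
\alpha(\cC, \delta_z) = \max_v \max_{x, y} \min_{u} \min_{z' \in \zZ} \frac{\ip{v_z, x_z - y_z}\,\ip{v_{z'}, x_{z'} - y_{z'}}}{\|A_{u_z}^{z\T} v_{z'}\|^2}\,.
\end{align*}
Taking $z' = z$ gives $\alpha(\cC, \delta_z) \leq \alpha(\cC; z)$ exactly as in (ii), since only the $z$-coordinates of $v, x, y$ then matter. For the reverse inequality, choose the maximisers $(v^*, x^*, y^*)$ of $\alpha(\cC; z)$ and set the $z$-coordinates of $v, x, y$ equal to them; the coordinates at $z'' \neq z$ are chosen (e.g. by scaling $v_{z''}$ towards $0$, or exactly $v_{z''} = 0$ with $x_{z''} = y_{z''}$) so that any $g$ with $g(z) \neq z$ leaves the corresponding term no smaller than $\alpha(\cC; z)$ under the degeneracy convention ($0/0$) that renders \eqref{eq:exp-alignment} well defined, so that the inner minimum over $g$ is attained at $g(z) = z$ with value $\min_u \ip{v^*, x^* - y^*}^2 / \|A_u^{z\T} v^*\|^2 = \alpha(\cC; z)$.

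The content here is bookkeeping rather than depth. The points needing care are the interchange of the maxima, minima and expectation — in particular the identity $\min_u \EE_\nu[\,\cdot\,] = \EE_\nu[\min_{u_z}\,\cdot\,]$, which rests on separability of the objective and a measurable selection of $z \mapsto u_z$ — and the sign/degeneracy conventions ($a_z \geq 0$ and the treatment of $0/0$) built into \eqref{eq:exp-alignment}; the reverse inequality in (iii) hinges precisely on the latter, so I expect making that construction airtight to be the main obstacle.
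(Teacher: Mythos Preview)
Your proposal is correct and follows essentially the same route as the paper. For (i) you invoke Cauchy--Schwarz with the factorisation $a_z^{1/2} = (a_z^2/b_z)^{1/2}(b_z/a_z)^{1/2}$, which is exactly the paper's Jensen step for the convex map $(a,b)\mapsto a^2/b$; for (ii) your $\bar z$ and constant map $g\equiv\bar z$ coincide with the paper's $z^*$ and $g(z)=z^*$. For (iii) the paper merely writes ``immediate,'' so your careful observation that the reverse inequality hinges on a $0/0$ convention in \eqref{eq:exp-alignment} is more than the paper supplies; under the natural convention (restrict $g$ to contexts where the denominator term is well defined, equivalently treat $0/0$ as $0$ in the second expectation so its inverse is $+\infty$ and the minimiser avoids it), your construction with $v_{z''}=0$, $x_{z''}=y_{z''}$ for $z''\neq z$ forces $g(z)=z$ and recovers $\alpha(\cC;z)$ exactly.
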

The first inequality implies that the information ratio of \emph{contextual IDS} is never worse than for \emph{conditional IDS}. The second inequality captures the intuition that for every direction $x - y \in \xX_z$ in a context $z$, there needs to be a context $z'$ that appears with positive probability $\nu(z') > 0$ where $x-y$ can be estimated. The last inequality is a sanity check which shows that for a constant context, we recover the previous definitions.

\begin{proof}
	For i), note that
	\begin{align*}
	\alpha(\cC;\nu) &= \max_{v \in (\RR^d)^{\zZ}} \max_{x,y \in \pP(\cC)} \min_{u \in \pP(\cC)} \min_{g : \zZ \rightarrow \zZ} \EE_\nu\left[\ip{v_z,x_z-y_z}^{1/2}\right]^2 \EE_{\nu}\left[\frac{\|A_{u_z}^z v_{g(z)} \|^2}{\ip{v_{g(z)},x_{g(z)} - y_{g(z)}}}\right]^{-1}\\
	&\leq \max_{v \in (\RR^d)^{\zZ}} \max_{x,y \in \pP(\cC)} \min_{u \in \pP(\cC)} \EE_\nu\left[\ip{v_z,x_z-y_z}^{1/2}\right]^2 \EE_{\nu}\left[\frac{\|A_{u_z}^z v_{z} \|^2}{\ip{v_{z},x_{z} - y_{z}}}\right]^{-1}\\
	&\leq \max_{v \in (\RR^d)^{\zZ}} \max_{x,y \in \pP(\cC)} \min_{u \in \pP(\cC)} \EE_\nu\left[ \frac{\ip{v_z,x_z-y_z}^2}{\|A_{u_z}^z v_{z} \|^2}\right]\\
	&= \EE_{\nu} \alpha(\cC;z)\,.
	\end{align*}
	The first inequality follows by choosing the identity function $g(z) = z$. The second inequality uses the fact that $(a,b) \mapsto a^2/b$ is convex on $\RR \times \RR_{>0}$ and Jensen's inequality.
	
	For ii), denote $z^* = z^*(v,x,y) = \argmax_{z \in \zZ} \ip{v_z,x_z-y_z}$ and define $g(z) = z^*$ for all $z \in \zZ$. Then
	\begin{align*}
	\alpha(\cC;\nu) &= \max_{v \in (\RR^d)^{\zZ}} \max_{x,y \in \pP(\cC)} \min_{u \in \pP(\cC)} \min_{g : \zZ \rightarrow \zZ} \EE_\nu\left[\ip{v_z,x_z-y_z}^{1/2}\right]^2 \EE_{\nu}\left[\frac{\|A_{u_z}^z v_{g(z)} \|^2}{\ip{v_{g(z)},x_{g(z)} - y_{g(z)}}}\right]^{-1}\\
	&\leq \max_{v \in (\RR^d)^{\zZ}} \max_{x,y \in \pP(\cC)} \min_{u \in \pP(\cC)} \ip{v_{z^*},x_{z^*}-y_{z^*}}^2 \EE_{\nu}\left[\|A_{u_z}^z v_{z^*} \|^2\right]^{-1}\\
	&\leq \max_{v \in (\RR^d)^{\zZ}} \max_{x,y \in \pP(\cC)} \min_{z' \in \zZ} \min_{u \in \pP(\cC;z')} \ip{v_{z^*},x_{z^*}-y_{z^*}}^2 \left( \nu(z')\|A_{u_{z'}}^{z'} v_{z^*} \|^2\right)^{-1}\,,
	\end{align*}
	which proves the claim. We first used the definition of $z^*$ and lower-bounded the expectation in the last step. The last equality iii) is immediate.
\end{proof}

Our next result extends Lemma \ref{lem:explore-direction} to account for the contextual distribution $\nu$ in the information ratio. We provide the proof for the tighter information gain $I_t(x,z;w)$ as defined in \eqref{eq:info-gain-directed}. The information processing inequality (Lemma \ref{lem:info-processing}) implies the result for $I_t(x,z)$.
\begin{lemma}\label{lem:explore-context}
	For a convex set $\cC\subset \RR^d$ let $w \in \{x - y : x,y \in \pP(\cC) \}$ be a difference in the plausible action set $\pP(\cC) = \times_{z \in \zZ} \pP(\cC;z)$. Then
	\begin{align*}
		\EE_{\nu}[\|w\|_{V_{t-1}^{-1}}]^2 \leq \alpha(\cC,\nu) \max_{u \in \pP(\cC)} \max_{g : \zZ \rightarrow \zZ} 2 I_t(u, \nu; w_g) \leq \alpha(\cC,\nu)  \max_{u \in \pP(\cC)} 2 I_t(u, \nu)\,.
	\end{align*}
\end{lemma}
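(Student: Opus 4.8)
The plan is to mimic the proof of Lemma~\ref{lem:explore-direction}, replacing the single uncertain direction and the single observation operator by the per-context pair $(w_{g(z)}, A_{u_z}^z)$, and then averaging over $z\sim\nu$; the extra map $g:\zZ\to\zZ$ accounts for the fact that in a realized context $z$ the learner may estimate along a redirected direction $w_{g(z)}$ belonging to a different context.

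First, I would record a pointwise information bound. Fix a context $z$, an operator $A_{u_z}^z$, and any direction $w'\in\RR^d$. Exactly as in the proof of Lemma~\ref{lem:explore-direction}, from $\tfrac12(\eye_m + A_{u_z}^{z\T} V_{t-1}^{-1} A_{u_z}^z)\preceq \eye_m$ (a consequence of $\|A_{u_z}^z\|_2\le 1$ and $V_{t-1}^{-1}\preceq\eye_d$), the Sherman--Morrison identity for $(V_{t-1}|A_{u_z}^z)^{-1}$, and the scalar inequality $a\le-\log(1-a)$ for $a\in[0,1]$, one obtains
\begin{align*}
\|A_{u_z}^{z\T} V_{t-1}^{-1} w'\|^2 \;\le\; 2\,\|w'\|_{V_{t-1}^{-1}}^2\, I_t(u_z,z;w')\,.
\end{align*}
Next, I would make the right change of variables in the definition of $\alpha(\cC,\nu)$. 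Writing $w_z = x_z - y_z$, set $v_z := V_{t-1}^{-1} w_z$, so $v\in(\RR^d)^{\zZ}$ and $\ip{v_z,x_z-y_z}=\|w_z\|_{V_{t-1}^{-1}}^2\ge 0$; in particular $\EE_\nu[\|w_z\|_{V_{t-1}^{-1}}]^2 = \EE_\nu[\ip{v_z,x_z-y_z}^{1/2}]^2$. Since $x,y\in\pP(\cC)$, the definition \eqref{eq:exp-alignment}, read as a maximum over the test vector $v$ and over $x,y$, applied with this $v$ furnishes $u\in\pP(\cC)$ and $g:\zZ\to\zZ$ (take near-minimizers and send the slack to zero if the minima are not attained) with
\begin{align*}
\EE_\nu\big[\ip{v_z,x_z-y_z}^{1/2}\big]^2 \;\le\; \alpha(\cC,\nu)\,\EE_\nu\!\left[\frac{\|A_{u_z}^{z\T} v_{g(z)}\|^2}{\ip{v_{g(z)},x_{g(z)}-y_{g(z)}}}\right]\,.
\end{align*}

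Because $v_{g(z)}=V_{t-1}^{-1}w_{g(z)}$ and $\ip{v_{g(z)},x_{g(z)}-y_{g(z)}}=\|w_{g(z)}\|_{V_{t-1}^{-1}}^2$, I would then apply the pointwise bound above with $w'=w_{g(z)}$ inside the expectation; this collapses the integrand to $2\,I_t(u_z,z;w_{g(z)})$ and hence bounds the right-hand side by $2\alpha(\cC,\nu)\,\EE_\nu[I_t(u_z,z;w_{g(z)})] = 2\alpha(\cC,\nu)\,I_t(u,\nu;w_g)$. Combining with the identity from the change of variables and maximizing over $u\in\pP(\cC)$ and over $g$ gives the first inequality; the second follows from the information processing inequality (Lemma~\ref{lem:info-processing}), which gives $I_t(u_z,z;w_{g(z)})\le I_t(u_z,z)$ pointwise in $z$, so that $I_t(u,\nu;w_g)\le I_t(u,\nu)$ for every $g$ after taking $\EE_\nu$.

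The only genuinely delicate points are bookkeeping ones rather than new inequalities: choosing $v$ componentwise as $V_{t-1}^{-1}w_z$ so that the square-root numerator in \eqref{eq:exp-alignment} becomes exactly $\EE_\nu[\|w_z\|_{V_{t-1}^{-1}}]$ while the denominators become the quantities controlled by the pointwise step, handling the mild measurability/attainment issues for the minimizing map $g$ over a possibly infinite $\zZ$ via a standard near-optimal selection argument, and observing that contexts $z$ with $w_z=0$ contribute zero on both sides so the fractions are well defined. Beyond this, nothing is needed that was not already used for Lemma~\ref{lem:explore-direction} and Lemma~\ref{lem:info-processing}.
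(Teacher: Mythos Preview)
Your proposal is correct and follows essentially the same route as the paper: the pointwise bound $\|A_{u_z}^{z\T} V_{t-1}^{-1} w'\|^2 \le 2\|w'\|_{V_{t-1}^{-1}}^2 I_t(u_z,z;w')$ inherited from Lemma~\ref{lem:explore-direction}, the substitution $v_z = V_{t-1}^{-1} w_z$ to match the numerator and denominator of the expected alignment constant \eqref{eq:exp-alignment}, and then the information processing inequality. The only difference is cosmetic ordering (the paper first passes to the $\argmax$ of $I_t(u,\nu;w_g)$ and then invokes the definition of $\alpha(\cC,\nu)$, whereas you first extract the minimizing $u,g$ from the definition and then bound by the maximum), and you are slightly more careful than the paper about attainment of the $\min$ over $g$ and about contexts with $w_z=0$.
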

\begin{proof}
	The proof is along the lines of Lemma \ref{lem:explore-direction}, but keeps the expectation over $\nu$. Let $g : \zZ \rightarrow \zZ$ be any function and $u \in \xX$. From the proof of the mentioned lemma, we find
	\begin{align*}
		\frac{\|A_{u_z}^{z\T} V_{t-1}^{-1}w_{g(z)}  \|^2}{\|w_{g(z)}\|_ {V_{t-1}^{-1}}} \leq 2 I_t(u,z;w_{g(z)})\,.
	\end{align*}
	Therefore, in expectation
	\begin{align*}
	\EE_{\nu}\left[\frac{\|A_{u}^\T V_{t-1}^{-1}w_{g}  \|^2}{\|w_{g}\|_ {V_{t-1}^{-1}}} \right]\leq 2 I_t(u,\nu;w_g)\,.
	\end{align*}
	Let $u^*,g^* = \argmax_{u \in \pP(\cC), g : \zZ \rightarrow \zZ}I_t(u, \nu; w_g)$. With this we find
	\begin{align*}
	\frac{\EE_{\nu}[\|w\|_{V_{t-1}^{-1}}]^2}{I_t(u^*, \nu; w_{g^*})} &\leq 2 \min_{u \in \pP(\cC)} \min_{g : \zZ \rightarrow \zZ}   \EE_{\nu}[\|w\|_{V_{t-1}^{-1}}]^2 \EE_{\nu}\left[\frac{\|A_{u}^\T V_{t-1}^{-1}w_{g}  \|^2}{\|w_{g}\|_ {V_{t-1}^{-1}}} \right]^{-1}\\
	&\leq  2 \max_{v \in \xX^{\zZ}} \max_{x,y \in \pP(\cC)} \min_{u \in \pP(\cC)} \min_{g : \zZ \rightarrow \zZ}  \EE_{\nu}[\ip{v,x-y}^{1/2}]^2 \EE_{\nu}\left[\frac{\|A_{u}^\T v_g  \|^2}{\ip{v_g, x_g-y_g}} \right]^{-1}\\
	&= 2 \alpha(\cC,\nu)
	\end{align*}
	Rearranging completes the proof.
\end{proof}
With these results the regret bounds for contextual IDS follow. For simplicity, the proof is given for IDS with full information gain \eqref{eq:info-full}, but similar results can be obtained for the directed information gain. First, the globally observable case (Theorem \ref{thm:regret-contextual}, Section \ref{sub:context}). 
\begin{proof}[\textbf{of Theorem \ref{thm:regret-contextual}}]
	Let $x^*_t \in \xX$ be the greedy action for each context, $x^*_t(z) = \argmax_{x \in \xX_z} x^\T \hat \theta_t$. Define the least accurate direction $w_t \in \{x - y : x,y \in \xX \}$ in the set $\pP_t$ as
	\begin{align} \label{eq:contextual-direction}
	w_t(z) = \argmax_{w = x-y : x,y \in \pP_t(z)} \|w\|_{V_{t-1}^{-1}}^2\,.
	\end{align}
	Recall that $x_t^*(z)^\T \hat \theta_t \leq \beta_{t-1}^{1/2}  \|w_t(z)\|_{V_{t-1}^{-1}}$. Lemma \ref{lem:explore-context} implies
	\begin{align*}
		\Delta_t(x_t^*;\nu)^2 &\leq \beta_{t-1} \EE_\nu[\|w_t\|_{V_{t-1}^{-1}}]^2 \leq 2 \beta_{t-1} \alpha(\nu) \max_{u \in \xX}I_t(u;\nu)\,.
	\end{align*}
	The rest of argument is analogous to the proof of Lemma \ref{lem:global-exp}. Consider a sampling distribution $\mu(p) \in \pP(\xX)$ that chooses $x^*_t(z)$ with probability $(1-p)$, and the most informative action $u(z) = \argmax_{u \in \xX_z} I_t(u,z)$ in context $z$ with probability $p$. By definition of the IDS policy,
	\begin{align*}
		\Psi_t(\xi_t) &\leq \min_{p \in [0,1]} \frac{\Delta_t(\mu(p);\nu)^2}{I_t(\mu(p);\nu)}\\
		&\leq \min_{p \in [0,1]} \frac{\left((1-p)\Delta_t(x_t^*;\nu) + p\right)^2}{p I_t(\mu(p);\nu)}\\
		&\leq 2\alpha \beta_{t-1} \min_{p \in [0,1]} \frac{\left((1-p)\Delta_t(x_t^*;\nu) + p\right)^2}{p \Delta_t(x_t^*;\nu)}\\
		&\leq  \frac{4\alpha \beta_{t-1}}{\Delta_t(x_t^*;\nu)^2}
	\end{align*}
	We first used that $I_t(x_t^*) \geq 0$, $\Delta_t(u) \leq 1$ and the inequality $2 \alpha \beta_{t-1} \max_u I_t(u;\nu)  \geq \Delta_t(x_t^*;\nu)^2$ that we derived above. Then we optimized over $p \in [0,1]$ in the last step.
	Similar to Lemma \ref{lem:greedy}, one can show that IDS plays greedy most of the time. For any $x \in \xX$,
	\begin{align*}
		\Delta_t(\mu_t;\nu) \leq 2 \Delta(x;\nu)\,.
	\end{align*}
	With  this we find
	\begin{align*}
	\Psi_t(\xi_t) \leq \frac{4\alpha \beta_{t-1}}{\Delta_t(x_t^*;\nu)}  \leq \frac{8\alpha \beta_{t-1}}{\Delta_t(\mu_t;\nu)} \,.
	\end{align*}
	Invoking the general bound (Lemma \ref{lem:ids-regret}) and balancing the terms completes the proof.	
\end{proof}

\subsection{Locally Observable Contextual Games}\label{app:contextual-local}
The condition for locally observable games is that any difference in the plausible action set $\pP(\cC,z)$ for a context $z \in \zZ$ can be estimated under possibly different context $z' \in \zZ$ by playing only actions that appear plausible optimal in the context $z'$. Formally,
\begin{align*}
\forall\; \cC \subset \RR^d \text{ convex,}\, z \in \zZ,\, x,y \in \pP_t(\cC;z) \Rightarrow \exists z' \in \zZ \text{ s.t. } x-y \in \laspan(A_x : x \in \pP_t(\cC;z')).
\end{align*}
If the condition holds true, Lemma \ref{lem:contextual-aligment-bounds} implies that $\alpha(\cC,\nu) \leq \oO\left((\min_{z \in \zZ} \nu(z))^{-1}\right)$ for finite action sets.

\begin{theorem}\label{thm:regret-contextual-local}
	In contextual games that are uniformly locally observable in the sense that $\alpha(\cC,\nu)\leq \alpha_0$ for any convex set $\cC\subset \RR^d$, the regret is bounded for any $n \geq 1$ with probability at least $1-\delta$,
	\begin{align*}
	R_n \leq C \sqrt{\alpha_0 \beta_n n (\gamma_n + \log \tfrac{1}{\delta})} + 4\log\left(\frac{4n+4}{\delta}\right)\,.
	\end{align*}
\end{theorem}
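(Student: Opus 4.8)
The plan is to transcribe the proof of Theorem~\ref{thm:local-upper} to the contextual setting, using the exploration guarantee of Lemma~\ref{lem:explore-context} and the expected alignment $\alpha(\cC_{t-1},\nu)$ in place of Lemma~\ref{lem:local-exp} and $\alpha_t$. Write $\pP_t(z) = \pP(\cC_{t-1};z)$ and $\pP_t = \times_{z\in\zZ}\pP_t(z)$. The heart of the argument is to exhibit, at each round $t$, a distribution $\xi \in \sP(\xX\times\zZ)$ feasible for contextual IDS (i.e.\ with $\zZ$-marginal $\nu$) whose information ratio obeys $\Psi_t(\xi) \le 8\alpha_0\beta_{t-1}$; since $\xi_t$ minimizes $\Psi_t$ over feasible distributions, the theorem then follows directly from the general bound of Lemma~\ref{lem:ids-regret} with $\eta=0$.

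First I would record the contextual analogue of Lemma~\ref{lem:gap-maximisers}: fixing a context $z$, the maximizer $y$ defining $\Delta_t(x,z)$ may be restricted to $\pP_t(z)$, and for $x\in\pP_t(z)$ the relaxed lower bound $\tilde\delta_t(x,z) = \max_{y\in\xX_z}\bigl(\ip{y-x,\hat\theta_{t-1}} - \beta_{t-1}^{1/2}\|y-x\|_{V_{t-1}^{-1}}\bigr)$ is nonpositive, so that $\Delta_t(x,z)\le 2\beta_{t-1}^{1/2}\max_{y\in\pP_t(z)}\|x-y\|_{V_{t-1}^{-1}}$, exactly as in the proof of Lemma~\ref{lem:local-exp}. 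For each $z$ set $w_t(z) = \argmax_{w=x-y:\,x,y\in\pP_t(z)}\|w\|_{V_{t-1}^{-1}}^2$, assembling a joint direction $w_t\in\{x-y:x,y\in\pP_t\}$, and let $u^\star\in\pP_t$ and $g^\star:\zZ\to\zZ$ attain the maximum on the right-hand side of Lemma~\ref{lem:explore-context} applied with $\cC=\cC_{t-1}$ and $w=w_t$. Let $\xi$ be the distribution with $\zZ$-marginal $\nu$ that plays $u^\star_z$ deterministically in context $z$; this is feasible for contextual IDS.

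Since each $u^\star_z$ is a plausible maximizer of context $z$, the gap bound gives $\Delta_t(\xi) = \EE_\nu[\Delta_t(u^\star_z,z)] \le 2\beta_{t-1}^{1/2}\EE_\nu[\|w_t(z)\|_{V_{t-1}^{-1}}]$. For the information term, the information-processing inequality (Lemma~\ref{lem:info-processing}) gives $I_t(u^\star_z,z) \ge I_t(u^\star_z,z;w_t(g^\star(z)))$ for every $z$, so $I_t(\xi) = \EE_\nu[I_t(u^\star_z,z)] \ge \EE_\nu[I_t(u^\star_z,z;w_t(g^\star(z)))] \ge \EE_\nu[\|w_t(z)\|_{V_{t-1}^{-1}}]^2/(2\alpha(\cC_{t-1},\nu))$, the last step being the defining inequality of Lemma~\ref{lem:explore-context} together with the choice of $u^\star,g^\star$. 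Dividing, $\Psi_t(\xi_t)\le\Psi_t(\xi)\le 8\alpha(\cC_{t-1},\nu)\beta_{t-1}\le 8\alpha_0\beta_{t-1}$. It remains to observe that the concentration bounds behind Lemma~\ref{lem:ids-regret} carry over to contextual IDS: $\xi_t$ is $\fF_{t-1}$-measurable and $z_t\sim\nu$ is drawn independently of $\fF_{t-1}$, so $\EE[\Delta_t(x_t,z_t)\mid\fF_{t-1}] = \Delta_t(\xi_t)$ and $\EE[I_t(x_t,z_t)\mid\fF_{t-1}] = I_t(\xi_t)$, and Freedman's inequality and the information-gain concentration from \citet{Kirschner2018} apply as in the non-contextual case. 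Invoking Lemma~\ref{lem:ids-regret} with $\eta=0$, together with monotonicity $\beta_t\le\beta_n$ and $\sum_{t=1}^n\Psi_t(\xi_t)\le 8\alpha_0\beta_n n$, yields $R_n \le C\sqrt{\alpha_0\beta_n n(\gamma_n + \log\tfrac{1}{\delta})} + 4\log(\tfrac{4n+4}{\delta})$.

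The main obstacle is the construction of $\xi$: it must simultaneously keep $\Delta_t(\xi)$ of order $\beta_{t-1}^{1/2}\EE_\nu\|w_t(z)\|_{V_{t-1}^{-1}}$ — which forces its support into the context-wise plausible-maximizer sets $\pP_t(z)$ — while making $I_t(\xi)$ of the matching order $\EE_\nu\|w_t(z)\|_{V_{t-1}^{-1}}^2/\alpha(\cC_{t-1},\nu)$, which requires exploiting the cross-context routing $g^\star$ built into the definition~\eqref{eq:exp-alignment} of $\alpha(\cC,\nu)$ (when context $z$ is realized one may have to be content gathering information about the uncertain direction of a \emph{different} context $g^\star(z)$). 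Lemma~\ref{lem:explore-context} is engineered precisely to reconcile these two demands, so the remaining work is bookkeeping that parallels the locally observable (Theorem~\ref{thm:local-upper}) and contextual globally observable (Theorem~\ref{thm:regret-contextual}) proofs — in particular carefully threading the joint set $\pP_t$, the conditional sampling $x_t\sim\xi_t(\cdot\mid z_t)$, and the routing map through the argument.
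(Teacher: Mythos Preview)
Your proposal is correct and follows essentially the same route as the paper's proof: exhibit a deterministic policy $u^\star \in \pP_t$ (one plausible action per context), bound its expected gap by $2\beta_{t-1}^{1/2}\EE_\nu[\|w_t\|_{V_{t-1}^{-1}}]$ via the plausible-maximizer argument, bound its expected information gain from below via Lemma~\ref{lem:explore-context}, deduce $\Psi_t(\xi_t)\le 8\alpha_0\beta_{t-1}$, and conclude with Lemma~\ref{lem:ids-regret} at $\eta=0$. The only cosmetic difference is that the paper takes $u^\star$ to be the maximizer of the full information gain $I_t(\cdot,\nu)$ and uses the second inequality of Lemma~\ref{lem:explore-context} directly, whereas you pick the maximizer of the directed gain $I_t(\cdot,\nu;w_{t,g})$ and pass through the information-processing inequality; both choices yield the same $8\alpha_0\beta_{t-1}$ bound.
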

\begin{proof}
	Let $w_t$ be the least accurate direction in the current set of plausible maximisers defined in Eq.\ \eqref{eq:contextual-direction}. For any plausible maximiser $x \in \pP_t$ it holds that
	\begin{align*}
		\Delta_t(x,c) \leq 2 \beta_{t-1}^{1/2} \|w_t(c)\|_{V_{t-1}^{-1}}\,,
	\end{align*}
	Therefore $\Delta_t(x;\nu)^2 \leq 4 \beta_{t-1} \EE_{\nu}[\|w_t\|_{V_{t-1}^{-1}}]^2$. By Lemma \ref{lem:explore-context},
	\begin{align*}
		\Delta_t(x;\nu)^2 \leq 8 \beta_{t-1} \alpha(\cC_t;\nu) \max_{z \in \pP_t} I_t(z,\nu)\,.
	\end{align*}
	Finally, let $z_t = \argmax_{x \in \pP_t} I_t(z,\nu)$ be the most informative action that appears plausible optimal for each context. This action has bounded information ratio:
	\begin{align*}
		\Psi_t(\xi) \leq \frac{\Delta(z_t,\nu)^2}{I_t(z_t,\nu)} \leq 8 \alpha_0 \beta_{t-1} \,,
	\end{align*}
	where we also used that $\alpha(\cC_t,\nu) \leq \alpha_0$ by assumption. The result follows from Lemma \ref{lem:ids-regret}.
\end{proof}
\section{Proof of the Classification Theorem}\label{app:class}

The classification theorem is proven by combining upper and lower bounds, carefully checking that all cases have been covered.
To begin, we introduce the classification of actions that is now standard in finite partial monitoring games. 
The lower bounds then follow using standard techniques and are sketched in \cref{sec:lower}.

\paragraph{Notation} We denote by $\relint(C) = \{x \in C : \forall {y \in C} \; \exists {\lambda > 1}: \lambda x + (1-\lambda)y \in C\}$ the relative interior of a convex set $C$, and the set of extreme points by $\ext(C)$. The closure operator on subsets of a metric space is $\cl(\cdot)$ and $\dim(\cdot)$ is the Hausdorff dimension. For points $x, y \in \RR^d$, let $[x, y] = \{t x + (1 - t) y : t \in [0,1]\}$.

\begin{assumption}
For the remainder of this section we assume that $\xX$ is finite.
\end{assumption}

The set of \emph{Pareto optimal} actions is the set of extreme points $\ext(\conv(\xX))$ of the convex hull of $\xX$.
An action is \emph{degenerate} if it is on the boundary of $\conv(\xX)$, but not an extreme point.
Actions in the interior of $\conv(\xX)$ are called \emph{dominated}.
The situation is illustrated in Figure~\ref{fig:classification}. Finite partial monitoring games can be completely classified by considering a graph structure known as the \emph{neighborhood graph} \citep{lattimore2019cleaning}. Given an action $x \in \xX$, the \emph{cell} of $x$ is the set of parameters for which action $x$ is optimal: 
\begin{align*}
C_x = \{\theta \in \RR^d : x \in \pP(\theta)\} = \{\theta \in \RR^d : \max_{y \in \xX} \ip{y - x,\theta} = 0\}\,.
\end{align*}
Since $\xX$ is finite, $\conv(\xX)$ is a polytope and $C_x$ is either the singleton $\{0\}$ or an unbounded polytope.
An action $x$ is Pareto optimal if $\dim(C_x) = d$ and degenerate otherwise, which can be seen by observing that $C_x$ is the normal cone of $x$ with respect to the convex body $\conv(\xX)$. 
Pareto optimal actions $x$ and $y$ are called neighbours if $\dim(C_x \cap C_y) = d-1$, where the dimension of a polytope
is defined as the dimension of the smallest affine space containing it. 
The neighbourhood relation defines a connected graph on the set of Pareto optimal actions. 
For neighboring actions $x$ and $y$ let $\nN_{xy} = \{z : C_x \cap C_y \subseteq C_z\}$. Note that, besides $x$ and $y$, $\nN_{xy}$ contains only actions $z$ with $\dim(C_z) = d-1$. \citet{Lin2014combinatorial} and \citet{Chaudhuri2016phased} use a different notion to ensure global observability and to construct an explicit exploration distribution. A \emph{global observer set} is a set of actions $\yY \subset \xX$ such that $\laspan (A_x : x \in \yY) = \laspan(x-y : x,y \in \xX)$.

\begin{figure}
	\centering
	\includegraphics[width=6cm]{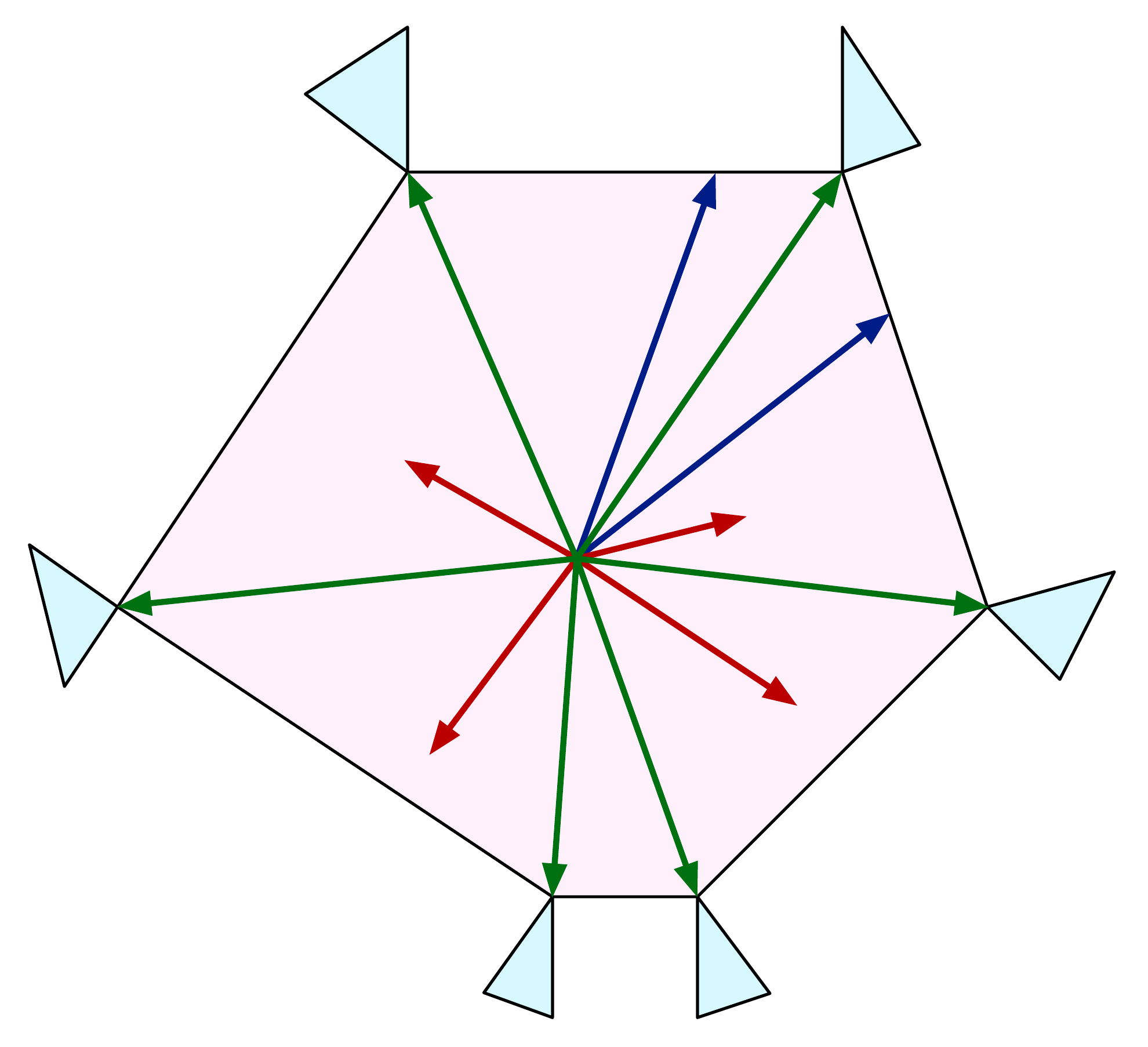}
	\caption{Green vectors are Pareto optimal, blue ones are degenerated and red are dominated. The light blue cones are the normal cones associated with each Pareto optimal action
		indicating the direction of $\theta$ for which that action is optimal.}\label{fig:classification}
\end{figure}

%


\begin{lemma}\label{lem:global-equivalence}
	The following conditions equivalently characterize globally observable games:
	\begin{enumerate}[i)]
		\itemsep0pt
		\item For all actions $x,y \in \xX$ it holds $x - y \in \laspan \{A_z : z \in \xX\}$.
		\item For all Pareto optimal actions $x,y$, it holds $x - y \in \laspan \{A_z : z \in \xX\}$.
		\item There exists a global observer set.
	\end{enumerate}
\end{lemma}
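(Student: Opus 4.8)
The plan is to reduce all three conditions to one statement about linear subspaces. Set $U := \laspan(A_z : z \in \xX)$ and $W := \laspan(x - y : x, y \in \xX)$; both are linear subspaces of $\RR^d$. First I would observe that (i) is literally the relation $W \subseteq U$: every difference $x-y$ lies in $W$, and conversely any subspace containing all the $x-y$ contains their span $W$. So it suffices to show that (ii) $\Leftrightarrow$ $W\subseteq U$ and (iii) $\Leftrightarrow$ $W\subseteq U$.

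The technical heart is the identity $W = \laspan\big(v - w : v, w \in \ext(\conv(\xX))\big)$. The inclusion ``$\supseteq$'' is immediate since Pareto optimal actions lie in $\xX$. For ``$\subseteq$'', fix one Pareto optimal action $v_0$; since $\xX$ is finite, $\conv(\xX)$ is a polytope, so every $x \in \xX$ is a convex combination $x = \sum_{v \in \ext(\conv(\xX))} \lambda_v(x)\, v$ with $\lambda_v(x)\ge 0$ and $\sum_v \lambda_v(x) = 1$. Because the coefficients sum to one, $x - v_0 = \sum_v \lambda_v(x)\,(v - v_0)$ is a \emph{linear} combination of vertex differences, hence so is $x - y = (x - v_0) - (y - v_0)$ for arbitrary $x,y \in \xX$. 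Given this identity, (ii) says precisely that every generator $v-w$ of $W$ lies in the subspace $U$, which is equivalent to $W \subseteq U$; and (i) $\Rightarrow$ (ii) is trivial since $\ext(\conv(\xX)) \subseteq \xX$. This settles (i) $\Leftrightarrow$ (ii).

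For (iii) I would argue both directions directly. If $\yY$ is a global observer set, then for any $x,y$ we have $x-y \in W = \laspan(A_x : x \in \yY) \subseteq U$, so (i) holds. Conversely, assume $W \subseteq U$ and take $\yY := \xX$. If the defining requirement of a global observer set is read as the containment $W \subseteq \laspan(A_x : x \in \yY)$ (``every reward difference is estimable from $\yY$''), we are done by (i). To obtain the literal equality, note that the regret of the game depends on $\theta$ only through its orthogonal projection $\theta_W$ onto $W$: indeed $x^*(\theta) - x_t \in W$ for every $t$, and $\ip{x,\theta}-\ip{x,\theta_W}$ is constant over $x \in \xX$ because any two actions differ by an element of $W\perp(\theta-\theta_W)$, so $x^*(\theta)=x^*(\theta_W)$. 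Hence we may replace each $A_z$ by its composition with the orthogonal projection onto $W$ without changing the regret; after this harmless reduction each $A_z$ has columns in $W$, so $\laspan(A_z : z \in \xX)\subseteq W$, while projecting the inclusion $W\subseteq U$ shows $W\subseteq \laspan(A_z : z \in \xX)$ still holds, giving $\laspan(A_z : z \in \xX) = W$, i.e.\ $\xX$ is a global observer set.

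The main obstacle — and essentially the only non-formal point — is the extreme-point reduction: one must use that a point of the polytope $\conv(\xX)$ is a \emph{convex} combination of its vertices and then exploit that the coefficients sum to one to turn $x - v_0$ into a genuine linear combination of vertex differences. Everything else is elementary manipulation of linear spans, together with the standard remark about reward-irrelevant components of $\theta$ used to reconcile the ``$=$'' in the definition of a global observer set.
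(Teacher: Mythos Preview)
Your argument follows the same route as the paper. The key step (ii) $\Rightarrow$ (i) uses that every action is a convex combination of the extreme points of $\conv(\xX)$, with the coefficients summing to one so that $x-v_0$ becomes a genuine linear combination of vertex differences; the paper states exactly this in one line (``any $x\in\xX$ can be written as a convex combination of Pareto optimal actions''), and for (i) $\Rightarrow$ (iii) it simply takes $\yY=\xX$.

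You go further than the paper in noticing that $\yY=\xX$ only yields the containment $W\subseteq\laspan(A_z:z\in\xX)$, not the literal equality in the stated definition of a global observer set. Your ``harmless reduction'' (projecting each $A_z$ onto $W$) is a correct observation about regret-equivalence of games, but strictly speaking it changes the observation operators and therefore exhibits a global observer set for the \emph{modified} game, not the original one; indeed, with the literal equality no subset of $\xX$ need be a global observer set (take $\xX=\{e_1,e_2\}\subset\RR^2$ with $A_{e_1}=A_{e_2}=\eye_2$, where $W$ is one-dimensional but $\laspan(A_z:z\in\yY)=\RR^2$ for every nonempty $\yY$). The paper's own proof has the same issue and simply glosses over it; the intended reading of the definition is the containment $W\subseteq\laspan(A_x:x\in\yY)$, under which your proof and the paper's coincide and your extra paragraph can be dropped.
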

\begin{proof}
For the implication (ii $\Rightarrow$ i), note that Pareto optimal actions are the extreme points of $\conv(\xX)$, therefore any $x \in \xX$ can be written as a convex combination of Pareto optimal actions. (i $\Rightarrow$ iii) follows by taking $\xX$ as global observer set. (iii $\Rightarrow$ ii) immediately follows from the definition of a global observer set.
\end{proof}
The next lemma shows the relation of neighboring actions and local observability.
\begin{lemma}\label{lem:neighbors}
	Let $\xX$ be finite and $\cC \subset \RR^d$ be any convex set. Then
	\begin{enumerate}[i)]
		\itemsep0pt
		\item The Pareto optimal actions within $\pP(\cC)$ are connected on the neighborhood graph. 	
		\item For two Pareto optimal actions $x,y \in \pP(\cC)$ it holds that $\nN_{xy} \subset \pP(\cC)$.
		\item Any $x \in \pP(\cC)$ can be written as convex combination of Pareto optimal actions in $\pP(\cC)$.
	\end{enumerate}
\end{lemma}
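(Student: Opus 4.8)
The plan is to treat the three items in the order (iii), (i), (ii), leaning throughout on the face structure of the polytope $\conv(\xX)$ (a polytope since $\xX$ is finite). For (iii), fix $x\in\pP(\cC)$ and pick $\theta\in\cC$ with $x\in\pP(\theta)$. The set $F$ of maximisers of $\ip{\cdot,\theta}$ over $\conv(\xX)$ is a face of $\conv(\xX)$ and contains $x$; being a face of a polytope, $F$ is itself a polytope whose vertices are vertices of $\conv(\xX)$, i.e.\ Pareto optimal actions, and $x$ is a convex combination of them. Each such vertex $v$ satisfies $\ip{v,\theta}=\max_{y\in\xX}\ip{y,\theta}$, hence $v\in\pP(\theta)\subseteq\pP(\cC)$, which is (iii).

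For (i), let $x,y$ be Pareto optimal with $x\in\pP(\theta_x)$ and $y\in\pP(\theta_y)$ for some $\theta_x,\theta_y\in\cC$, and set $L=\{\theta(t)=(1-t)\theta_x+t\theta_y:t\in[0,1]\}\subseteq\cC$. For each vertex $v$ of $\conv(\xX)$ the set $I_v=\{t\in[0,1]:\theta(t)\in C_v\}$ is the preimage under the affine map $t\mapsto\theta(t)$ of the closed convex cone $C_v$, hence a closed subinterval of $[0,1]$; moreover $\bigcup_v I_v=[0,1]$, because for every $t$ the face $\pP(\theta(t))$ is non-empty and so has a vertex. Two facts then finish the argument. (a) If $t^*\in I_v\cap I_w$, then $v$ and $w$ are vertices of the common face $\pP(\theta(t^*))$; since the vertex--edge graph of a polytope is connected and every edge $[u,u']$ of $\conv(\xX)$ joins a neighbour pair (its normal cone $C_u\cap C_{u'}$ has dimension $d-1$), $v$ and $w$ are joined by a neighbour-path of Pareto optimal actions, all lying in $\pP(\theta(t^*))\subseteq\pP(\cC)$. (b) The finite cover of the connected set $[0,1]$ by the non-empty closed intervals $I_v$ has connected nerve (otherwise the unions over the two sides of a disconnection would be disjoint non-empty closed sets covering $[0,1]$), so there is a chain $I_{v_0},\dots,I_{v_m}$ with $0\in I_{v_0}$, $1\in I_{v_m}$ and $I_{v_{j-1}}\cap I_{v_j}\neq\emptyset$ for all $j$. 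Prepending $x$ (a vertex of $\pP(\theta(0))$, which also contains $v_0$), appending $y$, and applying (a) to each overlap produces a neighbour-path from $x$ to $y$ among the Pareto optimal actions of $\pP(\cC)$.

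For (ii), first record the normal-cone identity: for neighbouring Pareto optimal $x,y$ the segment $[x,y]$ is an edge of $\conv(\xX)$ whose normal cone is $C_x\cap C_y$, and for $\theta\in\relint(C_x\cap C_y)$ the maximising face is exactly $[x,y]$; hence $\nN_{xy}=\{z\in\xX:C_x\cap C_y\subseteq C_z\}=\xX\cap[x,y]$, so every $z\in\nN_{xy}$ is a convex combination of $x$ and $y$. It then suffices to produce a single $\theta\in\cC$ at which both $x$ and $y$ are optimal -- then $z$ is optimal at $\theta$, so $z\in\pP(\cC)$ -- i.e.\ to show $C_x\cap C_y\cap\cC\neq\emptyset$; when (ii) is applied to a consecutive pair along the path built in (i), such a parameter is at hand, namely $\theta(t^*)$ at the corresponding overlap $I_{v_{j-1}}\cap I_{v_j}$.

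I expect the main obstacle to be precisely this last point. A convex set $\cC$ can meet the full-dimensional cells $C_x$ and $C_y$ of two neighbours while avoiding their common facet $C_x\cap C_y$, so `$x,y$ neighbouring and $x,y\in\pP(\cC)$' does not on its own supply a common witnessing parameter; (ii) has to be carried out in tandem with the segment construction of (i), which does supply one for the pairs that actually occur in the classification argument. The remaining ingredients are routine polytope theory used in (i): that a face of a polytope is a polytope with the induced vertices and edges, that the vertex--edge graph of a polytope is connected, and that the nerve of a finite closed connected cover of $[0,1]$ is connected.
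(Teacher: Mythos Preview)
Your treatments of (iii) and (i) are correct. For (iii) you do essentially what the paper does: pick a witnessing $\theta$, observe that the maximiser set is a face of $\conv(\xX)$ whose vertices are Pareto optimal and lie in $\pP(\theta)\subseteq\pP(\cC)$, and write $x$ as a convex combination of them. For (i) your route differs from the paper's. The paper takes the chord $[\theta_x,\theta_y]$, lists the cells it meets, and then appeals to a separate topological lemma (\cref{lem:connect}) together with a perturbation to avoid the chord passing through $(d-2)$-dimensional cell intersections. You replace this with two cleaner ingredients: the nerve of the finite closed interval cover $\{I_v\}$ of $[0,1]$ is connected, and at any overlap $t^*\in I_v\cap I_w$ both $v$ and $w$ are vertices of the face $\pP(\theta(t^*))$, whose vertex--edge graph is connected and whose edges are edges of $\conv(\xX)$, hence neighbour pairs. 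This sidesteps the perturbation entirely; the price is invoking connectivity of the $1$-skeleton of a polytope, which is standard.

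On (ii) you have identified a genuine issue, and it is one the paper's own proof shares. The paper simply writes ``pick any $\theta\in C_x\cap C_y\cap\cC$'', but for arbitrary neighbouring Pareto optimal $x,y\in\pP(\cC)$ this intersection can be empty. Already for a triangle $\{a,b,c\}$ in $\RR^2$ with a degenerate action $z=(a+c)/2$, a segment $\cC$ running from the interior of $C_a$ through the interior of $C_b$ to the interior of $C_c$ misses the ray $C_a\cap C_c$; then $a,c\in\pP(\cC)$ are neighbours while $z\in\nN_{ac}\setminus\pP(\cC)$, so (ii) as literally stated fails. Your reading is the correct one for the downstream application (\cref{lem:local-equivalence}): part (ii) is only ever needed for consecutive pairs along the path built in (i), and for those pairs the chord construction hands you a common parameter $\theta(t^*)\in C_x\cap C_y\cap\cC$. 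So your proposal both proves what is actually used and correctly localises the overstatement in the lemma as written.
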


\begin{proof}%
%
\begin{enumerate}[i)]
	\itemsep0pt
	\item The proof is intuitively simple. Take any Pareto optimal actions $x, y \in \pP(\cC)$ and let $\theta_x \in C_x$ and $\theta_y \in C_y$. Then take the chord $[\theta_x, \theta_y] \subset \cC$ and consider the path $(x_i)_{i=1}^n$ defined by the cells that intersect $[\theta_x, \theta_y] \cup C_{x_i} \neq \emptyset$. There is a technicality
	that this chord may pass through intersections of cells that have dimension $d - 2$. A perturbation and dimension argument fixes the proof (see Lemma \ref{lem:connect} below). For a similar result see \cite[Lemma 23]{lattimore2019information}.
	\item Let $x,y \in \pP(\cC)$ be Pareto optimal actions. Pick any $\theta \in C_x \cap C_y \cap \cC$. If $z \in \nN_{xy}$, we have $C_x \cap C_y \subset C_z$, hence $\theta \in C_z$ and $z$ is optimal for $\theta$. Therefore $z \in \pP(\cC)$.	
	\item Let $F$ be the lowest dimensional face of $\conv(\xX)$ containing $x \in \pP(\cC)$. Assume that $x$ is in the interior of $F$ (otherwise it would be an extreme point and so Pareto optimal).  Then let $\theta$ be a parameter such that $x$ is optimal. $H = \{z : (x - z)^\T  \theta = 0 \}$ is a supporting hyperplane of $\conv(F)$. Hence $F$ is a subset of $H$. Note that $H \cap \xX \subset \pP(\theta)$ contains actions that are optimal for $\theta$. Therefore all extreme points of $F$ are in $\pP(\cC)$ and since $x$ is in the convex hull of the extreme points of $F$ the result follows.
\end{enumerate}%
\end{proof}

The next lemma shows that observability can be characterized in terms of the neighborhood relation. 
\begin{lemma}\label{lem:local-equivalence}
	The following conditions equivalently characterize locally observable games:
	\begin{enumerate}[i)]
		\item For any convex $\cC$ and all $x,y \in \pP(\cC)$, $x-y \in \laspan\{A_z : z \in \pP(\cC)\}$.
		\item For any two neighboring Pareto optimal actions $x,y$, $x - y \in \laspan\{A_z : z \in \nN_{xy}\}$.
	\end{enumerate}
\end{lemma}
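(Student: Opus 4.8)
The plan is to establish the two implications (i)~$\Rightarrow$~(ii) and (ii)~$\Rightarrow$~(i) separately, in both cases leaning on the structural facts about the neighbourhood graph collected in Lemma~\ref{lem:neighbors}.

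For (i)~$\Rightarrow$~(ii), suppose $x,y$ are neighbouring Pareto optimal actions, so by definition $\dim(C_x \cap C_y) = d-1$ and this cone has a non-empty relative interior. I would pick a point $\theta_0 \in \relint(C_x \cap C_y)$ and apply condition (i) to the convex singleton $\cC = \{\theta_0\}$. The crux is to show that $\pP(\{\theta_0\}) = \nN_{xy}$. This uses that the cells $\{C_z : z \in \xX\}$ are precisely the cones of the normal fan of the polytope $\conv(\xX)$ (with the caveat that dominated actions contribute only $\{0\}$): each $C_z$ is a closed polyhedral cone that is a face of this fan, so $C_z \cap (C_x \cap C_y)$ is a face of $C_x \cap C_y$, and a face containing the relative interior point $\theta_0$ must equal all of $C_x \cap C_y$; hence $\theta_0 \in C_z$ iff $C_x \cap C_y \subseteq C_z$, which is exactly the defining condition $z \in \nN_{xy}$. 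Since $\theta_0 \in C_x \cap C_y$ we also have $x,y \in \pP(\{\theta_0\})$, so condition (i) immediately yields $x - y \in \laspan\{A_z : z \in \nN_{xy}\}$.

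For (ii)~$\Rightarrow$~(i), fix a convex set $\cC$ and $x,y \in \pP(\cC)$. By Lemma~\ref{lem:neighbors}(iii) each of $x,y$ is a convex combination of Pareto optimal actions in $\pP(\cC)$; introducing a fixed reference Pareto optimal action in $\pP(\cC)$ to absorb the affine constraints, $x - y$ becomes a finite linear combination of differences $p - q$ of Pareto optimal actions $p,q \in \pP(\cC)$. For each such pair, Lemma~\ref{lem:neighbors}(i) provides a path $p = x_1, x_2, \dots, x_k = q$ of Pareto optimal actions in $\pP(\cC)$ with consecutive entries neighbouring, so $p - q = \sum_{i=1}^{k-1}(x_i - x_{i+1})$. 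Condition (ii) gives $x_i - x_{i+1} \in \laspan\{A_z : z \in \nN_{x_i x_{i+1}}\}$, and Lemma~\ref{lem:neighbors}(ii) ensures $\nN_{x_i x_{i+1}} \subseteq \pP(\cC)$; summing over the path and then over the decomposition of $x - y$ gives $x - y \in \laspan\{A_z : z \in \pP(\cC)\}$.

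I expect the main obstacle to be the identification $\pP(\{\theta_0\}) = \nN_{xy}$ in the first implication: it is the only step that genuinely uses polyhedral geometry (that each cell meets the face $C_x \cap C_y$ in one of its faces), and one has to be slightly careful about degenerate and dominated actions when invoking the normal-fan picture. Everything else is a routine assembly of the connectivity and convex-combination statements of Lemma~\ref{lem:neighbors} together with a telescoping sum.
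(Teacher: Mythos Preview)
Your proposal is correct and takes essentially the same approach as the paper: both directions invoke Lemma~\ref{lem:neighbors} in the same way, with (i)~$\Rightarrow$~(ii) handled via a point in $\relint(C_x \cap C_y)$ and (ii)~$\Rightarrow$~(i) via the path-and-telescoping argument. Your normal-fan justification for $\pP(\{\theta_0\}) = \nN_{xy}$ is in fact more explicit than the paper's, which simply cites Lemma~\ref{lem:neighbors} for this identity even though the statement does not appear there verbatim.
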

\begin{proof}
	``i) $\Rightarrow$ ii)''. Let $x,y \in \xX$ be neighboring Pareto optimal actions. Pick $\theta \in \relint(C_x \cap C_y)$. Then $\nN_{xy} = \pP(\theta)$ by Lemma \ref{lem:neighbors} and therefore $x - y \in \laspan\{A_x : x \in \pP(\cC)\}$ by i). 
	 
	 ``ii) $\Rightarrow$ i)''. Let $x,y \in \pP(\cC)$. First note that by Lemma \ref{lem:neighbors}, iii), $x-y$ can be written as linear combination of Pareto optimal actions in $\pP(\cC)$. Therefore we can assume that $x,y$ are Pareto optimal. By Lemma \ref{lem:neighbors}, i), there exists a sequence $(x_i)_{i=1}^m$ of Pareto optimal actions  with $x_1=x$ and $x_m = y$, such that $x_i,x_{i+1}$ are neighbors and  $\{x_i : i=1,\dots,m\} \subset \pP(\cC)$. By assumption, $x_i - x_{i-1} \in \laspan\{A_z : z \in \nN_{x_ix_{i-1}}\}$. Since $x-y = \sum_{i=1}^{m-1} x_i - x_{i+1}$ the claim follows.
\end{proof}

Lastly, the key lemma for proving the lower bound for globally observable games shows that in games that are not locally observable, there exists a pair of neighbouring Pareto optimal actions $x,y$ and a parameter $\theta$ such that both actions $x,y$ are optimal, but $\ip{x-y, \theta}$ can \emph{not} be estimated by playing only actions from the neighborhood $\nN_{xy}$.

\begin{lemma}\label{lem:local}
Suppose a game is not locally observable. Then there exists a pair $x, y$ of neighbouring Pareto optimal actions and $\theta \in \relint(C_x \cap C_y)$ such that $x - y \notin \laspan\{A_z : z \in \nN_{xy}\}$.
\end{lemma}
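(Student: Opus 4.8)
The plan is to unwind the definition of "not locally observable" via the neighbourhood characterisation (Lemma~\ref{lem:local-equivalence}) and then produce the desired parameter $\theta$ by a perturbation/dimension argument. Since the game fails local observability, by the equivalence ``i) $\Leftrightarrow$ ii)'' in Lemma~\ref{lem:local-equivalence}, its negation tells us that there must exist two neighbouring Pareto optimal actions $x,y$ for which $x - y \notin \laspan\{A_z : z \in \nN_{xy}\}$. This already gives the pair of actions; what remains is to exhibit a parameter $\theta$ lying in $\relint(C_x \cap C_y)$ with the same failure. The catch is that the neighbourhood $\nN_{xy}$ is \emph{defined} via $C_x \cap C_y$, so once we fix the pair, the set $\nN_{xy}$ is fixed and the span condition $x-y \notin \laspan\{A_z : z \in \nN_{xy}\}$ is already independent of $\theta$; hence any $\theta \in \relint(C_x \cap C_y)$ works, and such $\theta$ exists because $C_x \cap C_y$ is a nonempty polytope of dimension $d-1$ (neighbours by definition) and therefore has nonempty relative interior.

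Concretely, the steps I would carry out are: (1) Apply the contrapositive of Lemma~\ref{lem:local-equivalence} to extract a pair of neighbouring Pareto optimal actions $x,y$ with $x - y \notin \laspan\{A_z : z \in \nN_{xy}\}$. (2) Observe that $x,y$ being neighbours means $\dim(C_x \cap C_y) = d-1$, so $C_x \cap C_y$ is a polytope of positive dimension and $\relint(C_x \cap C_y) \neq \emptyset$; pick any $\theta$ in this relative interior. (3) Verify that for such $\theta$ both $x$ and $y$ are optimal: by definition of the cells, $\theta \in C_x$ and $\theta \in C_y$ means $\max_{z \in \xX}\ip{z - x,\theta} = 0 = \max_{z\in \xX}\ip{z-y,\theta}$, so $x, y \in \pP(\theta)$. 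Moreover, taking $\theta$ in the \emph{relative} interior (rather than an arbitrary boundary point of $C_x \cap C_y$) guarantees that $\pP(\theta) = \nN_{xy}$ exactly --- this uses the identity $\nN_{xy} = \{z : C_x \cap C_y \subseteq C_z\}$ together with the fact that a relative-interior point of the face $C_x \cap C_y$ of the normal-cone fan lies in $C_z$ iff $C_x \cap C_y \subseteq C_z$ (cf. the argument in Lemma~\ref{lem:neighbors}~ii and the proof of ``i) $\Rightarrow$ ii)'' in Lemma~\ref{lem:local-equivalence}). (4) Conclude: with this $\theta$, the span $\laspan\{A_z : z \in \nN_{xy}\}$ is precisely $\laspan\{A_z : z \in \pP(\theta)\}$, and by construction $x - y \notin \laspan\{A_z : z \in \nN_{xy}\}$, which is exactly the claim.

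The main obstacle, and the only place where real care is needed, is step (3): showing $\pP(\theta) = \nN_{xy}$ for $\theta \in \relint(C_x \cap C_y)$. The inclusion $\nN_{xy} \subseteq \pP(\theta)$ is immediate from $\theta \in C_x \cap C_y \subseteq C_z$ for $z \in \nN_{xy}$. The reverse inclusion requires that no action $z$ becomes optimal at $\theta$ unless its cell contains the whole face $C_x \cap C_y$; this is where we must invoke that $\theta$ is in the relative interior and not on a lower-dimensional sub-face --- if $\theta$ were a $(d-2)$-dimensional boundary point of $C_x \cap C_y$, additional cells could slip in. This is essentially the same technicality flagged in the proof of Lemma~\ref{lem:neighbors}~i) (the chord passing through $(d-2)$-dimensional intersections), and it is handled by the standard fact that the cells $\{C_z\}$ form the normal fan of the polytope $\conv(\xX)$, whose faces are exactly the sets $C_x \cap C_y$ and their own faces; a relative-interior point of a face of the fan lies in $C_z$ precisely when that face is contained in $C_z$. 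Once this is in place, the rest is bookkeeping.
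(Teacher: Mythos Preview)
Your proposal is correct and follows essentially the same approach as the paper, which simply states that the lemma follows from the equivalent characterisation of local observability in Lemma~\ref{lem:local-equivalence}. You supply considerably more detail than the paper does; in particular, your step~(3) establishing $\pP(\theta)=\nN_{xy}$ for $\theta\in\relint(C_x\cap C_y)$ is not strictly required by the statement as written (the span condition involves $\nN_{xy}$, not $\pP(\theta)$, and is independent of $\theta$), but it is exactly what is tacitly used when the lemma is applied in the lower-bound proof, so spelling it out is worthwhile.
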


\begin{proof}
The lemma follows from the definition of local observability its equivalent charaterization provided in Lemma \ref{lem:local-equivalence}.
%
\end{proof}

\begin{lemma}\label{lem:connect}
Let $\uU$ be a collection of disjoint open sets of $\RR^d$ with the usual metric $d(x, y) = \norm{x - y}$ such that:
\begin{enumerate}
\item The union $\kK = \bigcup_{i=1}^\infty \cl(U_i)$ is convex.
\item $\dim\left(\kK \cap \{x : d(x, y) \leq \epsilon\}\right) = \dim(\kK)$ for all $\epsilon > 0$ and $y \in U \in \uU$.
\item $A = \bigcup_{U, V, W \in \uU \text{ are distinct}} (\cl(U) \cap \cl(V) \cap \cl(W))$ has $\dim(A) < \dim(\kK) - 1$.
\item For any compact set $W \subset \RR^d$, at most finitely many elements of $\uU$ have non-empty intersection with $W$.
\end{enumerate}
We say that $U_1, U_2 \in \uU $ are connected if $(\cl(U_1) \cap \cl(U_2)) \setminus A \neq \emptyset$.
Then, for any $x \in \cl(U)$ and $y \in \cl(V)$, there exists a sequence $(U_i)_{i=1}^m$ of connected sets with $U_1 = U$ and $U_m = V$ and $U_i \cap [x, y] \neq \emptyset$.
\end{lemma}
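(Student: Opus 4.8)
The plan is to replace the chord $[x,y]$ by a nearby chord that avoids the exceptional set $A$, and then run a standard compactness/connectedness argument on the one‑dimensional parameter space of the perturbed chord. A few reductions first. Each $U_i$ is a nonempty open subset of $\RR^d$, so $\cl(U_i)$, and hence $\kK$, is $d$‑dimensional; condition~3 then gives $\dim A < d-1$, so $A$ has $d$‑dimensional Lebesgue measure zero. By condition~1 and convexity, $[x,y]\subseteq\kK$. Since $x\in\cl(U)$, $y\in\cl(V)$ and $U,V$ are open, the sets $U\cap B_\varepsilon(x)$ and $V\cap B_\varepsilon(y)$ are nonempty and open for every $\varepsilon>0$, and no point of an open cell can lie in the closure of three distinct cells, so such points avoid $A$.

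Fix $\varepsilon>0$ and $x'\in U\cap B_\varepsilon(x)$. For $w\in A$ the set of points $b$ with $w\in[x',b]$ lies on the ray $\{x'+s(w-x'):s\ge 1\}$; writing $A$ as a countable union of bounded pieces (the cells are disjoint and open, hence $\uU$ is countable, and $A=\bigcup_R (A\cap B_R(0))$) and $[1,\infty)$ as a countable union of unit intervals exhibits $\bigcup_{w\in A}\{b:w\in[x',b]\}$ as a countable union of Lipschitz images of sets of Hausdorff dimension at most $\dim A+1<d$, hence as a Lebesgue‑null set. Therefore $V\cap B_\varepsilon(y)$ contains a point $y'$ with $[x',y']\cap A=\emptyset$, and $[x',y']\subseteq\kK$ by convexity.

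Now the chain. By condition~4 only finitely many cells meet the compact segment $[x',y']$, and $U,V$ are among them. Put $f(s)=(1-s)x'+sy'$ and $F_i=\{s\in[0,1]:f(s)\in\cl(U_i)\}$; each $F_i$ is closed and the $F_i$ cover $[0,1]$ because $[x',y']\subseteq\kK$. The point $f(0)=x'$ lies in $\cl(U_i)$ only for the index of $U$ (a neighbourhood of $x'$ lies in $U$, which is disjoint from every other cell) and likewise $f(1)=y'$ only in $\cl(V)$. A standard chain argument for a finite closed cover of the connected set $[0,1]$ produces, after deleting consecutive repetitions, a sequence $U^{(1)},\dots,U^{(m)}$ with $U^{(1)}=U$, $U^{(m)}=V$, consecutive terms distinct, and $F_{(j)}\cap F_{(j+1)}\neq\emptyset$ for all $j$. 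Choosing $s_j\in F_{(j)}\cap F_{(j+1)}$ and $p_j:=f(s_j)\in\cl(U^{(j)})\cap\cl(U^{(j+1)})$, the endpoint description and distinctness force $s_j\in(0,1)$, so $p_j\notin A$; since two distinct cells already have $p_j$ in their closures and $p_j\notin A$, no third cell does, whence $(\cl(U^{(j)})\cap\cl(U^{(j+1)}))\setminus A\ni p_j$ and $U^{(j)},U^{(j+1)}$ are connected in the sense of the lemma. Each $\cl(U^{(j)})$ meets $[x',y']$.

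Finally, letting $\varepsilon\downarrow 0$ along a sequence and using that only finitely many cells lie near $[x,y]$ (condition~4, applied to a fixed neighbourhood of $[x,y]$), infinitely many of the resulting chains coincide, and for that fixed chain the points $p_j$ accumulate on points of $\cl(U^{(j)})\cap[x,y]$; this yields the conclusion in the form actually invoked in Lemma~\ref{lem:neighbors}, where a chord arbitrarily close to $[x,y]$ is all that is needed. I expect the dimension bookkeeping in the "avoiding $A$" step to be the main obstacle — keeping the union of rays Lebesgue‑null even though $A$ may be unbounded and $\dim A$ need not be an integer — with a secondary, routine nuisance being the convexity‑and‑condition‑2 case analysis required if one insists on $U^{(j)}\cap[x,y]\neq\emptyset$ (open cell, not just its closure) for the intermediate cells.
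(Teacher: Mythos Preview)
Your proof takes essentially the same approach as the paper: perturb an endpoint so that the chord avoids $A$ via the dimension estimate $\dim(A)+1<\dim(\kK)$, use condition~4 to extract a finite chain of cells along the perturbed chord, and then pass to the limit using that only finitely many cells meet a fixed neighbourhood of $[x,y]$. Your more explicit closed-cover chain argument on $[0,1]$ and your observation that only the closure version $\cl(U_i)\cap[x,y]\neq\emptyset$ is actually obtained match what the paper's own proof does (it too ends by showing ``$\cl(U_i)\cap[x,y]\neq\emptyset$'') and what the application in Lemma~\ref{lem:neighbors} requires.
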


\begin{proof}
Suppose that $x \in \cl(U)$ and $y \in V$.
Let $B = \{z \in \kK : d(z, y) < \epsilon\}$ with $\epsilon$ sufficiently small that $B \subset V$ and $S = \{z \in B: [z, x] \cap A \neq \emptyset\}$.
A straightforward calculation shows that $\dim(S) \leq \dim(A) + 1 < \dim(\kK) = \dim(B)$.
Hence, there exists a $z \in B \setminus S$ and $[x,z] \cap A = \emptyset$.
By the last assumption, $[x, z]$ intersects with at most finitely many elements of $\uU$, which form the path between $U$ and $V$.
Next, suppose that $y \in \cl(V)$ and let $(y_n)$ be a sequence in $V$ with $\lim_{n\to\infty} d(y_n, y) = 0$.
By the previous argument, for each $n$, there exists a sequence $(U_i^n)_{i=1}^{m_n}$ of connected sets with $U^n_1 = U$ and $U^n_{m_n} = V$.
By the fourth assumption, for suitably large $n$, there are only finitely many sets in all the $(U^n_i)_{i,n}$ and hence, by re-labelling if necessary, 
the sequence of connected sets can be chosen so that $(U^n_i)_{i=1}^{m_n}$ converges (in the sense that the identity/order of the sequences converges -- the discrete topology on finite sequences of $\uU$) 
to some sequence $(U_i)_{i=1}^m$. We need to show that $\cl(U_i) \cap [x, y] \neq \emptyset$ for each $i$.
By the definition of convergence we have $[x, y_n] \cap \cl(U_i) \neq \emptyset$ for all suitably large $n$.
Taking a sequence $(z_n)$ with $z_n \in U_i$ for all suitably large $n$. Compactness again allows us to assume that $(z_n)$ converges to some $z$, which is easily seen to lie on $[x, z]$ and by closure of $U_i$ also
lies in $U_i$, as required.
\end{proof}

\section{Lower Bounds}\label{sec:lower}

\newcommand{\bbP}{\mathbb{P}}

The lower bounds complete the classification theorem.
These results are almost implied by existing theorems from finite partial monitoring. The only difference is that here the outcome space is infinite, which does not change
the structure of the proofs. We include here the key details and intuition.
As expected, the key tool is Le Cam's method in combination with the Bretagnolle--Huber inequality \citep{bretagnolle1979estimation} and an elementary calculation of the relative entropy between measures on interaction sequences induced by
a fixed policy and for different environments. For the remainder of this section, we fix an arbitrary policy and finite game with actions $\xX$ and feedback functions $(A_x)_{x \in \xX}$.
For simplicity, we assume the noise is Gaussian and $\xX$ spans $\RR^d$.
Given a $\theta \in \RR^d$ let $\bbP^n_\theta$ be the measure on action/observation sequences of length $n$ when the learner interacts with the game for parameter $\theta$.
Before the theorems and proofs we need a little more notation.
Let
\begin{align*}
V_n(\theta) = \EE_\theta[V_n] = \EE_\theta\left[\sum_{t=1}^n A_{x_t} A_{x_t}^\top\right]\,.
\end{align*}
Then define $E_n(\theta)$ as the binary random variable that the algorithm plays a suboptimal action at least $n/2$ times.
\begin{align*}
E_n(\theta) = \chf\left(\sum_{t=1}^n \chf(x_t \notin \pP(\theta)) \geq n/2\right)\,.
\end{align*}
Notice that if $\theta, \theta'$ are such that $\pP(\theta) \cap \pP(\theta') = \emptyset$, then $E_n(\theta') \geq 1 - E_n(\theta)$.
For simplicity we focus on proving lower bounds on the expected regret. The extension to high probability bounds is possible using the techniques of \cite{GL16}. Let
\begin{align*}
R_n(\theta) = \EE_\theta[R_n]
\end{align*}
be the expected regret when the learner interacts with the environment determined by $\theta$.

\begin{lemma}\label{lem:kl}
The relative entropy between $\bbP_\theta^n$ and $\bbP_{\theta'}^n$ satisfies $\displaystyle \KL(\bbP_\theta^n, \bbP_{\theta'}^n) = \frac{1}{2} \norm{\theta - \theta'}_{V_n(\theta)}^2$.
\end{lemma}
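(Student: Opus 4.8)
The plan is to use the standard divergence decomposition for interaction measures (the argument is identical to \citealp{lattimore2018bandit}, Lemma~15.1, the only novelty being that the outcome space is $\RR^m$ rather than finite, which changes nothing). Recall that the noise is assumed Gaussian here, i.e.\ $\epsilon_t \sim \mathcal N(0, \eye_m)$, so that the observation kernel is $\a_t \mid (\fF_{t-1}, x_t) \sim \mathcal N(A_{x_t}^\T \theta, \eye_m)$. Fixing the policy $\pi = (\pi_t)_{t=1}^n$, the chain rule shows that $\bbP_\theta^n$ on the interaction sequence $(x_1, \a_1, \dots, x_n, \a_n)$ factorizes into a product over $t$ of the policy kernel $\pi_t(\cdot \mid \fF_{t-1})$ (which does \emph{not} depend on $\theta$) and the Gaussian observation kernel $\mathcal N(A_{x_t}^\T\theta, \eye_m)$ (which does). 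The measure $\bbP_{\theta'}^n$ has the same structure with $\theta$ replaced by $\theta'$ in the observation kernel only.

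First I would apply the chain rule for relative entropy to this product structure. Since the policy kernels coincide under $\theta$ and $\theta'$, all of their contributions to the relative entropy vanish, leaving
\begin{align*}
\KL(\bbP_\theta^n, \bbP_{\theta'}^n) = \sum_{t=1}^n \EE_\theta\Big[ \KL\big( \mathcal N(A_{x_t}^\T \theta, \eye_m),\, \mathcal N(A_{x_t}^\T \theta', \eye_m) \big) \Big]\,.
\end{align*}
The relative entropy between two Gaussians with common covariance $\eye_m$ and means $\mu, \mu'$ is $\tfrac12 \norm{\mu - \mu'}^2$, so each summand equals $\tfrac12 \norm{A_{x_t}^\T(\theta - \theta')}^2 = \tfrac12 (\theta - \theta')^\T A_{x_t} A_{x_t}^\T (\theta - \theta')$.

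Finally, using linearity of expectation and the definition $V_n(\theta) = \EE_\theta\big[\sum_{t=1}^n A_{x_t} A_{x_t}^\T\big]$,
\begin{align*}
\KL(\bbP_\theta^n, \bbP_{\theta'}^n) = \tfrac12 (\theta - \theta')^\T \EE_\theta\Big[ \sum_{t=1}^n A_{x_t} A_{x_t}^\T \Big] (\theta - \theta') = \tfrac12 \norm{\theta - \theta'}_{V_n(\theta)}^2\,,
\end{align*}
which is the claim. The only real work is the measure-theoretic bookkeeping in the divergence decomposition, namely making precise that the two interaction measures have the claimed product form and that the shared policy contributions cancel; this is entirely routine and the infinite outcome space causes no difficulty, since for each $t$ the observation law is still a fixed Gaussian on $\RR^m$ whose relative entropy is the elementary quadratic expression above.
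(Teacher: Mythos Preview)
Your proposal is correct and is precisely the standard divergence decomposition argument; the paper itself does not give a proof but simply cites \citet[Theorem~24.1]{lattimore2018bandit}, which is exactly the computation you have written out. There is nothing to add.
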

For a proof refer to \citep[Theorem 24.1]{lattimore2018bandit}.

\begin{lemma}\label{lem:bretagnolle-huber}(Bretagnolle-Huber inequality) Let $P$ and $Q$ be probability measures on the same measurable space $(F, \Omega)$ and let $A \in F$ be an arbitrary event. Then
	\begin{align}
		P(A) + Q(A^c) \geq \frac{1}{2} \exp(-\KL(P,Q))\,.
	\end{align}
\end{lemma}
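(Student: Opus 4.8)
The plan is to prove this via the Bhattacharyya/Hellinger affinity, which interpolates cleanly between the testing error and the relative entropy. First I would fix a common dominating measure $\mu$ (for instance $\mu = P + Q$) and write $f = dP/d\mu$, $g = dQ/d\mu$. If $P$ is not absolutely continuous with respect to $Q$ then $\KL(P,Q) = \infty$, the right-hand side is zero, and the inequality is trivial; so I would assume $\KL(P,Q) < \infty$ from here on. The first step is the elementary observation that for any event $A$,
\begin{align*}
P(A) + Q(A^c) = \int_A f\, d\mu + \int_{A^c} g\, d\mu \geq \int_A \min(f,g)\, d\mu + \int_{A^c} \min(f,g)\, d\mu = \int \min(f,g)\, d\mu\,,
\end{align*}
so it suffices to lower bound the total affinity $\int \min(f,g)\, d\mu$ by $\tfrac{1}{2}\exp(-\KL(P,Q))$.

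The second step relates the affinity to the Bhattacharyya coefficient $\operatorname{BC}(P,Q) = \int \sqrt{fg}\, d\mu$ via Cauchy--Schwarz: writing $\sqrt{fg} = \sqrt{\min(f,g)}\cdot\sqrt{\max(f,g)}$ gives
\begin{align*}
\operatorname{BC}(P,Q)^2 \leq \left(\int \min(f,g)\, d\mu\right)\left(\int \max(f,g)\, d\mu\right) \leq 2\int \min(f,g)\, d\mu\,,
\end{align*}
where the last step uses $\max(f,g) = f + g - \min(f,g)$ and $\int (f+g)\, d\mu = 2$. Hence $\int \min(f,g)\, d\mu \geq \tfrac{1}{2}\operatorname{BC}(P,Q)^2$.

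The third step lower bounds the Bhattacharyya coefficient by $\exp(-\tfrac12\KL(P,Q))$ using Jensen's inequality and convexity of $x \mapsto e^{-x}$: on the event $\{f > 0\}$ (which has full $P$-measure),
\begin{align*}
\operatorname{BC}(P,Q) = \int_{\{f>0\}} \sqrt{g/f}\; f\, d\mu = \EE_P\!\left[\exp\!\left(-\tfrac{1}{2}\log\tfrac{f}{g}\right)\right] \geq \exp\!\left(-\tfrac{1}{2}\EE_P\!\left[\log\tfrac{f}{g}\right]\right) = \exp\!\left(-\tfrac{1}{2}\KL(P,Q)\right)\,.
\end{align*}
Chaining the three displays yields $P(A) + Q(A^c) \geq \int \min(f,g)\, d\mu \geq \tfrac{1}{2}\operatorname{BC}(P,Q)^2 \geq \tfrac{1}{2}\exp(-\KL(P,Q))$, which is the claim. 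There is no serious obstacle here: the only points requiring a little care are the absolute-continuity reduction in the degenerate case and keeping track of the sets $\{f>0\}$, $\{g>0\}$ when manipulating the densities (which is why choosing $\mu = P+Q$ is convenient, since then $f+g>0$ $\mu$-a.e.). Everything else is Cauchy--Schwarz and Jensen.
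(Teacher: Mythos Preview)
Your proof is correct and is essentially the standard argument for the Bretagnolle--Huber inequality: lower bound the testing error by the affinity $\int \min(f,g)\,d\mu$, relate the affinity to the Bhattacharyya coefficient via Cauchy--Schwarz, and then lower bound the Bhattacharyya coefficient by $\exp(-\tfrac12\KL(P,Q))$ via Jensen. The handling of the degenerate case $\KL(P,Q)=\infty$ and the restriction to $\{f>0\}$ are done properly.

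As for comparison with the paper: the paper does not actually prove this lemma. It is stated as a classical result (with attribution to \cite{bretagnolle1979estimation}) and then invoked as a black box in the lower-bound arguments that follow. So there is no ``paper's own proof'' to compare against; your argument simply supplies what the paper omits, and it does so in the standard way.
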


\begin{theorem}\label{thm:linear}
Suppose that $\laspan(A_x : x \in \xX) \neq \RR^d$, then there exists a game-dependent constant $c > 0$ such that for all $n \geq 1$ there exists a $\theta$ for which
$R_n(\theta) \geq cn$.
\end{theorem}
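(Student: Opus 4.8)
The plan is a textbook two-point (Le Cam) lower bound: construct two parameter values that induce \emph{identical} observation laws, hence zero relative entropy, yet have \emph{disjoint} optimal-action sets, so that no policy can be near-optimal against both; finiteness of $\xX$ then turns ``plays a suboptimal action at least half the time'' into linear regret.

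First I would isolate the unobservable direction. Because the game is not globally observable, Lemma~\ref{lem:global-equivalence} supplies actions $x_0,y_0\in\xX$ with $x_0-y_0\notin\laspan(A_z:z\in\xX)$; writing $x_0-y_0=w+u$ with $w\in\laspan(A_z:z\in\xX)$ and $0\neq u$ in its orthogonal complement, set $v=u/\snorm{u}$. Then $A_x^{\T}v=0$ for every $x\in\xX$, while $\ip{x_0-y_0,v}=\snorm{u}>0$, so $\Delta:=\tfrac12\big(\max_{x\in\xX}\ip{x,v}-\min_{x\in\xX}\ip{x,v}\big)>0$. Take $\theta_+=v$ and $\theta_-=-v$; both lie in the unit ball and are therefore admissible. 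Their optimal sets $\pP(\theta_+)=\argmax_{x\in\xX}\ip{x,v}$ and $\pP(\theta_-)=\argmin_{x\in\xX}\ip{x,v}$ are disjoint exactly because $\Delta>0$, and since $\xX$ is finite the minimal suboptimality gap $g:=\min\{\ip{x^*(\theta)-x,\theta}:\theta\in\{\theta_+,\theta_-\},\ x\in\xX\setminus\pP(\theta)\}$ is strictly positive.

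Next I would establish statistical indistinguishability. By Lemma~\ref{lem:kl}, $\KL(\bbP^n_{\theta_+},\bbP^n_{\theta_-})=\tfrac12\snorm{\theta_+-\theta_-}_{V_n(\theta_+)}^2=2\,\EE_{\theta_+}\big[\sum_{t=1}^n\snorm{A_{x_t}^{\T}v}^2\big]=0$, since $A_x^{\T}v=0$ for all $x$. Let $E_n(\theta)$ be the event that a suboptimal action is chosen at least $n/2$ times; disjointness of the optimal sets gives $E_n(\theta_-)\geq 1-E_n(\theta_+)$ pointwise, so with $A=\{E_n(\theta_+)=1\}$ the Bretagnolle--Huber inequality (Lemma~\ref{lem:bretagnolle-huber}) yields
\begin{align*}
\PP_{\theta_+}(E_n(\theta_+)=1)+\PP_{\theta_-}(E_n(\theta_-)=1)\ \geq\ \PP_{\theta_+}(A)+\PP_{\theta_-}(A^c)\ \geq\ \tfrac12\exp\big(-\KL(\bbP^n_{\theta_+},\bbP^n_{\theta_-})\big)=\tfrac12\,.
\end{align*}
On $E_n(\theta)=1$ the realised regret is at least $gn/2$ (each of the $\geq n/2$ suboptimal rounds costs at least $g$), so $R_n(\theta)\geq\tfrac{gn}{2}\PP_\theta(E_n(\theta)=1)$ for $\theta\in\{\theta_+,\theta_-\}$; adding the two instances and using the display gives $R_n(\theta_+)+R_n(\theta_-)\geq gn/4$, hence $\max\{R_n(\theta_+),R_n(\theta_-)\}\geq c\,n$ with $c=g/8>0$ a game-dependent constant. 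For each $n$ one picks whichever of $\theta_\pm$ attains the maximum.

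The one step that needs care is the very first one: we need a direction $v$ that is simultaneously \emph{unobservable} ($A_x^{\T}v=0$ for all $x$) and \emph{reward-relevant} ($\max_x\ip{x,v}>\min_x\ip{x,v}$), and the existence of such a $v$ is precisely equivalent to the game not being globally observable — this is the regime the theorem addresses (cf.\ the classification), so the hypothesis $\laspan(A_x:x\in\xX)\neq\RR^d$ should be read in that light. Everything downstream is routine bookkeeping; if one prefers the sets $\pP(\theta_\pm)$ to be singletons, a tiny perturbation of $\theta_\pm$ inside $v^{\perp}$ achieves this without altering the (zero) relative entropy, but it is unnecessary given the minimal-gap formulation.
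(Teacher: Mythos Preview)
Your proposal is correct and takes essentially the same approach as the paper: pick $\theta_\pm=\pm v$ with $v$ orthogonal to $\laspan(A_x:x\in\xX)$, note that $\KL(\bbP_{\theta_+}^n,\bbP_{\theta_-}^n)=0$ by Lemma~\ref{lem:kl}, apply Bretagnolle--Huber to force one of the two instances to incur linear regret. The only cosmetic difference is that you construct $v$ as the orthogonal projection of an action difference $x_0-y_0$, which makes the reward-relevance of $v$ explicit; the paper instead takes any nonzero $v$ in the orthogonal complement and relies on the standing assumption that $\xX$ spans $\RR^d$ (stated at the start of \cref{sec:lower}) to conclude $\pP(\theta)\cap\pP(-\theta)=\emptyset$ --- under that assumption the two formulations are equivalent, so your caveat in the last paragraph is unnecessary.
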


\begin{proof}
Let $\theta \in \RR^d$ be a non-zero vector such that $A_x \theta = 0$ for all $x \in \xX$, which exists by the assumption that $\laspan(A_x : x \in \xX) \neq \RR^d$. 
Next, let $\theta' = -\theta$ and notice that by Lemma \ref{lem:kl},
\begin{align*}
\KL(\bbP_\theta^n, \bbP_{\theta'}^n) = 0\,.
\end{align*}
By our choice, the optimal action for the environment determined by $\theta$ and $\theta'$ are different: $\pP(\theta) \cap \pP(\theta') = \emptyset$.
The Bretagnolle-Huber inequality (Lemma \ref{lem:bretagnolle-huber}) implies that
\begin{align}
\bbP_\theta^n(E_n(\theta)) + \bbP_{\theta'}^n(E_n(\theta')) 
\geq \bbP_\theta^n(E_n(\theta)) + \bbP_{\theta'}^n(1- E_n(\theta)) 
\geq \frac{1}{2} \exp\left(-\KL(\bbP_\theta^n, \bbP_{\theta'}^n)\right)
\geq \frac{1}{2}\,. 
\label{eq:bh}
\end{align}
Furthermore, there exists an $\epsilon > 0$ such that $R_n(\theta) \geq \epsilon n \bbP_\theta^n(E_n(\theta)) / 2$.
Hence, by \cref{eq:bh}, the regret is linear for either environment $\theta$ or $\theta'$. 
\end{proof}

\begin{theorem}
Suppose the game is globally observable, but not locally observable. Then there exists a game-dependent constant $c > 0$ and  $\theta \in \RR^d$ such that the regret is $R_n(\theta) \geq c n^{2/3}$.
\end{theorem}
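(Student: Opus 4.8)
The plan is a two–point (Le Cam) argument in the spirit of Theorem~\ref{thm:linear}, but with a perturbation of size $\Delta\asymp n^{-1/3}$ so that the information the learner can extract is balanced against the cost of extracting it. First I would apply Lemma~\ref{lem:local}: since the game is not locally observable there are neighbouring Pareto optimal actions $x,y$ and a parameter $\theta_0\in\relint(C_x\cap C_y)$ with $x-y\notin S$, where $S=\laspan\{A_z:z\in\nN_{xy}\}$. Since the cells are cones we may rescale so $\norm{\theta_0}=1/2$. Let $w$ be the minimum-norm element of $S^\perp$ with $\ip{x-y,w}=1$; it exists and $\norm{w}$ is a finite game-dependent constant because $x-y$ has a non-zero component in $S^\perp$. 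Because $\theta_0\in\relint(C_x\cap C_y)$ we have $\pP(\theta_0)=\nN_{xy}$ (the fact used in the proof of Lemma~\ref{lem:local-equivalence}), and $\xX\setminus\nN_{xy}\neq\emptyset$ (otherwise global observability forces $x-y\in\laspan\{A_z:z\in\xX\}=S$), so $\epsilon_0:=\min_{z\in\xX\setminus\nN_{xy}}\ip{x^*(\theta_0)-z,\theta_0}>0$. For $\Delta\in(0,\Delta_0]$, with $\Delta_0$ a small game-dependent constant chosen so that $\norm{\theta_0\pm\Delta_0 w}\le1$ and $\Delta_0\norm{w}\le\epsilon_0/2$, set $\theta_1=\theta_0+\Delta w$ and $\theta_2=\theta_0-\Delta w$.

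The argument then rests on three elementary estimates; write $r_i(a)=\ip{x^*(\theta_i)-a,\theta_i}\ge0$ for the instantaneous regret of action $a$ under $\theta_i$. (i) Bounding $\ip{x^*(\theta_1),\theta_1}\ge\ip{x,\theta_1}$ and $\ip{x^*(\theta_2),\theta_2}\ge\ip{y,\theta_2}$, using $\ip{x,\theta_0}=\ip{y,\theta_0}=\ip{x^*(\theta_0),\theta_0}$ and $\theta_1+\theta_2=2\theta_0$, yields for every action $a$ the \emph{cost-splitting} inequality $r_1(a)+r_2(a)\ge\Delta\ip{x-y,w}+2\ip{x^*(\theta_0)-a,\theta_0}\ge\Delta$. (ii) For $a\notin\nN_{xy}$, $r_1(a)\ge\ip{x-a,\theta_1}=\ip{x^*(\theta_0)-a,\theta_0}+\Delta\ip{x-a,w}\ge\epsilon_0-\Delta\norm{w}\ge\epsilon_0/2$ (using $\diam\xX\le1$), and likewise for $r_2$; hence $R_n(\theta_i)\ge\tfrac{\epsilon_0}{2}\EE_{\theta_i}[U]$ with $U=\sum_{t=1}^n\chf(x_t\notin\nN_{xy})$, i.e.\ $\EE_{\theta_i}[U]\le 2R_n(\theta_i)/\epsilon_0$. (iii) Since $w\perp S$ gives $A_z^\T w=0$ for $z\in\nN_{xy}$ and $\norm{A_z^\T w}\le\norm{w}$ otherwise (using $\norm{A_z}_2\le1$), Lemma~\ref{lem:kl} gives $\KL(\bbP^n_{\theta_1},\bbP^n_{\theta_2})=2\Delta^2\EE_{\theta_1}\big[\sum_t\norm{A_{x_t}^\T w}^2\big]\le2\Delta^2\norm{w}^2\EE_{\theta_1}[U]\le4\Delta^2\norm{w}^2R_n(\theta_1)/\epsilon_0$.

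To combine these, let $R=\max(R_n(\theta_1),R_n(\theta_2))$ and let $E$ be the trajectory event $\{\sum_{t=1}^n r_1(x_t)\ge n\Delta/2\}$; by (i), on $E^c$ one has $\sum_t r_2(x_t)\ge n\Delta/2$, so $R_n(\theta_1)\ge\tfrac{n\Delta}{2}\bbP^n_{\theta_1}(E)$ and $R_n(\theta_2)\ge\tfrac{n\Delta}{2}\bbP^n_{\theta_2}(E^c)$. The Bretagnolle--Huber inequality (Lemma~\ref{lem:bretagnolle-huber}) with (iii) then gives
\[ 2R\ \ge\ R_n(\theta_1)+R_n(\theta_2)\ \ge\ \frac{n\Delta}{4}\exp\!\left(-\frac{4\Delta^2\norm{w}^2 R}{\epsilon_0}\right). \]
Taking $\Delta=\min(\Delta_0,n^{-1/3})$: for $n\ge\Delta_0^{-3}$ this is $n^{-1/3}$, and either $R\ge\epsilon_0 n^{2/3}/(16\norm{w}^2)$, or the exponent is below $1/4$ so the exponential exceeds $3/4$ and $R\ge\tfrac{3}{32}n^{2/3}$; either way $R\ge c\,n^{2/3}$ for a game-dependent $c$. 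For the finitely many $n<\Delta_0^{-3}$ the same display with $\Delta=\Delta_0$ forces $R$ above a positive game-dependent constant, which is absorbed into $c$. Choosing $\theta$ to be whichever of $\theta_1,\theta_2$ attains the maximum gives $R_n(\theta)\ge c\,n^{2/3}$.

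The main obstacle is not the Le Cam machinery (which is routine) but the geometry underpinning Step~1: confirming $\pP(\theta_0)=\nN_{xy}$, so that $\epsilon_0>0$ and the uninformative actions $\nN_{xy}$ are exactly the ones that are cost-free at $\theta_0$, and that $x-y$ genuinely has a nonzero $S^\perp$-component so $w$ is well defined. Both follow from Lemma~\ref{lem:local} and the normal-fan facts in Lemmas~\ref{lem:neighbors}--\ref{lem:local-equivalence}. A pleasant feature is that the clean form of inequality (i) uses only $r_{\theta_0}(\cdot)\ge0$ and $\ip{x-y,w}=1$, so no separate analysis of degenerate actions in $\nN_{xy}$ is needed and $\theta_1,\theta_2$ need not be forced to have $x$, resp.\ $y$, as a unique optimal action.
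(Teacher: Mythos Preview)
Your argument is correct and follows essentially the same route as the paper: invoke Lemma~\ref{lem:local} to obtain a perturbation direction in $S^\perp$ (your $w$ is a scalar multiple of the paper's $v$, since both lie in $S^\perp$ and the paper's $v$ is precisely the projection of $x-y$ there), scale it by $n^{-1/3}$, and exploit that only actions outside $\nN_{xy}=\pP(\theta_0)$ carry information about the perturbation but incur constant instantaneous regret, before closing with Bretagnolle--Huber. The minor differences are bookkeeping: the paper does a case split on $\EE_{\theta_n}[T_n(\pP(\theta)^c)]\gtrless n^{2/3}$ with a count-based event, while you fold the same dichotomy into the self-referential bound $2R\ge (n\Delta/4)\exp(-c\Delta^2 R)$ via your cost-splitting inequality~(i); both are standard executions of the same Le~Cam construction.
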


\begin{proof}
By Lemma \ref{lem:local}, there exists a pair of neighboring Pareto optimal actions $x, y \in \ext(\conv(\xX))$ and $\theta \in \relint(C_x \cap C_y)$ such that 
$x - y \notin \laspan(\{A_z : z \in \pP(\theta)\}) = L$.
Let $x - y = u + v$, where $u \in L$ and $v \in L^\perp$.
Since $x - y \notin L$ it follows that 
\begin{align*}
\ip{x - y, v} = \ip{u + v, v} = \norm{v}^2 > 0\,.
\end{align*}
In particular, for suitably small $\epsilon > 0$ it holds that
$\theta + \epsilon v \in C_x$ and $\theta - \epsilon v \in C_y$.
Define
\begin{align*}
\theta_n = \theta + n^{-1/3} v \quad \text{and} \quad
\theta_n' = \theta - n^{-1/3} v
\end{align*}
and let assume $n$ is sufficiently large that $\theta_n \in C_x$ and $\theta'_n \in C_y$.
Next, decompose $V_n(\theta)$ as $V_n(\theta) = U_n(\theta) + W_n(\theta)$, where
\begin{align*}
U_n(\theta) = \EE_{\theta}\left[\sum_{t=1}^n \chf(x_t \in \pP(\theta)) A_{x_t} A_{x_t}^\top\right] \quad \text{and} \quad
W_n(\theta) = \EE_{\theta}\left[\sum_{t=1}^n \chf(x_t \notin \pP(\theta)) A_{x_t} A_{x_t}^\top\right]\,.
\end{align*}
Let $T_n(\yY) = \sum_{t=1}^n \chf (x_t \in \yY)$ be the number of times an action in $\yY \subset \xX$ is played. Notice, since $v \in L^\perp$, that
\begin{align*}
\frac{1}{2}\norm{\theta_n - \theta'_n}^2_{V_n(\theta_n)}
&= 2n^{-2/3}\norm{v}^2_{V_n(\theta_n)}
= 2n^{-2/3} \norm{v}^2_{W_n(\theta_n)}
\leq 2n^{-2/3}\EE_{\theta_n}[T_n(\pP(\theta)^c)] \norm{v}^2_G\,,
\end{align*}
where $G = \sum_{x \in \xX} A_x A_x^\top$.
Now, there exists a game-dependent constant $\epsilon > 0$ such that
\begin{align*}
R_n(\theta_n) \geq \epsilon \EE_{\theta_n}[T_n(\pP(\theta)^c)]\,.
\end{align*}
Hence if $\EE_{\theta_n}[T_n(\pP(\theta)^c)] \geq n^{2/3}$, then $R_n(\theta_n) \geq \epsilon n^{2/3}$. Assume that $\EE_{\theta_n}[T_n(\pP(\theta)^c)] \leq n^{2/3}$. By the Bretagnolle-Huber inequality (Lemma \ref{lem:bretagnolle-huber}), there exists another game-dependent constant $\epsilon' > 0$ such that
\begin{align*}
R_n(\theta_n) + R_n(\theta'_n) \geq n^{2/3} \epsilon' \exp\left(-2n^{-2/3} \EE_{\theta_n}[T_n(\pP(\theta)^c)] \norm{v}^2_G\right) \geq  n^{2/3} \epsilon' \exp(-2 \norm{v}^2_G)\,.
\end{align*}
Combining the last two displays completes the proof.
\end{proof}

\begin{theorem}
Suppose the game is locally observable, then there exists a constant $c > 0$ such that for all $n$ there is a $\theta$ for which $R_n(\theta) \geq cn^{1/2}$.
\end{theorem}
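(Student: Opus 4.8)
The plan is to prove this by Le Cam's two-point method, in the same spirit as the two preceding theorems but with an $n^{-1/2}$-sized perturbation; the conclusion is of course only meaningful when the game has more than one Pareto optimal action, which we assume throughout (the argument does not otherwise use local observability). Since the neighbourhood graph is connected, fix a pair $x,y$ of neighbouring Pareto optimal actions and a parameter $\theta \in \relint(C_x \cap C_y)$, and set $v = x - y$. Because $x$ and $y$ are both optimal for every $\phi \in C_x \cap C_y$, the face $C_x \cap C_y$ lies in the hyperplane $\{x-y\}^\perp$; being a facet of $C_x$, this hyperplane supports $C_x$ with inward normal $v$ (respectively $-v$ for $C_y$). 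Consequently, for all sufficiently small $\epsilon > 0$ we have $\theta + \epsilon v \in \interior(C_x)$ and $\theta - \epsilon v \in \interior(C_y)$, whence $\pP(\theta + \epsilon v) = \{x\}$ and $\pP(\theta - \epsilon v) = \{y\}$ are disjoint. Fix such an $\epsilon_0 > 0$, put $\epsilon_n = \min(\epsilon_0, n^{-1/2})$, and define the two test environments $\theta_n = \theta + \epsilon_n v$ and $\theta_n' = \theta - \epsilon_n v$.

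For the information bound, Lemma~\ref{lem:kl} gives
\begin{align*}
\KL(\bbP_{\theta_n}^n, \bbP_{\theta_n'}^n) = \tfrac12 \norm{\theta_n - \theta_n'}_{V_n(\theta_n)}^2 = 2\epsilon_n^2 \norm{v}_{V_n(\theta_n)}^2 = 2\epsilon_n^2 \EE_{\theta_n}\Big[\textstyle\sum_{t=1}^n \norm{A_{x_t}^\T v}^2\Big] \leq 2 \epsilon_n^2 n \norm{v}^2 \leq 2\,,
\end{align*}
using $\norm{A_x}_2 \leq 1$, $\norm{v} = \norm{x-y} \leq \diam(\xX) \leq 1$, and $\epsilon_n^2 n \leq 1$. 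Since $\pP(\theta_n)\cap\pP(\theta_n')=\emptyset$ yields $E_n(\theta_n') \geq 1 - E_n(\theta_n)$, the Bretagnolle--Huber inequality (Lemma~\ref{lem:bretagnolle-huber}) gives
\begin{align*}
\bbP_{\theta_n}^n(E_n(\theta_n)) + \bbP_{\theta_n'}^n(E_n(\theta_n')) \geq \bbP_{\theta_n}^n(E_n(\theta_n)) + \bbP_{\theta_n'}^n(1 - E_n(\theta_n)) \geq \tfrac12 \exp(-\KL(\bbP_{\theta_n}^n,\bbP_{\theta_n'}^n)) \geq \tfrac{1}{2}e^{-2}\,.
\end{align*}

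The next step is to turn the event $E_n$ into regret. On $E_n(\theta_n)$ the learner plays an action other than $x$ — the unique optimal action for $\theta_n$ — on at least $n/2$ rounds, each incurring regret at least $\Delta_{\min}(\theta_n) := \min_{z \in \xX \setminus\{x\}} \ip{x-z,\theta_n}$. A short case distinction shows $\Delta_{\min}(\theta_n) \geq c_1 \epsilon_n$ for a game-dependent $c_1>0$: an action $z$ with $\ip{x-z,\theta}>0$ keeps a constant positive gap for $\epsilon_n \leq \epsilon_0$ (shrinking $\epsilon_0$ if needed), while an action $z \in \pP(\theta)\setminus\{x\}$ has gap exactly $\epsilon_n \ip{x-z,x-y}$, which is strictly positive because $x$ is the \emph{unique} maximiser for $\theta_n$. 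Hence $R_n(\theta_n) \geq \tfrac{n}{2} c_1 \epsilon_n\, \bbP_{\theta_n}^n(E_n(\theta_n))$, and symmetrically $R_n(\theta_n') \geq \tfrac{n}{2}c_1'\epsilon_n\,\bbP_{\theta_n'}^n(E_n(\theta_n'))$. Adding these, using the previous display, and noting $n\epsilon_n = \min(\epsilon_0 n, \sqrt n) \geq \min(\epsilon_0,1)\sqrt n$ for all $n \geq 1$, one obtains $R_n(\theta_n) + R_n(\theta_n') \geq c'\sqrt n$ for a game-dependent $c'>0$, so one of the two environments witnesses $R_n \geq \tfrac{c'}{2}\sqrt n$; this proves the claim with $c = c'/2$.

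The main obstacle is precisely this gap lower bound $\Delta_{\min}(\theta_n) \gtrsim \epsilon_n$: one must ensure that no other action — in particular a degenerate action sitting on the face $C_x\cap C_y$ — becomes optimal, or approaches optimality faster than at rate $\epsilon_n$, under the perturbation $\theta \mapsto \theta \pm \epsilon_n v$. Choosing $\theta$ in the \emph{relative interior} of the facet and perturbing along $v = x-y$ exactly (rather than an arbitrary normal) is what guarantees $\pP(\theta_n) = \{x\}$ and forces $\ip{x-z,x-y}>0$ for every $z\in\pP(\theta)\setminus\{x\}$; everything else is the standard two-point bookkeeping.
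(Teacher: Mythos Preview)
Your proof is correct and follows the same core scheme as the paper: Le Cam's two-point method with an $n^{-1/2}$-sized separation, Lemma~\ref{lem:kl} for the KL bound, and Bretagnolle--Huber to conclude. The geometric reasoning you give for $\pP(\theta_n)=\{x\}$ and for the gap lower bound $\Delta_{\min}(\theta_n)\gtrsim\epsilon_n$ is sound; since $\theta\in\relint(C_x\cap C_y)$ forces $\pP(\theta)=\xX\cap[x,y]$, every $z\in\pP(\theta)\setminus\{x\}$ is of the form $\lambda x+(1-\lambda)y$ with $\lambda<1$, whence $\ip{x-z,v}=(1-\lambda)\norm{v}^2>0$, exactly as you need.

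The only noteworthy difference is the choice of the two environments. The paper bypasses all the facet geometry by taking $\theta_n=n^{-1/2}\theta$ and $\theta_n'=-n^{-1/2}\theta$ for a fixed $\theta$. This makes the gap analysis a one-liner, since every suboptimality gap at $\theta_n$ is exactly $n^{-1/2}$ times the corresponding gap at $\theta$, and $\pP(\theta)\cap\pP(-\theta)=\emptyset$ whenever $\ip{\cdot,\theta}$ is non-constant on $\xX$. Your construction---perturbing a point in $\relint(C_x\cap C_y)$ along $v=x-y$---is the natural local version of the same idea and is closer in spirit to the $\Omega(n^{2/3})$ lower-bound proof, but here the extra geometry is not needed: neither argument actually uses local observability, only the existence of at least two Pareto optimal actions.
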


\begin{proof}
Let $\theta \in \RR^d$ be arbitrary and $\theta_n = n^{-1/2} \theta$ and $\theta_n' = -\theta_n$.
By the assumption that $\xX$ spans $\RR^d$, it follows that $\pP(\theta_n) \cap \pP(\theta_n') = \emptyset$.
By Lemma \ref{lem:kl},
\begin{align*}
\KL(\bbP_{\theta_n}^n, \bbP_{\theta_n'}^n) = \frac{1}{2} \norm{\theta_n' - \theta_n}^2_{V_n(\theta_n)} 
= \frac{1}{2} \norm{\theta}^2_{V_n(\theta_n)/n}\,.
\end{align*}
Clearly, $G = \sum_{x \in \xX} A_x A_x^\top \succ V_n(\theta_n) / n$.
Hence, there exists a constant $c > 0$ such that for all $n \geq 1$, 
\begin{align*}
\KL(\bbP_{\theta_n}^n, \bbP_{\theta'_n}^n) \leq c\,.
\end{align*}
Then, using the same argument as in the proof of \cref{thm:linear}, we have
\begin{align*}
\bbP_{\theta_n}^n(E_n(\theta_n)) + \bbP_{\theta_n'}^n(E_n(\theta_n')) 
\geq \bbP_{\theta_n}^n(E_n(\theta_n)) + \bbP_{\theta_n'}^n(1-  E_n(\theta_n)) 
\geq \frac{1}{2} \exp(-c)\,. 
\end{align*}
The result follows because there exists an $\epsilon > 0$ such that $R_n(\theta) \geq \bbP_{\theta}^n(E_n(\theta)) \epsilon \sqrt{n} / 2$.
\end{proof}

\end{document}